\titleformat*{\subparagraph}{\itshape}
\newtheorem{thm}{Theorem}[section]
\newtheorem{proposition}[thm]{Proposition}
\numberwithin{equation}{section}
\newcommand\blfootnote[1]{%
  \begingroup
  \renewcommand\thefootnote{}\footnote{#1}%
  \addtocounter{footnote}{-1}%
  \endgroup
}
\def\algo{{\sc{DCPS}}}
\def\algoname{{\sc{Divide-and-Conquer Posterior Sampler}}}
\def\tauks{L}
\def\PGDM{$\Pi${\sc{GDM}}}
\def\reddiff{{\sc{RedDiff}}}
\def\DPS{{\sc{DPS}}}
\def\DiffPIR{{\sc{DiffPIR}}}
\def\DDNM{{\sc{DDNM}}}
\def\SDA{{\sc{SDA}}}
\def\FPS{{\sc{FPS} }}
\def\ffhq{\texttt{FFHQ}}
\def\imagenet{\texttt{ImageNet}}
\def\eqdef{\vcentcolon=}
\def\mcgdiff{{\sc{MCGDiff}}}
\def\target{\pi}
\newcommandx{\normconst}[1][1=]{\mathcal{Z}^{#1}}
\def\data{Y}
\def\datadistr{p_{\tiny{\mbox{data}}}}
\def\rset{\mathbb{R}}
\def\nset{\mathbb{N}}
\def\rmd{\mathrm{d}}
\def\latent{X}
\def\eqsp{}
\def\pE{\mathbb{E}}
\def\gauss{\mathcal{N}}
\def\N{\mathbb{N}}
\def\param{\theta}
\def\dimobs{{d_\obs}}
\def\1{\mathbbm{1}}
\newcommandx{\generator}[1][1=]{G^{\genparam _{#1}}}
\def\genparam{\varphi}
\newcommandx{\bwreparam}[1][1=]{\mathcal{G}^{\genparam_{#1}}}
\def\1{\mathbbm{1}}
\def\pE{\mathbb{E}}
\def\N{\mathbb{N}}
\def\gauss{\mathcal{N}}
\def\rmd{\mathrm{d}}
\def\rhs{r.h.s.}
\def\zero{0}
\def\stdobs{\sigma_\obs}
\def\last{n}
\def\vmu{\hat{\mu}}
\def\vstd{\hat{\sigma}}
\def\vlogstd{\hat{\upsilon}}
\newcommand{\intset}[2]{\llbracket #1, #2 \rrbracket}
\newcommand{\lklhd}[2]{\ifthenelse{\equal{#2}{}}{g^\obs _{#1}}{g^\obs _{#1}(#2)}}
\newcommand{\kldivergence}[2]{\mathsf{KL}(#1 \parallel #2)}
\newcommand{\Qdistr}[4]{\ifthenelse{\equal{#4}{}}{q^{#1} _{#2}}{q^{#1} _{#2}(#4 |#3)}}
\newcommand{\mudistr}[4]{\ifthenelse{\equal{#4}{}}{\mu^{#1} _{#2}}{\mu^{#1} _{#2}(#4 |#3)}}
\newcommand{\Qpost}[3]{\ifthenelse{\equal{#3}{}}{q^{#1} _{\latent|\data}}{q^{#1} _{\latent|\data}(#3 |#2)}}
\newcommandx{\predx}[2][2=]{\hat{x}^{#2} _{0|#1}}
\newcommandx{\predxs}[3][2=0,3=\param]{\hat{x}^{#3^{\star}} _{#2|#1}}
\newcommandx{\prednoise}[2][2=\param]{\smash{\hat{\epsilon}^{#2} _{#1}}}
\newcommand{\transition}[1]{\ifthenelse{\equal{#1}{0}}{M_0}{M_{#1}}}
\newcommand{\asymptvarest}[2]{\ifthenelse{\equal{#2}{}}{\ensuremath{\Gamma _{0: #1 \mid #1}}}{\ensuremath{\Gamma^{#2}_{0: #1 \mid #1}}}}
\newcommand{\asymptvarffbs}[2]{\ifthenelse{\equal{#2}{}}{\ensuremath{\Gamma^{\textrm{FFBS}} _{0: #1 \mid #1}}}{\ensuremath{\Gamma^{#2, \textrm{FFBS}}_{0: #1 \mid #1}}}}
\newcommand{\transitiondens}[1]{\ifthenelse{\equal{#1}{0}}{m_0}{m_{#1}}}
\newcommand{\bounded}[1]{\ifthenelse{\equal{#1}{}}{\ensuremath{\mathsf{F} (\mathcal{X})}}{\ensuremath{\mathsf{F} (\mathcal{X}^{\otimes {#1}})}}}
\newcommand{\measurable}[1]{\ifthenelse{\equal{#1}{}}{\ensuremath{\mathsf{F}(\mathcal{X})}}{\ensuremath{\mathsf{F}(\mathcal{X}^{\otimes {#1}})}}}
\newcommand{\As}[2]{\ifthenelse{\equal{#2}{}}{(\mathbf{A}{\mathbf{\ref*{#1})}}}{(\mathbf{A}\mathbf{\ref*{#1}:\ref*{#2}})}}
\newcommand{\muDDIM}[1]{\mu_{#1}}
\def\param{\theta}
\def\gauss{\Normal}
\def\rmd{\operatorname{d}\hspace{-2pt}}
\def\Id{\operatorname{Id}}
\def\rset{\mathbb{R}}
\def\nset{\mathbb{N}}
\newcommandx{\Exp}[2][1=]{\ensuremath{\pE_{#1}\left[ #2\right]}}
\newcommandx{\marginal}[2][1=]{\ensuremath{\xi^{#2}_{#1}}}
\newcommandx{\margindensm}[2]{m_{#1}^{#2}}
\newcommandx{\margindensmu}[4]{m_{#1}^{#2}(#3|#4)}
\newcommandx{\margindens}[4][4=]{\ifthenelse{\equal{#3}{}}{q_{#1}^{#4}(#2)}{q_{#1,#3}^{#4}(#2)}}
\newcommandx{\margindensw}[3][3=]{\ifthenelse{\equal{#3}{}}{q_{#1}^{#2}}{q_{#1,#3}^{#2}}}
\def\rmd{\mathrm{d}}
\def\eqsp{\,}
\def\fwdtransfo{T}
\newcommandx{\fwdtransfoparam}[2]{\fwdtransfo_{#1,#2}}
\def\eg{\text{e.g.}}
\def\wrt{w.r.t.}
\newcommandx{\Vnorm}[2][1=V]{\| #2 \|_{#1}}
\newcommandx{\VnormEq}[2][1=V]{\ensuremath{\left\| #2 \right\|_{#1}}}
\newcommandx\probaMarkovTilde[2][2=]
\def\eqsp{\;}
\renewcommand{\iint}[2]{\llbracket #1, #2 \rrbracket}
\newcommandx\sequence[3][2=,3=]
\newcommandx\sequenceD[3][2=,3=]
\newcommandx{\sequencen}[2][2=n\in\N]{\ensuremath{\{ #1_n, \eqsp #2 \}}}
\newcommandx\sequenceDouble[4][3=,4=]
\newcommandx{\sequencenDouble}[3][3=n\in\N]{\ensuremath{\{ (#1_{n},#2_{n}), \eqsp #3 \}}}
\def\iid{i.i.d.}
\newcommand{\opnorm}[1]{{\left\vert\kern-0.25ex\left\vert\kern-0.25ex\left\vert #1
    \right\vert\kern-0.25ex\right\vert\kern-0.25ex\right\vert}}
\def\Id{\operatorname{Id}}
\newcommandx{\CPE}[3][1=]{{\mathbb E}_{#1}\left[\left. #2 \middle \vert #3 \right. \right]} 
\newcommandx{\CPVar}[3][1=]{\mathrm{Var}^{#3}_{#1}\left\{ #2 \right\}}
\newcommand{\CPP}[3][]
{\ifthenelse{\equal{#1}{}}{{\mathbb P}\left(\left. #2 \, \right| #3 \right)}{{\mathbb P}_{#1}\left(\left. #2 \, \right | #3 \right)}}
\newcommandx{\osc}[2][1=]{\mathrm{osc}_{#1}(#2)}
\def\Id{\operatorname{Id}}
\def\target{\pi}
\newcommand\coupling[2]{\Gamma(\mu,\nu)}
\def\rset{\mathbb{R}}
\def\nset{\mathbb{N}}
\def\rmd{\mathrm{d}}
\def\rme{\mathrm{e}}
\newcommandx{\estpiNaive}[3][1=g,2=N,3=f]{\hat{\pi}_{#1}^{#2}(#3)}
\newcommandx{\norm}[2][1=]{\ifthenelse{\equal{#1}{}}{\left\Vert #2 \right\Vert}{\left\Vert #2 \right\Vert^{#1}}}
\newcommandx{\normLigne}[2][1=]{\ifthenelse{\equal{#1}{}}{\Vert #2 \Vert}{\Vert #2 \Vert^{#1}}}
\def\txts{\textstyle}
\def\target{\pi}
\def\zero{\mathbf{0}}
\def\zero{0}
\def\N{\mathbb{N}}
\def\rmd{\mathrm{d}}
\def\param{\theta}
\def\pE{\mathbb{E}}
\def\gauss{\mathcal{N}}
\def\1{\mathbbm{1}}
\def\pow{\varepsilon}
\newcommand{\esssup}[1]{\mathrm{ess}\sup_{\lambda}}
\newcommand{\sqmapping}[2]{
        \ifthenelse{\equal{#1}{}}
        {
            \mathcal{R}^{\pow,2} _{ #2}
            }
        {
            \mathcal{R}^{\pow_{#1}, 2} _ #2
            }}
\newcommand{\mapping}[2]{
        \ifthenelse{\equal{#1}{}}
        {
            \Psi^{\pow} _{ #2}
            }
        {
            \Psi^{\pow_{#1}} _{#2}
            }}
\newcommandx{\denoiser}[2][2=\param]{D^{#2} _{#1}}
\newcommandx{\score}[2][2=]{\hat{s}^{#2} _{#1}}
\newcommandx{\scores}[2][2=\param]{s^{#2^{\star}} _{#1}}
\newcommand{\piterate}[2]{\ifthenelse{\equal{#2}{}}{\hat\mu_{\param, #1}}{\hat\mu_{\param#2, #1}}}
\newcommand{\A}[2]{\ifthenelse{\equal{#2}{}}{(\textbf{A}{\textbf{\ref{#1})}}}{(\textbf{A}\textbf{\ref{#1}-\ref{#2}})}}
\def\obs{y}
\def\rmd{\mathrm{d}}
\def\pE{\mathbb{E}}
\def\eqsp{\,}
\def\1{\mathbbm{1}}
\def\dimx{{d_{x}}}
\def\rset{\mathbb{R}}
\def\nset{\mathbb{N}}
\newcommand{\m}[1]{\operatorname{#1}}
\def\gauss{\mathcal{N}}
\def\param{\theta}
\def\normpdf{\mathrm{N}}
\newcommandx{\acp}[2][2=]{\ifthenelse{\equal{#2}{}}{\alpha_{#1}}{\alpha_{#1:#2}}}
\newcommandx{\xpred}[2][2=]{\boldsymbol{\chi}^{#2} _{#1}}
\newcommandx{\bottomxpred}[2][2=]{\underline{\boldsymbol{\chi}}^{#2} _{#1}}
\newcommandx{\topxpred}[2][2=]{\overline{\boldsymbol{\chi}}^{#2} _{#1}}
\newcommand{\epsprop}[2]{\ifthenelse{\equal{#1}{}}{\mathbf{e}^{#2}}{\mathbf{e}^{#2} (#1)}}
\newcommand{\realrevbridge}[3]{\ifthenelse{\equal{#2}{}}{\mathsf{q} _{#1}}{\mathsf{q} _{#1}(#3|#2)}}
\newcommand{\revbridge}[3]{\ifthenelse{\equal{#2}{}}{\smash{{q}}^{\sigma} _{#1}}{\smash{{q}}^{\sigma} _{#1}(#3|#2)}}
\newcommand{\idxrevbridge}[4]{\ifthenelse{\equal{#2}{}}{\smash{{q}}^{\sigma, #4} _{#1}}{\smash{{q}}^{\sigma, #4} _{#1}(#3|#2)}}
\newcommand{\trevbridge}[3]{\ifthenelse{\equal{#2}{}}{\smash{\overline{q}}^{\sigma} _{#1}}{\smash{\overline{q}}^{\sigma} _{#1}(#3|#2)}}
\newcommand{\brevbridge}[3]{\ifthenelse{\equal{#2}{}}{\underline{q}^{\sigma} _{#1}}{\underline{q}^{\sigma} _{#1}(#3|#2)}}
\newcommand{\trealrevbridge}[3]{\ifthenelse{\equal{#2}{}}{\overline{\mathsf{q}}_{#1}}{\overline{\mathsf{q}} _{#1}(#3|#2)}}
\newcommand{\bridge}[4]{{\ifthenelse{\equal{#3}{}}{p^{#2} _{#1}}{p^{#2} _{#1}(#3, #4)}}}
\newcommand{\outrans}[3]{\ifthenelse{\equal{#2}{}}{\overrightarrow{m}_{#1}}{\overrightarrow{m}_{#1}(#2, #3)}}
\newcommandx{\bwtrans}[4][4=]{\ifthenelse{\equal{#3}{}}{q^{#4} _{#1}}{q^{#4} _{#1}(#3|#2)}}
\newcommandx{\rbwker}[4][4=]{\ifthenelse{\equal{#3}{}}{\smash{q}^{#4} _{#1}}{\smash{q}^{#4} _{#1}(#3|#2)}}
\newcommandx{\pbwker}[4][4=]{\ifthenelse{\equal{#3}{}}{\hat{p}^{#4} _{#1}}{\hat{p}^{#4} _{#1}(#3|#2)}}
\newcommandx{\abwker}[4][4=]{\ifthenelse{\equal{#3}{}}{\smash{p}^{#4} _{#1}}{\smash{p}^{#4} _{#1}(#3|#2)}}
\newcommandx{\pibwker}[4][4=]{\ifthenelse{\equal{#3}{}}{\smash{\pi}^{#4} _{#1}}{\smash{\pi}^{#4} _{#1}(#3|#2)}}
\newcommandx{\hbwker}[4][4=]{\ifthenelse{\equal{#3}{}}{\smash{\hat{p}}^{#4} _{#1}}{\smash{\hat{p}}^{#4} _{#1}(#3|#2)}}
\newcommandx{\cabwker}[4][4=]{\ifthenelse{\equal{#3}{}}{\smash{p}^{#4} _{#1}}{\smash{p}^{#4} _{#1}(#3|#2)}}
\newcommandx{\aabwker}[4][4=]{\ifthenelse{\equal{#3}{}}{\smash{\tilde{p}}^{#4} _{#1}}{\smash{\tilde{p}}^{#4} _{#1}(#3|#2)}}
\newcommandx{\vbwker}[4][4=\vparam]{\ifthenelse{\equal{#3}{}}{\lambda^{#4} _{#1}}{\lambda^{#4} _{#1}(#3|#2)}}
\newcommandx{\cbwtrans}[4][4=]{\ifthenelse{\equal{#3}{}}{\smash{\overset{{}_{\leftarrow}}{p}}^{#4} _{#1}}{\smash{\overset{{}_{\leftarrow}}{p}}^{#4} _{#1}(#3|#2)}}
\newcommandx{\Pbwtrans}[4][4=]{\ifthenelse{\equal{#3}{}}{\smash{\overset{{}_{\leftarrow}}{P}}^{#4} _{#1}}{\smash{\overset{{}_{\leftarrow}}{P}}^{#4} _{#1}(#3|#2)}}
\newcommandx{\condbw}[4][4=]{\ifthenelse{\equal{#3}{}}{\smash{\overset{{}_{\leftarrow}}{q}}^{#4} _{#1}}{\smash{\overset{{}_{\leftarrow}}{q}}^{#4} _{#1}(#3|#2)}}
\newcommandx{\qbwtrans}[4][4=]{\ifthenelse{\equal{#3}{}}{\smash{\overset{{}_{\leftarrow}}{q}}^{#4} _{#1}}{\smash{\overset{{}_{\leftarrow}}{q}}^{#4} _{#1}(#3|#2)}}
\newcommandx{\fwtrans}[4][4=]{\ifthenelse{\equal{#2}{}}{p_{#1}}{q _{#1}(#3|#2)}}
\newcommand{\idxfwtrans}[4]{\ifthenelse{\equal{#2}{}}{m^{#4} _{#1}}{m^{#4} _{#1}(#3|#2)}}
\newcommand{\bfwtrans}[3]{\ifthenelse{\equal{#2}{}}{\smash{\underline{m}} _{#1}}{\smash{\underline{m}} _{#1}(#3|#2)}}
\newcommand{\tfwtrans}[3]{\ifthenelse{\equal{#2}{}}{\smash{\overline{m}} _{#1}}{\smash{\overline{m}} _{#1}(#3|#2)}}
\newcommand{\idxtfwtrans}[4]{\ifthenelse{\equal{#2}{}}{\smash{\overline{m}}^{\sigma_\delta, #4} _{#1}}{\smash{\overline{m}}^{\sigma_\delta, #4} _{#1}(#3|#2)}}
\newcommand{\idxbwtrans}[4]{\ifthenelse{\equal{#2}{}}{p^{#4} _{#1}}{p^{#4} _{#1}(#3|#2)}}
\newcommand{\idxtbwtrans}[4]{\ifthenelse{\equal{#2}{}}{\smash{\overline{p}}^{#4} _{#1}}{\smash{\overline{p}}^{#4} _{#1}(#3|#2)}}
\newcommandx{\cbwmarg}[3][3=]{\ifthenelse{\equal{#2}{}}{\pi^{#3} _{#1}}{\pi^{#3} _{#1}(#2)}}
\newcommandx{\bwmarg}[3][3=]{\ifthenelse{\equal{#2}{}}{p^{#3} _{#1}}{p^{#3}_{#1}(#2)}}
\newcommandx{\ibwmarg}[3]{\ifthenelse{\equal{#2}{}}{p^{#3} _{#1}}{p^{#3}_{#1}(#2)}}
\newcommandx{\iabwmarg}[3][3=]{\ifthenelse{\equal{#2}{}}{\tilde{p}^{#3} _{#1}}{\tilde{p}^{#3}_{#1}(#2)}}
\newcommand{\bwoptmarg}[3]{\ifthenelse{\equal{#3}{}}{\smash{\overset{{}_{\leftarrow}}{{\mathfrak{m}}}}^{#2} _{#1}}{\smash{\overset{{}_{\leftarrow}}{{\mathfrak{m}}}}^{#2} _{#1}(#3)}}
\newcommandx{\fwmarg}[3][3=]{\ifthenelse{\equal{#2}{}}{q^{#3} _{#1}}{q^{#3} _{#1}(#2)}}
\newcommandx{\cfwmarg}[3][3=]{\ifthenelse{\equal{#2}{}}{q^{#3} _{#1}}{q^{#3} _{#1}(#2)}}
\newcommand{\posterior}[2]{\ifthenelse{\equal{#2}{}}{p^\obs _{#1}}{p^\\stdobss _{#1}(#2)}}
\newcommand{\initdistr}[3]{\ifthenelse{\equal{#2}{}}{\chi_{#1}}{\chi_{#1}(#2, #3)}}
\newcommand{\bwopt}[4]{\ifthenelse{\equal{#3}{}}{p ^{#2} _{#1}}{p^{#2} _{#1}(#4|#3)}}
\newcommand{\bbwopt}[4]{\ifthenelse{\equal{#3}{}}{p ^{#2} _{#1}}{\underline{p}^{#2} _{#1}(#4|#3)}}
\newcommand{\tbwopt}[4]{\ifthenelse{\equal{#3}{}}{p ^{#2} _{#1}}{\overline{p}^{#2} _{#1}(#4|#3)}}
\newcommand{\fwopt}[4]{\ifthenelse{\equal{#3}{}}{\overset{{}_{\rightarrow}}{m} ^{#2} _{#1}}{\smash{\overset{{}_{\rightarrow}}{m}}^{#2} _{#1}(#4|#3)}}
\newcommand{\ufilter}[3]{{\ifthenelse{\equal{#3}{}}{\eta^{#2} _{#1}}{\eta^{#2} _{#1}(#3)}}}
\newcommand{\filter}[3]{{\ifthenelse{\equal{#3}{}}{\phi^{#2} _{#1}}{\phi^{#2} _{#1}(#3)}}}
\newcommand{\noisyfilter}[3]{{\ifthenelse{\equal{#3}{}}{\phi^{#2} _{#1}}{\phi^{#2} _{#1}(#3)}}}
\newcommand{\noiselessfilter}[3]{{\ifthenelse{\equal{#3}{}}{\phi^{#2} _{#1}}{\phi _{#1}(#3 | #2)}}}
\newcommand{\bfilter}[3]{{\ifthenelse{\equal{#3}{}}{\Phi^{#2} _{#1}}{\Phi^{#2} _{#1}(#3)}}}
\newcommand{\pot}[3]{\ifthenelse{\equal{#3}{}}{\smash{g^{#2} _{#1}}}{\smash{g^{#2} _{#1}}(#3)}}
\newcommand{\ipot}[3]{\ifthenelse{\equal{#3}{}}{\bar{g}^{#2} _{#1}}{\bar{g}^{#2} _{#1}(#3)}}
\newcommand{\iapot}[3]{\ifthenelse{\equal{#3}{}}{\hat{g}^{#2} _{#1}}{\hat{g}^{#2} _{#1}(#3)}}
\newcommand{\dubpot}[3]{\ifthenelse{\equal{#3}{}}{G^{#2} _{#1}}{G^{#2} _{#1}(#3)}}
\newcommand{\noisytrans}[3]{\ifthenelse{\equal{#2}{}}{\mathsf{P}_{#1}}{\mathsf{P}_{#1}(#3|#2)}}
\def\ddrm{DDRM}
\def\Id{I}
\newcommand\projidx[3]{\ifthenelse{\equal{#2}{}}{#1^{\setminus #3}}{#1^{\setminus #3}_{#2}}}
\newcommand{\tstep}[1]{{k_{#1}}}
\def\vparam{\varphi}
\def\bfA{A}
\renewcommand{\citep}{\cite}
\renewcommand{\citet}{\cite}
\title{Divide-and-Conquer Posterior Sampling for Denoising Diffusion Priors}
\author{Yazid Janati$^{*, 1}$
\quad
Badr Moufad$^{*, 1}$\\
\textbf{
Alain Durmus$^1$ \quad
Eric Moulines$^{1, 3}$
\quad
Jimmy Olsson$^2$} \\
$^1$ CMAP, Ecole polytechnique \quad $^2$ KTH Royal Institute of Technology \quad $^3$ MBZUAI}
\begin{document}
\maketitle

\begin{abstract}
Recent advancements in solving Bayesian inverse problems have spotlighted denoising diffusion models (DDMs) as effective priors.
Although these have great potential, DDM priors yield complex posterior distributions that are challenging to sample.
Existing approaches to posterior sampling in this context address this problem either by retraining model-specific components, leading to stiff and cumbersome methods, or by introducing approximations with uncontrolled errors that affect the accuracy of the produced samples.
We present an innovative framework, divide-and-conquer posterior sampling, which leverages the inherent structure of DDMs to construct a sequence of intermediate posteriors that guide the produced samples to the target posterior.
Our method significantly reduces the approximation error associated with current techniques without the need for retraining.
We demonstrate the versatility and effectiveness of our approach for a wide range of Bayesian inverse problems.
The code is available at \url{https://github.com/Badr-MOUFAD/dcps}
\end{abstract}

\blfootnote{* Equal contribution}
\blfootnote{Corresponding authors: \texttt{$\{$yazid.janati,badr.moufad$\}$@polytechnique.edu}}
 \section{Introduction}
 Many problems in machine learning can be formulated as inverse problems, such as superresolution, deblurring, and inpainting, to name but a few.
They all have the same goal, namely to recover a signal of interest from an indirect observation.
One line of research addresses these problems through the lens of the Bayesian framework by specifying two components: a prior distribution, which embodies the specification of the signal, and a likelihood that describes the law of the observation conditionally on the signal.
Once these elements are specified, the inverse problem is solved by sampling from the posterior distribution, which, after including the observation, contains all available information about the signal and thus about its uncertainty as well
\citep{dashti2017bayesian}.
The importance of the specification of the prior in solving Bayesian ill-posed inverse problems is paramount.
In the last decade, the success of priors based on deep generative models has fundamentally changed the field of linear inverse problems \citep{romano2017little,ulyanov2018deep,guo2019agem,pan2021exploiting,kawar2022denoising}.
Recently, denoising diffusion probabilistic models (DDMs) have received special attention.
Thanks to their ability to learn complex and multimodal data distributions, DDM represent the state-of-the-art in many generative modeling tasks, \emph{e.g.} image generation \citep{sohl2015deep,ho2020denoising,song2019generative,song2021score,dhariwal2021diffusion,song2021denoising,song2021maximum}, super-resolution \citep{saharia2022image,batzolis2021conditional}, and inpainting \citep{sohl2015deep,chung2022come,jing2022subspace}.

Popular methods to sample from posterior distribution include Markov chain Monte Carlo (MCMC) and variational inference; see \citep{stuart2010inverse,calvetti2018inverse} and the references therein.
These methods are iterative schemes that require an explicit procedure to evaluate pointwise the prior distribution and often its (Stein) score function \citep{hyvarinen2007some} in order to compute acceptance ratios and construct efficient proposals.
While sampling from the DDM priors is straightforward, posterior sampling is usually challenging since the intractability of the posterior density and its score make them computationally prohibitive and thus invalidate all conventional simulation methods.
Although approximations exist, their associated iterative sampling schemes can be computationally intensive and exhibit high sensitivity to the choice of hyperparameters; see \emph{e.g.} \citep{kawar2022denoising}.
This paper proposes the \algoname\ (\algo), a novel approach to posterior sampling in Bayesian inverse problems with DDM priors. Thanks to the Markov property of the data-generating backward diffusion, the posterior can be expressed as the marginal distribution of a Feynman--Kac (FK) path measure \citep{del2004feynman}, whose length corresponds to the number of diffusion steps and whose user-defined potentials serve to bias the dynamics of the data-generating backward diffusion to align with the likelihood of the observation.
Besides, for a given choice of potentials, the FK path law becomes Markovian, making it possible to express the posterior as the marginal of a time-reversed inhomogeneous Markov chain.

This approach is tempting, yet, the backward Markov decomposition remains difficult to apply in practice as these specific potential functions are difficult to approximate, especially when the number of diffusion steps is large.
We tackle this problem with a divide-and-conquer approach.
More precisely, instead of targeting the given posterior by a single simulation run through the full backward decomposition, our proposed scheme targets backward a sequence $(\cbwmarg{\tstep{\ell}}{})_{\ell=0}^L$ of distributions along the path measure leading to the target posterior distribution (\cref{sec:DCPS}).
These distributions are induced by a sequence of increasingly complex potentials and converge to the target distribution.
Starting with a sample from $\cbwmarg{\tstep{\ell+1}}{}$, a draw from $\cbwmarg{\tstep{\ell}}{}$ is formed by a combination of Langevin iterations and the simulation of an inhomogeneous Markov chain.
In other words, $\cbwmarg{\tstep{\ell}}{}$ is expressed as the final marginal distribution of a time-reversed inhomogeneous Markov chain of moderate length $k_{\ell+1}-k_{\ell} \in \nset^*$ with an initial distribution $\cbwmarg{\tstep{\ell+1}}{}[\ell]$.
This chain, whose transition densities are intractable, is approximately sampled using Gaussian variational inference.
The rationale behind our approach stems from the observation that the Gaussian approximation error can be reduced by shortening the length of the intermediate FK path measures (\emph{i.e.}, by increasing $L$); a result that we show in \Cref{prop:w2}.
We finally illustrate that our algorithm can provide high-quality solutions to Bayesian inverse problems involving a variety of datasets and tasks.

To sum up our contribution, we
\begin{itemize}[leftmargin=.25in]
    \item show that the existing approximations of the Markovian backward decomposition can be improved using a bridge-kernel smoothing technique
    \item design a novel divide-and-conquer sampling approach that enables efficient bias-reduced sampling from the posterior, and illustrate its performance on several Bayesian inverse problems including inpainting, outpainting, Poisson imaging, and JPEG dequantization,
    \item propose a new technique to efficiently generate approximate samples from the backward decomposition using Gaussian variational inference.
\end{itemize}

\paragraph{Notation.}
For $(m,n) \in\nset^2$ such that $m < n$, we let $\iint{m}{n} \eqdef \{m,\ldots,n\}$. We use $\normpdf(x; \mu, \Sigma)$ to denote the density at $x$ of a Gaussian distribution with mean $\mu$ and covariance matrix $\Sigma$.
$\Id_d$ is the $d$-dimensional identity matrix and $\delta_a$ denotes the Dirac mass at $a$.
$W_2$ denotes the Wasserstein distance of order 2.
We use uppercase for random variables and lowercase for their realizations.

  \section{Posterior sampling with DDM prior}
 \paragraph{DDM priors.} We provide a brief overview of DDMs \citep{sohl2015deep,song2019generative,ho2020denoising}.
Suppose we can access an empirical sample from some data distribution $\datadistr$ defined on $\rset^{\dimx}$. 
For $n \in \nset$ large enough and $k \in \iint{0}{n}$, define the distribution $\fwmarg{k}{x_k} \eqdef \int \datadistr(x_0) \, \fwtrans{k|0}{x_0}{x_k} \rmd x_0$ with $\fwtrans{k|0}{x_0}{x_k} \eqdef \normpdf(x_k; \sqrt{\acp{k}} x_0, (1 - \acp{k})\Id_\dimx)$, where
$(\acp{k})_{k=0}^n$ is a decreasing sequence with $\alpha_0=1$ and $\alpha_n$ approximately equals zero.
The probability density $\fwmarg{k}{}$ corresponds to the marginal distribution at time $k$ of an auto-regressive process on $\rset^{d_x}$ given by $X_{k + 1} = \sqrt{\acp{k+1} / \acp{k}} X_k + \sqrt{1 - \acp{k + 1} / \acp{k}} \epsilon_{k + 1}$,
with $X_0 \sim \datadistr$ and $(\epsilon_k)_{k=0}^n$ being a sequence of {\iid} $\dimx$-dimensional standard Gaussians.

DDMs leverage parametric approximations $\smash{\predx{k}[\param]}$ of the mappings $\smash{x_k \mapsto \int x_0 \, \bwtrans{0|k}{x_k}{x_0} \rmd x_0}$, where $\bwtrans{0|k}{x_k}{x_0} \propto \datadistr(x_0) \fwtrans{k|0}{x_0}{x_k}$ is the conditional distribution of $X_0$ given $X_k = x_k$.
Each $\predx{k}[\param]$ is defined as $\predx{k}[\param](x_k) \eqdef (x_k - \sqrt{1 - \acp{k}} \prednoise{k}(x_k)) / \sqrt{\acp{k}}$, where $\prednoise{k}$ is a noise predictor network trained by minimizing a denoising objective; see \cite[Eq.~(5)]{song2021denoising} and \Cref{sec:DDIM} for details.
Following \citep[Section 4.2]{dhariwal2021diffusion}, $\prednoise{k}$ also provides an estimate of the score $\nabla \log \fwmarg{k}{x_k}$ given by $\score{k}[\param](x_k) \eqdef -\big(x_k - \sqrt{\acp{k}}\predx{k}[\param](x_k)\big) / (1 - \acp{k})$.
We denote by $\param^\star$ the minimizer of the denoising objective.
Having access to $\param^\star$, we can define a generative model for $\datadistr$ by adopting the denoising diffusion probabilistic model (DDPM) framework of \citep{ho2020denoising}.
As long as $n$ is large enough, $\fwmarg{n}{}$ can be confused with a multivariate standard Gaussian.
Define the \emph{bridge kernel} $\fwtrans{k|0, k+1}{x_0, x_{k+1}}{x_k} \propto \fwtrans{k|0}{x_0}{x_k} \fwtrans{k+1|k}{x_k}{x_{k+1}}$ which is a Gaussian distribution with mean $\muDDIM{k|0, k+1}(x_0, x_{k+1})$ and diagonal covariance $\sigma^2 _{k|k+1} \Id_\dimx$ defined in \Cref{apdx:ddim}.
Define the generative model for $\datadistr$ as
\begin{equation}
    \label{eq:ddim_joint}
    \txts \bwmarg{0:n}{x_{0:n}}[\theta^{\star}] = \bwmarg{n}{x_n} \prod_{k = 0}^{n-1} \abwker{k|k+1}{x_{k+1}}{x_k}[\theta^{\star}] \eqsp,
\end{equation}
where for every $k \in \intset{1}{n-1}$, the backward transitions are
\begin{equation}
    \label{eq:ddpm_kernel}
\abwker{k|k+1}{x_{k+1}}{x_k}[\theta^{\star}] \eqdef \fwtrans{k|0, k+1}{\predxs{k+1}(x_{k+1}), x_{k+1}}{x_k}\eqsp,
\end{equation}
with $\abwker{0|1}{x_1}{\cdot}[\theta^{\star}] \eqdef \delta_{\predxs{1}(x_1)}$ and $\bwmarg{n}{x_n} = \normpdf(x_n ; 0,\Id_{\dimx})$.
In the following, we assume that we have access to a pre-trained DDM and omit the superscript $\theta^{\star}$ from the notation, writing simply $\bwmarg{}{}$ and $\smash{\predx{k}[]}$ when referring to the generative model and the denoiser, respectively.
In addition, we denote by $\smash{\bwmarg{k}{}[]}$ the $k$-th marginal of $\bwmarg{0:n}{}[]$ and write, for all $(\ell, m) \in \intset{0}{n}^2$ such that $\ell < m$, $\abwker{\ell|m}{x_m}{x_\ell} \eqdef \prod_{k = \ell} ^{m-1} \abwker{k|k+1}{x_{k+1}}{x_k}$.

 \label{sec:posterior:sampling}
 \paragraph{Posterior sampling.} Let $\pot{0}{}{}$ be a nonnegative function on $\rset^{\dimx}$. When solving Bayesian inverse problems, $\pot{0}{}{}$ is taken as the likelihood of the signal given the  observation specified using the forward model (see the next section). Our objective is to sample from the posterior distribution
\begin{equation}
    \label{eq:posterior_def}
    \cbwmarg{0}{x_0} \eqdef \pot{0}{}{x_0} \, \bwmarg{0}{x_0} / \normconst \eqsp,
  \end{equation}
where $\normconst \eqdef \int \pot{0}{}{x_0} \, \bwmarg{0}{x_0} \rmd x_0$ is the normalizing constant and 
the prior 
$\bwmarg{0}{}$ is the marginal of \eqref{eq:ddim_joint} w.r.t. $x_0$, in which case the posterior \eqref{eq:posterior_def} can be expressed as
$$
    \cbwmarg{0}{x_0} = \frac{1}{\normconst} \int \pot{0}{}{x_0} \prod_{k = 0} ^{\last-1} \abwker{k|k+1}{x_{k+1}}{x_k} \, \bwmarg{\last}{x_\last} \, \rmd x_{1:n}\eqsp.
$$
Thus, \Cref{eq:posterior_def} can be interpreted as the marginal of a time-reversed FK (Feynman--Kac) model with a non-trivial potential only for $k = 0$; see \cite{del2004feynman} for a comprehensive introduction to FK models.
In this work, we twist, without modifying the law of the FK model, the backward transitions $\smash{\abwker{k|k+1}{}{}}$ by artificial positive potentials  $\smash{(\pot{k}{}{})_{k = 0}^\last}$, each being a function on $\rset^\dimx$, and write 
\begin{equation}
    \label{eq:FK_posterior}
     \cbwmarg{0}{x_0} = \frac{1}{\normconst} \int \pot{\last}{}{x_\last} \, \bwmarg{\last}{x_\last}
     \prod_{k = 0}^{\last-1} \frac{\pot{k}{}{x_k}}{\pot{k+1}{}{x_{k+1}}}\, \abwker{k|k+1}{x_{k+1}}{x_k} \, \rmd x_{1:n} \eqsp.
\end{equation}
This allows the posterior of interest to be expressed as the time-zero marginal of an FK model with initial distribution $\bwmarg{\last}{}$, Markov transition kernels $(\abwker{k|k+1}{x_{x+1}}{})_{k = 0}^{n-1}$, and
$(\pot{k}{}{})_{k=0}^{\last}$.

Recent works that aim to sample from the posterior  \eqref{eq:posterior_def} generally employ the FK representation \eqref{eq:FK_posterior}. 
These studies, however, adopt varying auxiliary potentials \citep{chung2023diffusion, song2022pseudoinverse, zhang2023towards, boys2023tweedie, trippe2023diffusion, wu2023practical}.
FK models can be effectively sampled using sequential Monte Carlo (SMC) methods; see, \emph{\eg}, \citep{del2004feynman,chopin2020introduction}. 
SMC methods sequentially propagate weighted samples, whose associated weighted empirical distributions target the flow of the FK marginal distributions.
The effectiveness of this technique depends heavily on the choice of intermediate potentials $( \pot{k}{}{} )_{k = 1} ^n$, as discussed in \citep{trippe2023diffusion, wu2023practical, cardoso2023monte, dou2024diffusion}.
However, SMC methods require a number of samples proportional and often exponential in the dimensionality of the problems hence limiting their application in these setups due to the resulting probabitive memory cost \citep{bickel:li:bengtsson:2008}.
On the other hand, reducing the number of samples makes them vulnerable to mode collapse.


In the following, we will focus on a particular choice of potential functions $(\pot{k}{}{})_{k = 1} ^n$ for which the posterior $\pi_0$ can be expressed as the time-zero marginal distribution of a time-reversed Markov chain.
The transition densities of this chain are obtained by twisting the transition densities of the generative model with the considered potential functions.
More precisely, define, for all $k$, the potentials $\pot{k}{\star}{x_k} \eqdef \int \pot{0}{}{x_0} \, \abwker{0|k}{x_k}{x_0} \, \rmd x_0$.
Note that these potentials satisfy the recursion $\pot{k+1}{\star}{x_{k+1}} = \int \pot{k}{\star}{x_k} \, \abwker{k|k+1}{x_{k+1}}{x_k} \, \rmd x_k$.
Builing upon that, define the Markov transitions
\begin{equation}
\label{eq:optimal_bw_kernel}
\pibwker{k|k+1}{x_{k+1}}{x_k} \eqdef \frac{\pot{k}{\star}{x_k}}{\pot{k+1}{\star}{x_{k+1}}} \, \abwker{k|k+1}{x_{k+1}}{x _k},
\end{equation}
allowing the posterior \eqref{eq:FK_posterior} to be rewritten as
\begin{equation}
    \label{eq:posterior_markov}
    \cbwmarg{0}{x_0} = \int \cbwmarg{\last}{x_\last} \prod_{k = 0}^{\last-1} \pibwker{k|k+1}{x_{k+1}}{x_k} \, \rmd x_{1:n}\eqsp, \quad \cbwmarg{\last}{x_\last} = \pot{n}{\star}{x_n} \bwmarg{\last}{x_\last}[] / \normconst.
\end{equation}
In other words, the distribution $\cbwmarg{0}{}$ is the  time-zero marginal of a Markov model with transition densities $(\pibwker{k|k+1}{}{})_{k = \last-1}^0$ and initial distribution $\cbwmarg{\last}{}$. 
According to this decomposition, a sample $X^\star_0$ from the posterior \eqref{eq:posterior_def} can be obtained by sampling $X^\star_\last \sim \cbwmarg{\last}{}$ and then, recursively sampling $X^\star_k \sim \pibwker{k|k+1}{X^\star_{k+1}}{\cdot}$ from $k = n - 1$ till $k = 0$.
In practice, however, neither the Markov transition densities $\pibwker{k|k+1}{}{}$ nor the probability density function $\cbwmarg{\last}{}$ are tractable. 
The main challenge in estimating $\pibwker{k|k+1}{}{}$ stems essentially from the intractability of the potential $\pot{k}{\star}{x_k}$ as it involves computing an expectation under the high-cost sampling distribution $\abwker{0|k}{x_k}{\cdot}$.

Recent works have focused on developing tractable approximations of $\smash{\abwker{0|k}{x_k}{\cdot}}$. For the \emph{Diffusion Posterior Sampling} (DPS) algorithm \citep{chung2023diffusion}, the point mass approximation $\smash{\delta_{\predx{k}(x_k)}}$ of $\smash{\abwker{0|k}{x_k}{\cdot}}$ results in the estimate $\smash{\nabla_{x_k} \log \pot{0}{}{\predx{k}(x_{k})}}$ of $\smash{\nabla_{x_k} \log \pot{k}{\star}{x_k}}$. 
Then, given a sample $X _{k+1}$, an approximate sample $X _k$ from $\pibwker{k|k+1}{X _{k+1}}{\cdot}$ is obtained by first sampling $\tilde{X} _{k} \sim \abwker{k|k+1}{X_{k+1}}{\cdot}$ and then setting
\begin{equation}
    \label{eq:dps_update}
     X _k = \tilde{X} _{k} + \zeta \nabla_{x_{k + 1}} \log \pot{0}{}{\predx{k+1}(x_{k + 1})} |_{x_{k + 1} = X _{k+1}} \eqsp,
\end{equation}
where $\zeta > 0$ is a tuning parameter.
As noted in \citep{song2023loss, cardoso2023monte,boys2023tweedie}, the DPS updates \eqref{eq:dps_update} do not lead to an accurate approximation of the posterior $\cbwmarg{0}{}$ even in the simplest examples; see also \Cref{sec:experiments}.
Alternatively, \citet{song2022pseudoinverse} proposed the \emph{Pseudoinverse-Guided Diffusion Model} (\PGDM), which uses a Gaussian approximation of $\smash{\abwker{0|k}{x_k}{\cdot}}$ with mean $\smash{\predx{k}(x_k)}$ and diagonal covariance matrix set to $(1 - \acp{k}) \Id_\dimx$, which corresponds to the covariance of $\smash{\rbwker{0|k}{x_k}{\cdot}}$ if $\datadistr$ had been a standard Gaussian; see \cite[Appendix 1.3]{song2022pseudoinverse}.
More recently, \cite{finzi2023user, boys2023tweedie} proposed to approximate the exact KL projection of $\smash{\abwker{0|k}{x_k}{x_0}}$ onto the space of Gaussian distributions by noting that 
both its mean and covariance matrix can be estimated using $\smash{\predx{k}(x_k)}$ and its Jacobian matrix. We discuss in more depth the related works in \Cref{apdx:related}.

 \section{The DCPS algorithm}
\paragraph{Smoothing the DPS approximation.}

The bias of the \DPS\ updates \eqref{eq:dps_update} stems from the point mass approximation of the conditional distribution $\smash{\abwker{0|k}{x_k}{\cdot}}$. This approximation becomes more accurate as $k$ tends to zero and is crude otherwise. We aim here to mitigate the resulting approximation errors.
A core result that we leverage in this paper is that for any $\smash{(k, \ell) \in \intset{0}{n}^2}$ such that $\ell < k$, we can construct an estimate $\smash{\hbwker{\ell|k}{x_k}{\cdot}}$ of $\smash{\abwker{\ell|k}{x_k}{\cdot}}$ that bears a smaller approximation error than the estimate $\delta_{\predx{k}(x_k)}$ relatively to $\abwker{0|k}{x_k}{\cdot}$.
Formally, let $\hbwker{0|k}{x_k}{\cdot}$ denote any approximation of $\abwker{0|k}{x_k}{\cdot}$, such as that of the \DPS\ 
or \PGDM,  
and define the approximation of $\abwker{\ell|k}{x_k}{\cdot}$
\begin{equation}
    \hbwker{\ell|k}{x_k}{x_\ell} \eqdef \int \fwtrans{\ell|0, k}{x_0, x_k}{x_\ell} \hbwker{0|k}{x_k}{x_0} \, \rmd x_0
    \eqsp,
\end{equation}
where $\fwtrans{\ell|0, k}{x_0, x_k}{x_\ell}$ is defined in \eqref{eq:bridge_kernel}. We then have the following result.
\begin{proposition}[informal]
    \label{prop:w2-informal}
   Let $k \in \intset{1}{n}$. For all $\ell \in \intset{0}{k-1}$ and $x_k \in \rset^\dimx$,
    \begin{equation}
    \label{eq:w2}
    W_2(\hbwker{\ell|k}{x_k}{\cdot}, \abwker{\ell|k}{x_k}{\cdot})
    \leq \frac{\sqrt{\acp{\ell}}(1 - \acp{k} / \acp{\ell})}{(1 - \acp{k})}  W_2(\hbwker{0|k}{x_k}{\cdot}, \abwker{0|k}{x_k}{\cdot})\eqsp.
    \end{equation}
\end{proposition}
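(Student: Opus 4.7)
The plan is to exploit the fact that the bridge kernel $\fwtrans{\ell|0,k}{x_0, x_k}{x_\ell}$ is a Gaussian whose mean depends linearly on $x_0$ with coefficient exactly $c_{\ell,k} \eqdef \sqrt{\acp{\ell}}(1-\acp{k}/\acp{\ell})/(1-\acp{k})$, while its covariance does not depend on $x_0$ at all. This scalar $c_{\ell,k}$ is precisely the constant on the right-hand side of \eqref{eq:w2}, so the bound should follow from a shared-noise coupling argument.

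First, I would derive the explicit form of the bridge kernel: writing $\fwtrans{\ell|0,k}{x_0, x_k}{x_\ell} \propto \fwtrans{\ell|0}{x_0}{x_\ell} \, \fwtrans{k|\ell}{x_\ell}{x_k}$ and completing the square in $x_\ell$ yields a Gaussian with mean $c_{\ell,k} x_0 + d_{\ell,k} x_k$ and isotropic variance $\sigma^2_{\ell|k} \Id_{\dimx}$, where only $c_{\ell,k}$ multiplies $x_0$ while $d_{\ell,k}$ and $\sigma^2_{\ell|k}$ do not depend on $x_0$. The short algebraic identity $(1 - \acp{k}/\acp{\ell}) + (1-\acp{\ell})\acp{k}/\acp{\ell} = 1-\acp{k}$ is what ultimately produces the stated normalizing factor.

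Second, I observe that both kernels admit a common integral representation against this bridge: $\hbwker{\ell|k}{x_k}{\cdot}$ by construction, and $\abwker{\ell|k}{x_k}{\cdot} = \int \fwtrans{\ell|0,k}{x_0, x_k}{\cdot} \, \abwker{0|k}{x_k}{x_0} \rmd x_0$ as the conditional-marginal decomposition of the backward law at level $\ell$ given $x_k$, which holds because the conditional of $X_\ell$ given $(X_0, X_k)$ coincides with the forward bridge under the reverse diffusion. Granting this, I pick $(X_0, \hat{X}_0)$ realizing the optimal $W_2$-coupling of $\hbwker{0|k}{x_k}{\cdot}$ and $\abwker{0|k}{x_k}{\cdot}$, pick an independent $\epsilon \sim \normpdf(0, \Id_{\dimx})$, and set
\begin{equation*}
X_\ell \eqdef c_{\ell,k} X_0 + d_{\ell,k} x_k + \sigma_{\ell|k} \epsilon, \qquad \hat{X}_\ell \eqdef c_{\ell,k} \hat{X}_0 + d_{\ell,k} x_k + \sigma_{\ell|k} \epsilon.
\end{equation*}
The Gaussian form of the bridge then gives $X_\ell \sim \hbwker{\ell|k}{x_k}{\cdot}$ and $\hat{X}_\ell \sim \abwker{\ell|k}{x_k}{\cdot}$, so the pair is a valid coupling in which the additive noise and the $d_{\ell,k} x_k$ offset cancel in the difference $X_\ell - \hat{X}_\ell = c_{\ell,k}(X_0 - \hat{X}_0)$.

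Applying the definition of $W_2$, I then obtain
\begin{equation*}
W_2^2(\hbwker{\ell|k}{x_k}{\cdot}, \abwker{\ell|k}{x_k}{\cdot}) \leq \expe{\norm{X_\ell - \hat{X}_\ell}^2} = c_{\ell,k}^2 \, W_2^2(\hbwker{0|k}{x_k}{\cdot}, \abwker{0|k}{x_k}{\cdot}),
\end{equation*}
and taking square roots (with $c_{\ell,k} > 0$ since $\acp{k}/\acp{\ell} < 1$) yields the claimed bound. The main point requiring care is the identification of $\abwker{\ell|k}{x_k}{\cdot}$ as the bridge-averaged law against $\abwker{0|k}{x_k}{\cdot}$; this must be interpreted through the conditional-marginal identity of the backward process rather than as a multi-step DDPM composition with intermediate denoiser calls, for which the identity would fail. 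Once the conventions are fixed, the remainder is a routine application of the shared-noise coupling trick.
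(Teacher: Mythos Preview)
Your proposal is correct and follows essentially the same shared-noise coupling argument as the paper: express both $\hbwker{\ell|k}{x_k}{\cdot}$ and $\abwker{\ell|k}{x_k}{\cdot}$ as mixtures over the Gaussian bridge $\fwtrans{\ell|0,k}{x_0,x_k}{\cdot}$, couple at level $0$, push through the bridge with common noise, and read off the contraction factor $c_{\ell,k}$. The only difference is cosmetic---you invoke the optimal $W_2$ coupling directly, while the paper works with an arbitrary coupling and takes an infimum at the end---and the paper makes the ``point requiring care'' you flag into an explicit hypothesis in the formal version (namely $\rbwker{k|k+1}{}{}=\abwker{k|k+1}{}{}$), from which the bridge decomposition of $\abwker{\ell|k}{x_k}{\cdot}$ is then derived via the time-reversal identity.
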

The proof is postponed to \Cref{apdx:proofw2}.
Note that the ratio in the right-hand-side of \eqref{eq:w2} is less than $1$ and decreases as $\ell$ increases.
As an illustration, using the \DPS\ approximation of $\abwker{0|k}{x_k}{\cdot}$, we find that $\hbwker{\ell|k}{x_k}{x_\ell} = \fwtrans{\ell|0, k}{\predx{k}(x_k), x_k}{x_\ell}$ improves upon \DPS\ in terms of approximation error. This observation prompts to consider \DPS-like approximations on shorter time intervals; instead of approximating expectations under $\abwker{0|k}{x_k}{\cdot}$, such as the potential $\pot{k}{\star}{x_k}$, we should transform our initial sampling problem so that we only have to estimate expectations under $\abwker{\ell|k}{x_k}{\cdot}$ for any $\ell$ such that the difference $k - \ell$ is small. This motivates the \emph{blocking approach} introduced next.
\paragraph{Intermediate posteriors.} We approach the original problem of sampling from $\target_0$ via a series of simpler, \emph{intermediate} posterior sampling problems of increasing difficulty.
More precisely, let us consider the intermediate posteriors defined as
\begin{equation}
\label{eq:interm_post}
\cbwmarg{\tstep{\ell}}{x_\tstep{\ell}} \eqdef \pot{\tstep{\ell}}{}{x_\tstep{\ell}} \bwmarg{\tstep{\ell}}{x_\tstep{\ell}} \big/ \normconst_{\tstep{\ell}}, \quad \mbox{with} \quad \normconst[]_\tstep{\ell} \eqdef \int \pot{\tstep{\ell}}{}{x_\tstep{\ell}} \bwmarg{\tstep{\ell}}{x_\tstep{\ell}} \, \rmd x_\tstep{\ell},
\end{equation}
where $(\pot{\tstep{\ell}}{}{})_{\ell = 1} ^\tauks$ are potential functions designed by the user and $(\tstep{\ell})_{\ell = 0}^L$ is an increasing sequence in $\intset{0}{n}$ such that $\tstep{0} = 0$ and $\tstep{L} = n$.
Here, $L$ is typically much smaller than $n$.
To obtain an approximate sample from $\target_0 = \target_{\tstep{0}}$, the \algo\ algorithm recursively uses an approximate sample $X_{\tstep{\ell+1}}$ from $\target_\tstep{\ell+1}$ to obtain an approximate sample $X_\tstep{\ell}$ from $\target_\tstep{\ell}$.
Indeed, mirroring \eqref{eq:posterior_markov} it holds
\begin{equation} \label{eq:posterior_markov:block}
    \cbwmarg{\tstep{\ell}}{x_\tstep{\ell}} = \int \cbwmarg{\tstep{\ell+1}}{x_\tstep{\ell+1}}[\ell] \prod_{m=\tstep{\ell}} ^{\tstep{\ell+1}-1} \pibwker{m|m+1}{x_{m+1}}{x_m}[\ell] \, \rmd x_{\tstep{\ell}+1: \tstep{\ell+1}}\eqsp,
\end{equation}
where for $m \in \intset{\tstep{\ell}}{\tstep{\ell+1} - 1}$,
\begin{align*}
    \cbwmarg{\tstep{\ell+1}}{x_\tstep{\ell+1}}[\ell] & \eqdef \pot{\tstep{\ell+1}}{\ell, \star}{x_\tstep{\ell+1}} \bwmarg{\tstep{\ell+1}}{x_\tstep{\ell+1}} \big/ \normconst_\tstep{\ell} \eqsp, \\
\pibwker{m|m+1}{x_{m+1}}{x_m}[\ell] & \eqdef \pot{m}{\ell, \star}{x_m} \abwker{m|m+1}{x_{m+1}}{x_m} \big/ \pot{m+1}{\ell, \star}{x_{m+1}} \eqsp
\end{align*}
and for $m \in \intset{\tstep{\ell}+1}{\tstep{\ell+1}}$,
\begin{equation}
    \label{eq:block_potential}
\pot{m}{\ell, \star}{x_m} \eqdef \int \pot{\tstep{\ell}}{}{x_\tstep{\ell}} \abwker{\tstep{\ell}|m}{x_m}{x_\tstep{\ell}} \, \rmd x_\tstep{\ell} \eqsp.
\end{equation}
We emphasize that the initial distribution $\cbwmarg{\tstep{\ell+1}}{}[\ell]$ in \eqref{eq:posterior_markov:block} is \emph{different} from the posterior $\cbwmarg{\tstep{\ell+1}}{}$ as the former involves the user-defined potential whereas the latter the intractable one.
The main advantage of our approach lies in the fact that, unlike the potentials in the transition densities \eqref{eq:optimal_bw_kernel}, which involve expectations under $\abwker{0|k}{x_k}{\cdot}$, the potentials \eqref{eq:block_potential} are given by expectations under the distributions $\abwker{\tstep{\ell}|m}{x_m}{\cdot}$, which are easier to approximate in the light of \Cref{prop:w2-informal}. In the sequel, we use this approximation for the estimation of the potentials \eqref{eq:block_potential}; this yields approximate potentials
\begin{equation}
    \label{eq:opt_pot_approx}
    \iapot{m}{\ell, \star}{x_m} \eqdef \int \pot{\tstep{\ell}}{}{x_\tstep{\ell}} \hbwker{\tstep{\ell}|m}{x_m}{x_\tstep{\ell}} \, \rmd x_\tstep{\ell} \eqsp, \quad m \in \intset{\tstep{\ell} + 1}{\tstep{\ell+1}} \eqsp,
\end{equation}
which serve as a  substitute for the intractable $\pot{m}{\ell, \star}{}$.
Let us now summarize how our algorithm works. Starting from a sample $X_\tstep{\ell+1}$, which is approximately distributed according to $\cbwmarg{\tstep{\ell+1}}{}$, the next sample $X_\tstep{\ell}$ is generated in the next two steps:
\begin{enumerate}[leftmargin=.3in]
    \item Perform Langevin Monte Carlo steps initialized at $X _\tstep{\ell+1}$ and targeting $\cbwmarg{\tstep{\ell+1}}{}[\ell]$, yielding $X^\ell _\tstep{\ell+1}$.
    \item 
    Simulate a Markov chain $(X_j)_{j = \tstep{\ell+1}} ^\tstep{\ell}$ initialized with $X_\tstep{\ell+1} = X^\ell _\tstep{\ell+1}$ and whose transition from $X_{j+1}$ to $X_j$ is the minimizer of 
    \begin{equation}
    \label{eq:kl_div}
        \kldivergence{\vbwker{j|j+1}{X_{j+1}}{\cdot}}{\pibwker{j|j+1}{X_{j+1}}{\cdot}[\ell]},
    \end{equation}
    where $\vbwker{j|j+1}{X_{j+1}}{}$ is a mean-field Gaussian approximation with parameters $\vparam \eqdef (\vmu, \vstd) \in \rset^\dimx \times \rset^\dimx _{> 0}$.
    $X_{j}$ is drawn from $\vbwker{j|j+1}{X_{j+1}}{\cdot}[\vparam_j(X_{j+1})]$, where $\vparam_j(X_{j+1})$ is a minimizer of the proxy of \eqref{eq:kl_div}.
\end{enumerate}
In the following, we elaborate more on Step~1 and Step~2 and discuss the choice of the intermediate potentials.
The pseudo-code of the {\algo} algorithm is in \Cref{algo:algo}.
\label{sec:DCPS}
\paragraph{Sampling the initial distribution.}  In order to perform \textbf{Step~1}, we use the discretized Langevin dynamics \cite{roberts:tweedie:1996} with the estimate $\smash{\nabla \log \iapot{\tstep{\ell+1}}{\ell, \star}{} + \score{\tstep{\ell+1}}{}}$ of the score $\nabla \log \cbwmarg{\tstep{\ell+1}}{}[\ell]$.
This estimate results from the use of $\score{\tstep{\ell+1}}$ as an approximation of $\smash{\nabla \log \bwmarg{\tstep{\ell+1}}{}}$ in combination with the approximate potential \eqref{eq:opt_pot_approx}.
We then obtain the approximate sample $\smash{X^\ell _\tstep{\ell+1}}$ of $\cbwmarg{\tstep{\ell+1}}{}$ by running $M$ steps of the tamed unadjusted Langevin (TULA) scheme \citep{brosse2019tamed}; see \Cref{algo:algo}.
Here, the intractability of the involved densities hinder the usage of the Metropolis-Hastings corrections to reduce the inherent bias of the Langevin algorithm.

\paragraph{Sampling the transitions.} We now turn to \textbf{Step 2}. Given $X_{j+1}$, we optimize the following estimate of \Cref{eq:kl_div}, where we simply replace $\pot{j}{\ell, \star}{}$ by the approximation \eqref{eq:opt_pot_approx}:
\begin{eqnarray*}
    - \int \log \iapot{j}{\ell, \star}{x_j} \vbwker{j|j+1}{x_{j+1}}{x_j} \, \rmd x_j + \kldivergence{\vbwker{j|j+1}{x_{j+1}}{\cdot}}{\abwker{j|j+1}{x_{j+1}}{\cdot}}
    \eqsp.
\end{eqnarray*}
Letting $\vbwker{j|j+1}{x_{j+1}}{x_j} = \normpdf(x_j; \vmu_{j}, \mathrm{diag}(\rme^{\vlogstd_j}))$, where the variational parameters $\vmu_{j}, \vlogstd_{j}$ are in $\rset_\dimx$, the previous estimate yields the objective 
\begin{multline}
 \label{eq:variational-criterion}
 \mathcal{L} _{j} (\vmu_{j}, \vlogstd_{j}; x_{j+1}) \eqdef - \pE \big[ \log \iapot{j}{\ell, \star}{\vmu_j + \rme^{\vlogstd_j / 2}  Z}\big]  \\ +  \frac{\| \vmu_j - \muDDIM{j|j+1}(x_{j+1})\|^2}{2 \sigma^2 _{j|j+1}} - \frac{1}{2}\sum_{i = 1}^\dimx \left( \vlogstd_{j, i} - \frac{\rme^{\vlogstd_{j, i}}}{\sigma^2 _{j|j+1}} \right) \eqsp,
\end{multline}
where $Z$ is $\dimx$-dimensional standard Gaussian and $\muDDIM{j|j+1}(x_{j+1})$ is the mean of \eqref{eq:ddpm_kernel}.
Note here that we have used the reparameterization trick \citep{kingma2013auto} and the closed-form expression of the KL divergence between two multivariate Gaussian distributions.
We optimize the previous objective using a few steps of SGD by estimating the first term on the \rhs\ with a single sample as in \citep{kingma2013auto}.
For each $j \in \intset{\tstep{\ell}}{\tstep{\ell+1} - 1}$, we use $\muDDIM{j|j+1}$ and $\log \sigma^2 _{j|j+1}$ as initialization for $\vmu_j$ and $\vlogstd_j$.

\paragraph{Intermediate potentials.}
Here, we give general guidelines to choose the user-defined potentials $(\pot{\tstep{\ell}}{}{})_{\ell = 1} ^\tauks$.
Our design choice is to rescale the input and then anneal the initial potential $g_0$.
Therefore, we suggest
\begin{equation}
    \txts \pot{\tstep{\ell}}{}{x} = \pot{0}{}{\frac{x}{\beta_{\tstep{\ell}}}}^{\gamma_{\tstep{\ell}}}
    \eqsp,
    \label{eq:potential-design-choice}
\end{equation}
where $\gamma_{\tstep{\ell}}, \beta_{\tstep{\ell}} > 0$ are tunable paramerters.
This design choice is inspired from the tempering sampling scheme \cite{neal2001annealed} which uses the principle of progressively moving an intial distribution to the targeted one.
We provide some examples in the case of Bayesian inverse problems where the unobserved signal and the observation are modelled jointly as a realization of $(X, Y) \sim p(y | x) p_0(x)$, where $p(y | x)$ is the conditional density of $Y$ given $X = x$.
In this case, the posterior $\pi_0$ of $X$ given $Y = y$ is given by \eqref{eq:posterior_def} with $g_0(x) = p(y | x)$.


\emph{Linear inverse problems with Gaussian noise.} In this case, $\pot{0}{}{x} = \normpdf(\obs; \bfA x, \stdobs^2 \Id_\dimobs)$, where $\smash{A \in \rset^{\dimobs \times \dimx}}$. Popular applications in image processing  include super-resolution, inpainting, outpainting, and deblurring.
We use \eqref{eq:potential-design-choice} with $(\beta_{\tstep{\ell}}, \gamma_\tstep{\ell}) = (\sqrt{\acp{\tstep{\ell}}}, \acp{\tstep{\ell}})$,
\begin{equation}
    \pot{\tstep{\ell}}{}{x} =  \normpdf(\sqrt{\acp{\tstep{\ell}}} \obs; \bfA x, \sigma^2 _\obs \Id_\dimobs)
    \eqsp,
    \label{eq:intermediatepot-lininvp}
\end{equation}
which corresponds to the likelihood of $x$ given the \emph{pseudo observation} $\sqrt{\acp{\tstep{\ell}}} \obs$ under the same linear observation model that defines $\pot{0}{}{}$.
This choice of $\pot{\tstep{\ell}}{}{}$ enables exact computation of \eqref{eq:opt_pot_approx} and allows information on the observation $y$ to be taken into account early in the denoising process.

\emph{Low-count (or shot-noise) Poisson denoising.}
In a Poisson model for an image, the grey levels of the
image pixels are modelled as Poisson-distributed random variables. More specifically, let  $A \in \rset^{\dimobs \times \dimx}$ be a matrix with nonnegative entries and $x \in [0, 255]^{C \times H \times W}$, where $C$ is the number of channels and $H$ the height and $W$ the width. For every $i \in \intset{1}{\dimobs}$, $Y_i$ is Poisson-distributed with mean $(A x)_i$, and the likelihood of $x$ given the observation is therefore given by
$\smash{x \mapsto \prod_{j = 1}^{\dimobs} (\lambda A x)_j ^{y_j} \rme^{-(\lambda A x)_j} / y_j} !\,$ where $\lambda > 0$ is the rate. Following \cite{chung2023diffusion} we consider as likelihood its normal approximation, \emph{i.e.} $\smash{\pot{0}{}{} = \prod_{j = 1}^\dimobs \normpdf(\obs_j; \lambda (Ax)_j, y_j)}$. This model is relevant for many tasks such as low-count photon imaging and computed tomography (CT) reconstruction \cite{nowak2000statistical, rodrigues2008denoising, marais2017proximal}.
We use \eqref{eq:potential-design-choice} with $\beta_\tstep{\ell} = \gamma_\tstep{\ell} = \sqrt{\acp{\tstep{\ell}}}$: 
\begin{equation}
    \label{eq:intermediatepot-poisson}
    \smash{\pot{\tstep{\ell}}{}{x} = \prod_{j = 1}^\dimobs \normpdf(\sqrt{\acp{\tstep{\ell}}} y_j; \lambda (A x) _j, \sqrt{\acp{\tstep{\ell}}} y_j)}
    \eqsp.
\end{equation}

\emph{JPEG dequantization.} JPEG \cite{wallace1992jpeg} is a ubiquitous method for lossy compression of images. Use $h_q$ to denote the JPEG encoding function with \emph{quality factor} $q \in \intset{0}{100}$, where a small $q$ is associated with high compression.
Denote by $h^\dagger _q$ the JPEG decoding function that returns an image in RGB space with a certain loss of detail, depending on the degree of compression $q$, compared to the original image.
Since we require the potential to be differentiable almost everywhere, we use the differentiable approximation of JPEG developed in \cite{shin2017jpeg}, which replaces the rounding function used in the quantization matrix with a differentiable approximation that has non-zero derivatives almost everywhere.
In this case, $\pot{0}{}{x} = \normpdf(h^\dagger _q(\obs); h^\dagger _q(h_q(x)), \stdobs^2 \Id_\dimobs)$, where $\obs$ is in YCbCr space.
Combining this with \Cref{eq:potential-design-choice} with $(\beta_{\tstep{\ell}}, \gamma_\tstep{\ell}) = (\acp{\tstep{\ell}}, \acp{\tstep{\ell}})$ and assuming that the composition $h^\dagger _q \circ h_q$ is a homogenious map, the intermediate potentials are
$
    \smash{\pot{\tstep{\ell}}{}{x} = \normpdf(\sqrt{\acp{\tstep{\ell}}} \, h^\dagger _q(\obs); h^\dagger _q(h _q(x)), \stdobs^2 \Id_\dimx)}
    \eqsp.
$

\section{Experiments}
\begin{figure}[htb]
    \centering
    \includegraphics[width=1\textwidth]{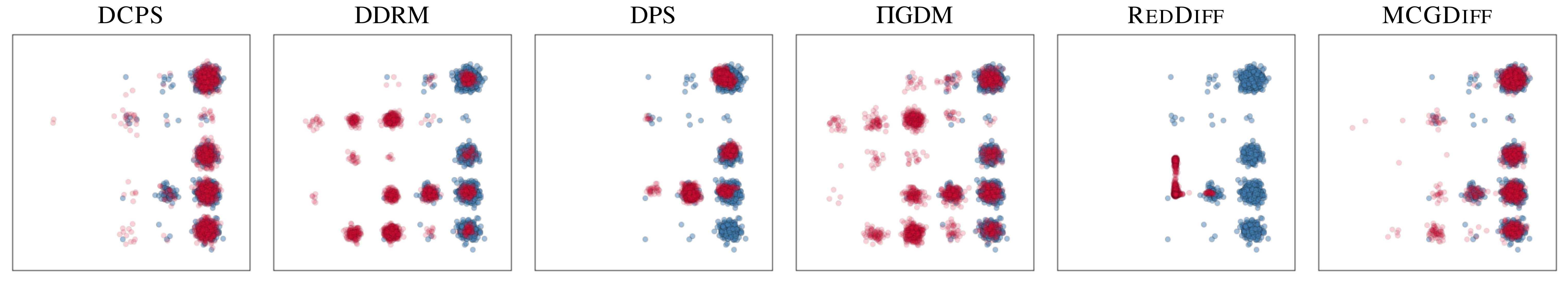}   
    \captionsetup{font=small}
    \caption{First two dimensions of samples (in red) from each algorithm on the 25 component Gaussian mixture posterior sampling problem with $(\dimx, \dimobs) = (100, 1)$. The true posterior samples are given in blue.}
\end{figure}

\label{sec:experiments}
In this section, we demonstrate the performance of \algo\ and compare it with \DPS\ \citep{chung2023diffusion}, \PGDM\ \citep{song2022pseudoinverse}, \ddrm\ \citep{kawar2022denoising}, \reddiff\ \citep{mardani2024a}, and \mcgdiff\ \citep{cardoso2023monte} on several Bayesian inverse problems.
We also benchmark our algorithm against \DiffPIR\ \cite{zhu2023diffpir}, \DDNM\ \cite{wang2023ddnm}, \FPS\ \cite{dou2024diffusion}, and \SDA\ \cite{rozet2023sda} but we defer the results to the \Cref{apdx:add-experiments}.

First, we consider a simple toy experiment in which the posterior distribution is available in closed form. Next, we apply our algorithm to superresolution (SR $4\times$ and $16\times$), inpainting and outpainting tasks with Gaussian and Poisson noise, and JPEG dequantization.
For these imaging experiments, we use the \ffhq  \texttt{256} \cite{karras2019style} and \imagenet \texttt{256} \cite{deng2009imagenet} datasets and the publicly available pre-trained models of \cite{choi2021ilvr} and \cite{dhariwal2021diffusion}.
Finally, we benchmark our method on a trajectory inpainting task using the pedestrian dataset \texttt{UCY} for which we have trained a Diffusion model. All  details can be found in \Cref{apdx:implementation_details}.
\paragraph{Gaussian mixture.}
We first evaluate the accuracy of \algo\ on a linear inverse problem with a Gaussian mixture (GM) prior, for which the posterior can be explicitly computed: it is also a Gaussian mixture whose means, covariance matrices, and weights are in a closed form; see \Cref{subsec:gm}.
\begin{wraptable}{r}{5.5cm}
    \captionsetup{font=small}
    \caption{95\% confidence interval for the SW on the GM experiment.}
    \resizebox{.4\textwidth}{!}{
        \begin{tabular}{lcc}
        \toprule
        &$\dimx=10, \dimobs=1$ & $\dimx=100, \dimobs=1$ \\
        \midrule
        \algo${}_{50}$     & $2.91 \pm 0.74$ & $4.04 \pm 1.00$ \\
        \algo${}_{500}$     & $\mathbf{2.19} \pm 0.68$ & $\underline{3.29} \pm 0.95$ \\
        \DPS      & $5.80 \pm 0.75$ & $5.68 \pm 0.73$ \\
        \ddrm     & $3.77 \pm 0.96$ & $5.70 \pm 0.78$ \\
        \PGDM & $4.23 \pm 0.90$ & $4.61 \pm 0.68$ \\
        \reddiff  & $6.36 \pm 1.27$ & $7.47 \pm 0.87$ \\
        \mcgdiff  & $\underline{2.28} \pm 0.75$ & $\mathbf{2.83} \pm 0.71$ \\
        \bottomrule
        \end{tabular}
    }
        \label{table:gm}
\end{wraptable}
In this case, the predictor $\predx{k}[\param^*]$ is available in a closed form; see \Cref{subsec:gm} for more details.
We consider a Gaussian mixture prior with $25$ components in dimensions $\dimx = 10$ and $\dimx = 100$.
The potential is $\pot{0}{}{x} = \normpdf(\obs; A x, \sigma^2 _\obs \Id_\dimobs)$ with $\dimobs = 1$ and $A$ is a $1 \times \dimx$ vector.
The results are averaged over $30$ randomly generated replicates of the measurement model $(\obs, A, \sigma^2 _\obs)$ and the mixture weights.
Then, for each pair of prior distribution and measurement model, we generate $N_s = 2000$ samples with each algorithm and compare them with $N_s$ samples from the true posterior distribution using the sliced Wasserstein (SW) distance. For \algo, we used $\tauks = 3$ blocks and $K = 2$ gradient steps, respectively, and compared two configurations, denoted by \algo${}_{50}$ and \algo${}_{500}$, of the algorithm with $M = 50$ and $M = 500$ Langevin steps, respectively. See \Cref{algo:algo}. The results are reported in \Cref{table:gm}.
It is worthwhile to note that \algo\ outperforms all baselines except for \mcgdiff. However, by increasing the number of Langevin steps, its performance closely matches that of \mcgdiff.

\begin{figure}[htb]
    \centering
    \hspace*{-1.5mm}
    \subfigure{
        \includegraphics[width=0.49\textwidth]{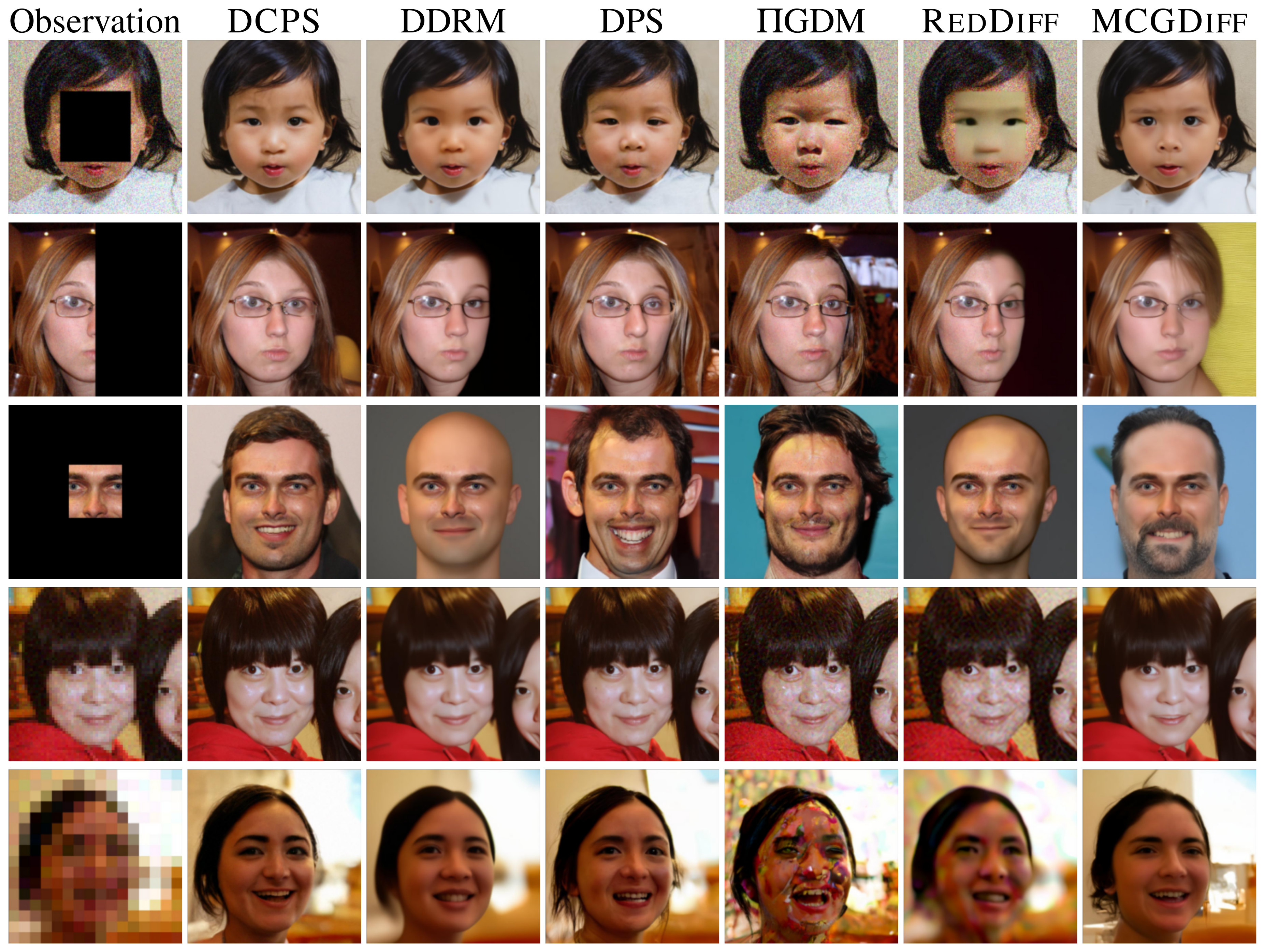}   
    }
    \subfigure{
        \includegraphics[width=0.49\textwidth]{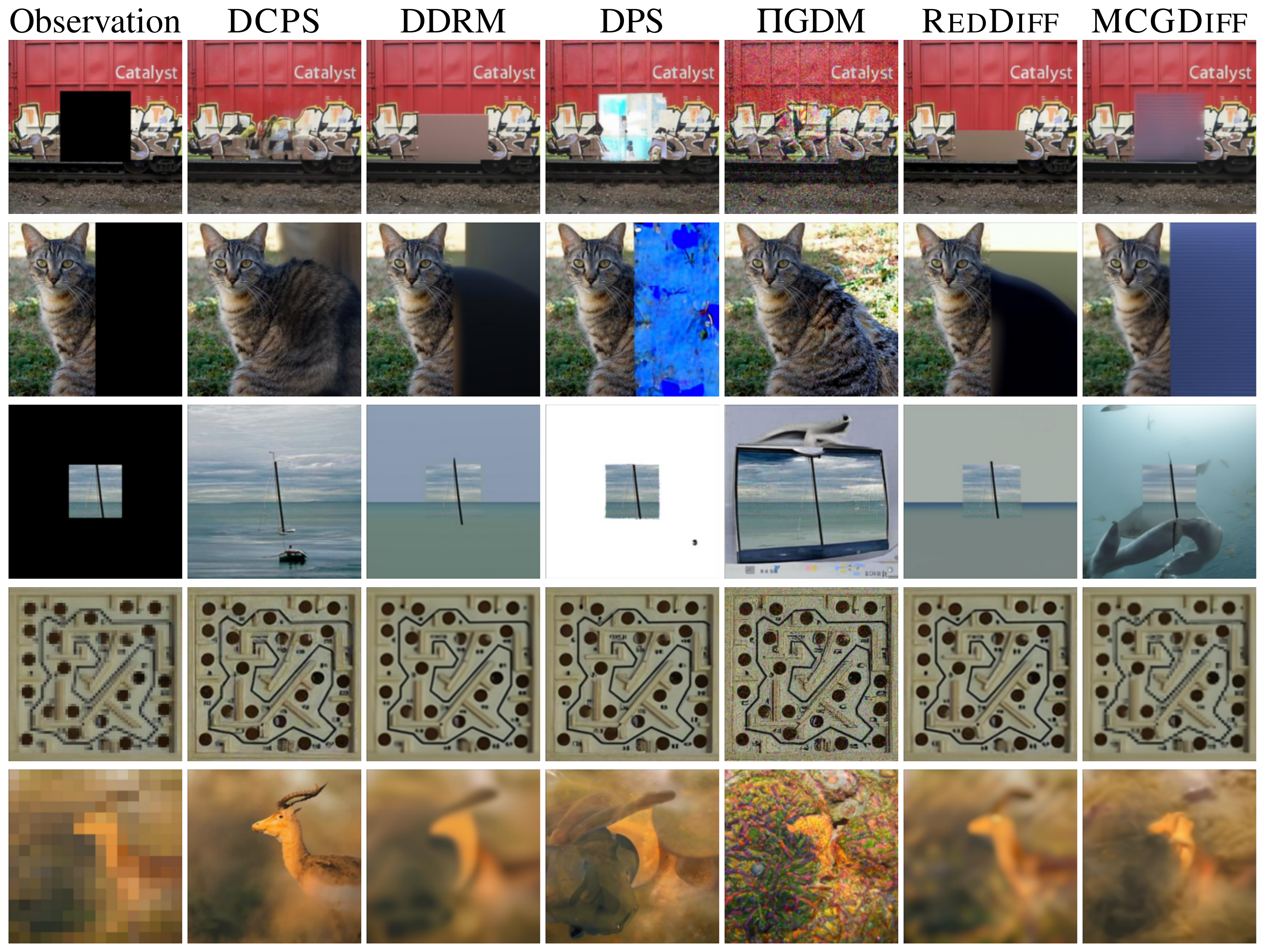}
        }
    \caption{Sample images for inpainting with center, half, expand masks and for Super Resolution with $4 \times$ and $16 \times$ factors. On the left: \ffhq\ dataset and on the right \imagenet\ dataset.}
    \label{fig:samples}
    \vspace*{-1mm}
\end{figure}

\paragraph{Imaging experiment.}
\Cref{table:lin_invp} reports the results for the linear inverse problems with Gaussian noise with two noise variance levels $\stdobs = 0.05$ and $\stdobs = 0.3$, \Cref{table:jpeg-dequantization} for the JPEG dequantization problem with $\stdobs = 10^{-3}$, QF $\in \{2, 8\}$, and \Cref{table:poisson-denoising} for the Poisson denoising task with rate $\lambda = 0.1$.
For all tasks and datasets, we use the same parameters for \algo\ and therefore do not perform any task or dataset-specific tuning. We use $\tauks = 3$, $K = 2$ gradient steps, and $M = 5$ Langevin steps.
To ensure a fair comparison with  \DPS\ and \PGDM\, we use 300 DDPM steps for \algo\ and $1000$ steps for both \DPS\ and \PGDM, which ensures that all the algorithms have the same runtime and memory footprint; see \Cref{fig:runtime}. For \mcgdiff, which has a large memory requirement, we use $N = 32$ particles in the SMC sampling step and then randomly draw one sample from the resulting particle approximation of the posterior. Finally, for \ddrm\ we use 200 diffusion steps and for \reddiff\ we use $1000$ gradient steps and the parameters recommended in the original paper. We provide the implementation details for all algorithms in \Cref{apdx:implementation_details}.
\begin{wraptable}{r}{9cm}
    \captionsetup{font=small}
        \caption{Mean LPIPS value on different tasks. Lower is better.}
    \resizebox{.62\textwidth}{!}{
    \begin{tabular}{@{}cccccccc@{}}
    \toprule
    Dataset / $\stdobs$ & Task & \algo & \ddrm & \DPS & \PGDM & \reddiff & \mcgdiff \\
    \midrule
    \multirow{4}{*}{\centering \ffhq\ / 0.05}
    & Half & \bf{0.20} & 0.25 & \underline{0.24} & 0.26 & 0.28 & 0.36 \\
    & Center & \bf{0.05} & \underline{0.06} & 0.07 & 0.19 & 0.12 & 0.24 \\
    & SR $4 \times$ & \bf{0.09} & 0.18 & \underline{0.09} & 0.33 & 0.36 & 0.15 \\
    & SR $16 \times$ & \bf{0.23} & 0.36 & \underline{0.24} & 0.44 & 0.51 & 0.32 \\
    \midrule
    \multirow{4}{*}{\centering \ffhq\ / 0.3}
    & Half & \bf{0.25} & \underline{0.30} & 0.31 & 0.64 & 0.76 & 0.80 \\
    & Center & \bf{0.10} & \underline{0.13} & 0.11 & 0.62 & 0.75 &  0.55 \\
    & SR $4 \times$ & \underline{0.21} & 0.26 & \bf{0.19} & 0.77 & 0.77 & 0.65 \\
    & SR $16 \times$ & \bf{0.35} & \underline{0.41} & 0.43 & 0.64 & 0.74 & 0.52 \\
    \midrule
    \multirow{4}{*}{\centering \imagenet\ / 0.05}
    & Half & \bf{0.35} & \underline{0.40} & 0.44 & 0.38 & 0.44 & 0.83 \\
    & Center & \underline{0.18} & \bf{0.14} & 0.31 & 0.29 & 0.22 & 0.45 \\
    & SR $4 \times$ & \bf{0.24} & \underline{0.38} & 0.41 & 0.78 & 0.56 & 1.32 \\
    & SR $16 \times$ & \bf{0.44} & 0.72 & 0.50 & \underline{0.60} & 0.83 & 1.33 \\
    \midrule
    \multirow{4}{*}{\centering \imagenet\ / 0.3}
    & Half & \bf{0.40} & \underline{0.46} & 0.48 & 0.82 & 0.76 & 0.86 \\
    & Center & \bf{0.24} & \underline{0.25} & 0.40 & 0.68 & 0.71 & 0.47 \\
    & SR $4 \times$ & \bf{0.43} & 0.50 & \underline{0.47} & 0.87 & 0.83 & 1.31 \\
    & SR $16 \times$ & 0.72 & 0.77 & \bf{0.57} & 0.72 & 0.92 & \underline{0.67} \\
    \midrule
    Average & & \bf{0.28} & 0.35 & \underline{0.32} & 0.57 & 0.60 &  0.67 \\
    \midrule
    \end{tabular}
    }
    \label{table:lin_invp}
    \vspace{-.2cm}
\end{wraptable}
For the JPEG dequantization task, we use $\stdobs = 10^{-3}$ and $\lambda = 0.1$. We only benchmark our method against \PGDM\ and \reddiff, since \mcgdiff\ and \ddrm\ do not handle non-linear inverse problems. We did not include \DPS\ in our benchmark because we have not managed to find a suitable choice of hyperparameters to achieve reasonable results.
Finally, for the Poisson-shot noise case, we compare against \DPS.
We use the step size for super-resolution recommended in the original paper  \citep[see][Appendix D.1]{chung2023diffusion}, and found, via a grid search, that the same value is also effective for the other tasks.
\begin{wraptable}{r}{6cm}
    \centering
        \captionsetup{font=small}
        \caption{Mean LPIPS value on JPEG dequantization.}
    \resizebox{.4\textwidth}{!}{
    \begin{tabular}{@{}ccccc@{}}
    \toprule
    Dataset  & Task & \algo & \PGDM & \reddiff \\
    \midrule
    \multirow{2}{*}{\centering \ffhq}
    & $\textsc{QF} = 2$ & \bf{0.20} & 0.37 & \underline{0.32}  \\
    & $\textsc{QF} = 8$ & \bf{0.08} & \underline{0.15} & 0.18 \\
    \midrule
    \multirow{2}{*}{\centering \imagenet}
    & $\textsc{QF} = 2$ & \bf{0.44} & 0.93 & 0.50 \\
    & $\textsc{QF} = 8$ & \bf{0.24} & 0.95 & 0.31 \\
    \midrule
    \end{tabular}
    }
    \label{table:jpeg-dequantization}
\end{wraptable}

\emph{Evaluation.}
As shown in \Cref{table:lin_invp}, \algo\ outperforms the other baselines on 13 out of 16 tasks and has the best average performance.
In particular, it compares favorably with \PGDM\ and \DPS, its closest competitors, while exhibiting the same runtime and memory requirements; see \Cref{fig:runtime}, where we give the average runtime and memory usage for each algorithm. The memory consumption is measured by how many samples each algorithm can generate in parallel on a single 48GB L40S NVIDIA GPU for the Diffusion model trained on \ffhq\ \cite{dhariwal2021diffusion}. 
\begin{wraptable}{r}{6cm}
    \centering
    \includegraphics[width=.4\textwidth]{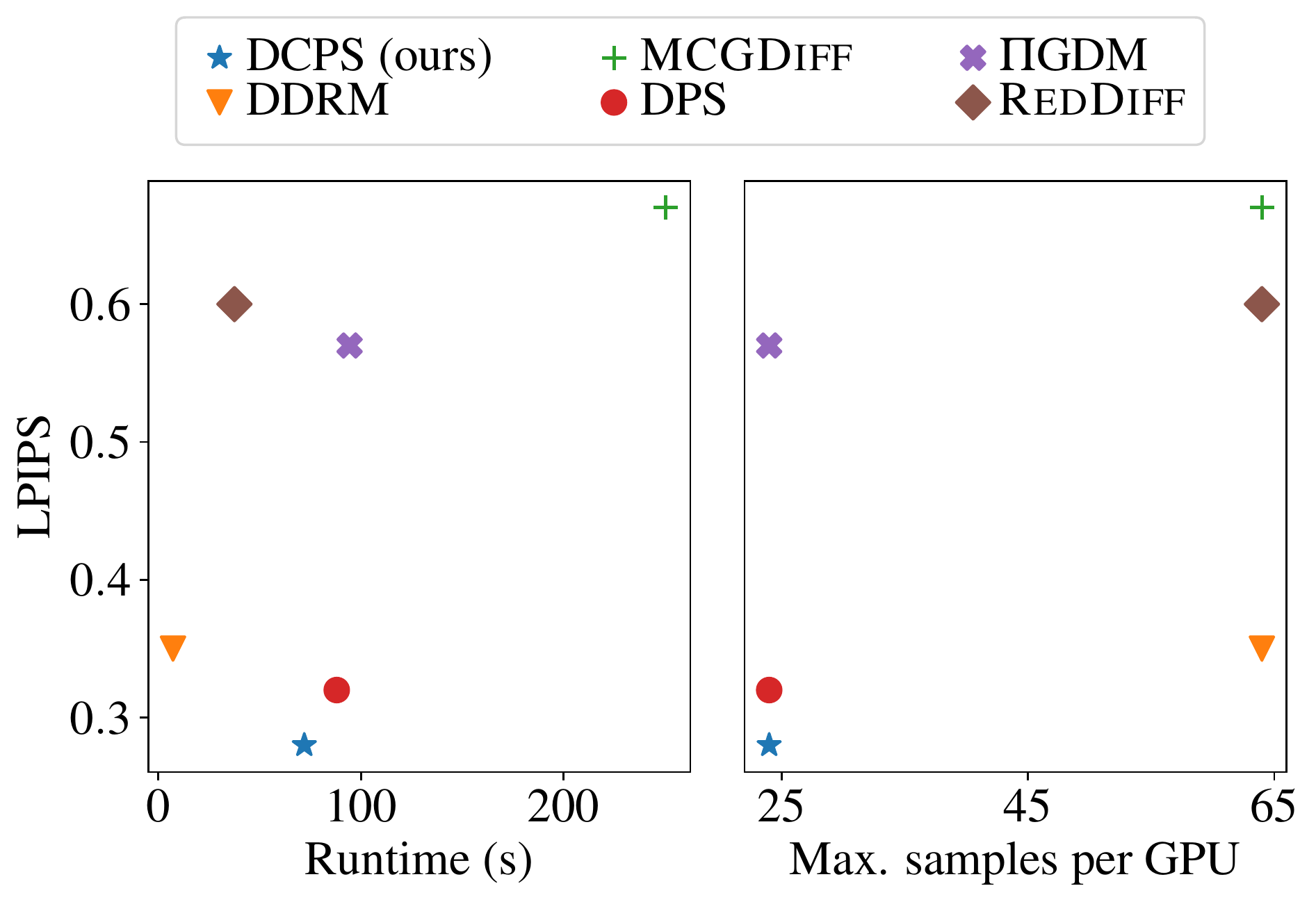}
    \captionsetup{font=small}
    \caption{LPIPS metric against the runtime and memory cost of the algorithms.}
    \vspace{2mm}
    \label{fig:runtime}
    \centering
    \captionsetup{font=small}
    \caption{$\ell_2$ distance quantiles with \mcgdiff\ as reference.}
    \resizebox{.4\textwidth}{!}{
            \begin{tabular}{lccccccc}
            \toprule
            & \multicolumn{3}{c}{$\sigma_y = 0.005$}  & & \multicolumn{3}{c}{$\sigma_y = 0.01$}  \\
            & $q50$ & $q25$ & $q75$ && $q50$ & $q25$ & $q75$ \\
            \midrule
            \algo    & \textbf{1.31} & \textbf{1.33} & \textbf{1.47} && \textbf{1.33} & \textbf{1.42} & \textbf{1.42} \\
            \DPS     & \underline{1.34} & 1.40 & 1.61 && \underline{1.36} & 1.48 & 1.52 \\
            \ddrm    & 1.48 & 1.46 & 1.61 && 1.59 & 1.62 & 1.61 \\
            \PGDM    & 1.36 & \underline{1.35} & \textbf{1.47} && 1.37 & \underline{1.43} & \textbf{1.42} \\
            \reddiff & 1.67 & 1.57 & 1.82 && 1.56 & 1.54 & 1.65 \\
            \bottomrule
            \end{tabular}
        }
        \label{table:trajectory-inpainting-L2}
\end{wraptable}
We emphasize that \algo\ is more robust to larger noise levels than \PGDM\ and \reddiff, as evidenced by the large increase in the LPIPS value for these algorithms in the case $\stdobs = 0.3$.
On the JPEG dequantization task (\Cref{table:jpeg-dequantization}), \algo\ also shows better performance  than these algorithms and even more so for the high compression level ($\textsc{QF} = 2$). On the Poisson-shot noise tasks, \algo\ outperforms \DPS\ by a significant margin; see \Cref{table:poisson-denoising}.
Finally, we display various reconstructions obtained with each algorithm. More specifically, we have generated 4 samples each, with the same seed. \Cref{fig:samples} displays the first sample and the remaining ones are deferred to \Cref{apdx:samples}.
For \mcgdiff\, we show 4 random samples of the same particle filter.
Due to the collapse of the particle filter in very large dimensions \cite{bickel:li:bengtsson:2008}, they are all similar.
Surprisingly, the samples produced by \ddrm\ and \reddiff\ for the outpainting tasks also show striking similarities, although the samples have been drawn independently.
\begin{figure}[htb]
    \centering
    \subfigure{
        \includegraphics[width=0.38\textwidth]{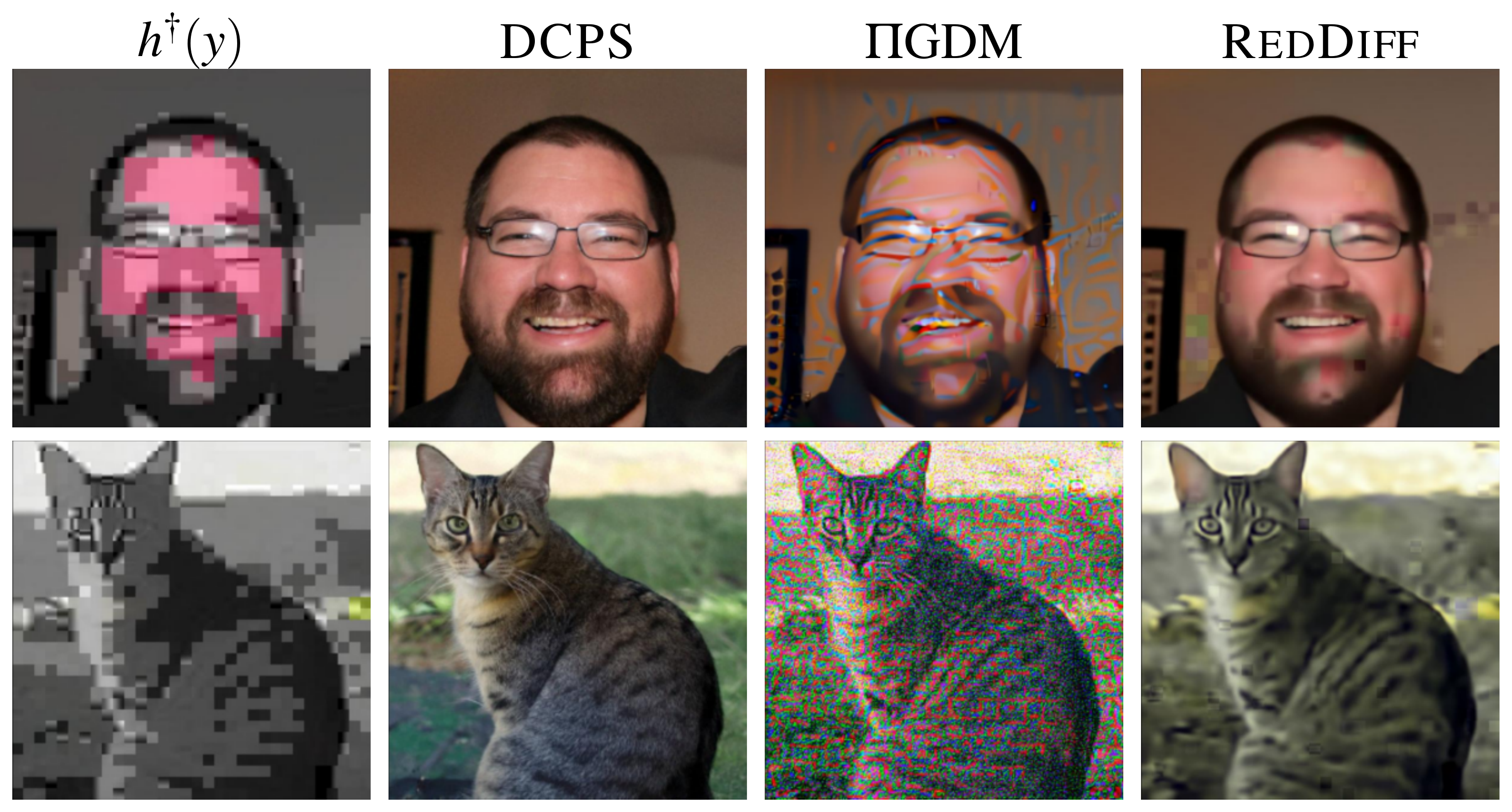}   
    }
    \subfigure{
        \includegraphics[width=0.286\textwidth]{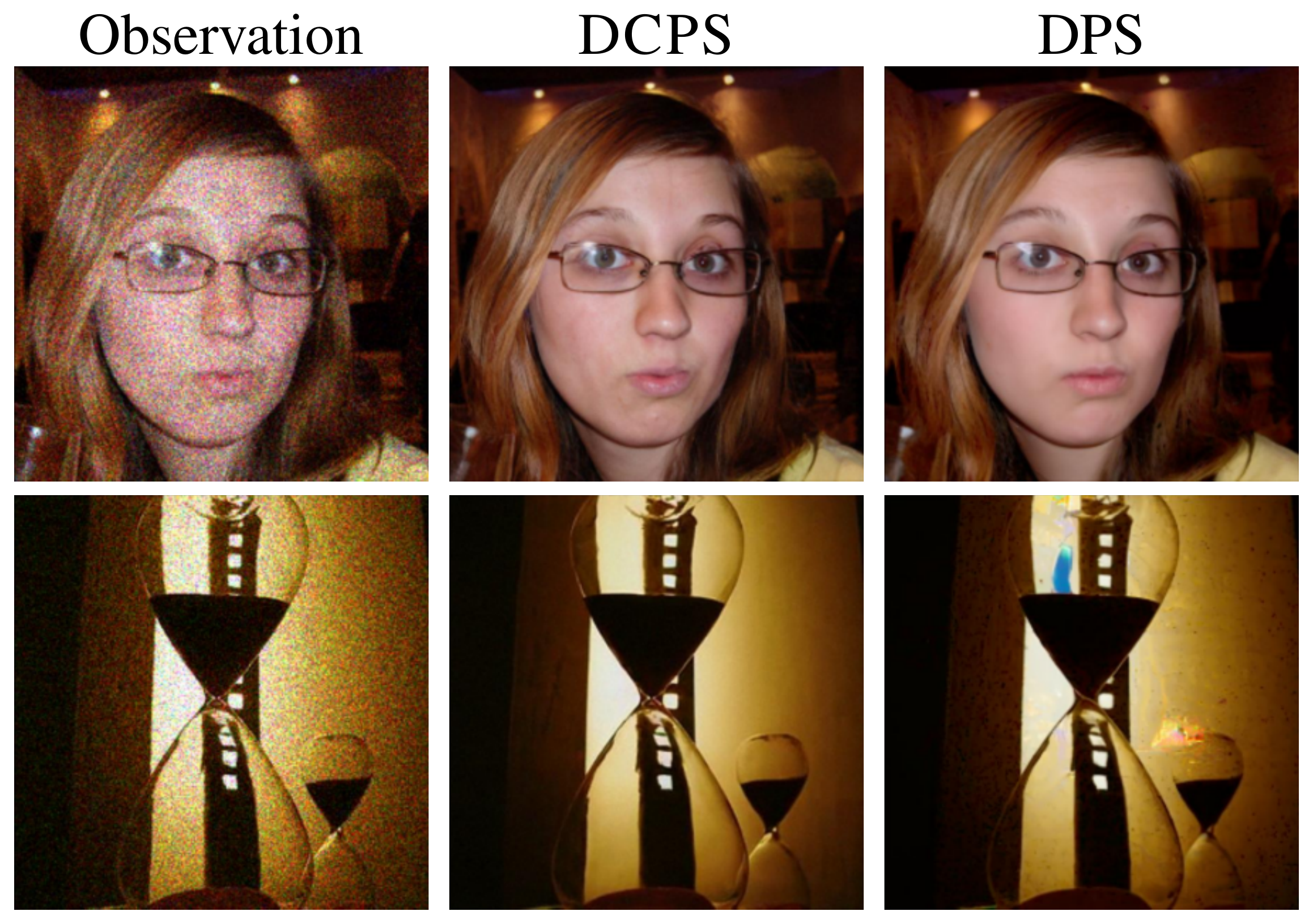}
        }
    \subfigure{
        \includegraphics[width=0.286\textwidth]{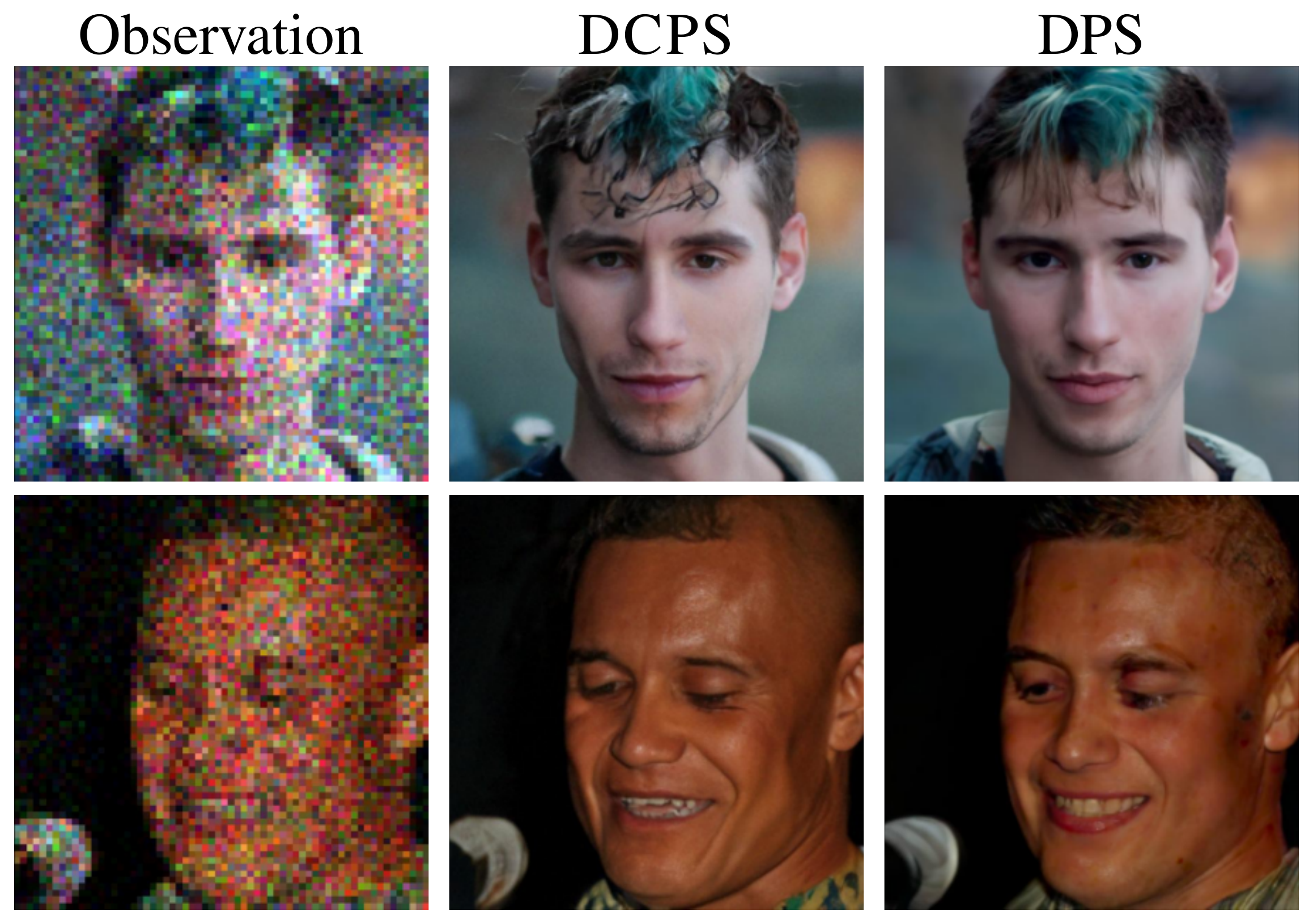}
    }
    \captionsetup{font=small}
    \caption{Left: JPEG dequantization with QF = $2$. Middle: Poisson denoising. Right: SR $4\times$ Poisson denoising.}
\end{figure}


\paragraph{Trajectory prediction.}
We evaluate our algorithm on the UCY dataset consisting of pedestrian trajectories, encoded as 2D time series with 20 time steps \cite{lerner2007ucy, mangalam2021goals-waypoints, gu2022stochastic, mao2023leapfrog}. We pre-train a trajectory model on this dataset and then use it for trajectory reconstruction tasks.
The model architecture and implementation are detailed in \Cref{apdx:trajectory-exp}.
We focus on the completion of trajectories where only a few timesteps are observed. The missing steps are filled in based on the observations and the pre-trained prior model, similar to the inpainting task in the previous section.
We use \mcgdiff\ with $5000$ particles to obtain approximate samples from the posterior.
Indeed, as the dimension of the observation space is low ($\dimx = 40$) and \mcgdiff\ is asymptotically exact as the number of particles tends to infinity, it yields an accurate approximation of the posterior; see \citep[Proposition~2.1]{cardoso2023monte}.
Then, we compute the $\ell_2$ distance between the median, quantile $25$, and quantile $75$ of the \mcgdiff\ samples and the reconstructions of each algorithm.
We report these results in \Cref{table:trajectory-inpainting-L2}.
Finally, in \Cref{fig:trajectory-reconstruction-and-ci} we illustrate the reconstructed trajectories on a specific trajectory completion problem.
\begin{figure}[H]
    \centering
    \includegraphics[width=.75\textwidth]{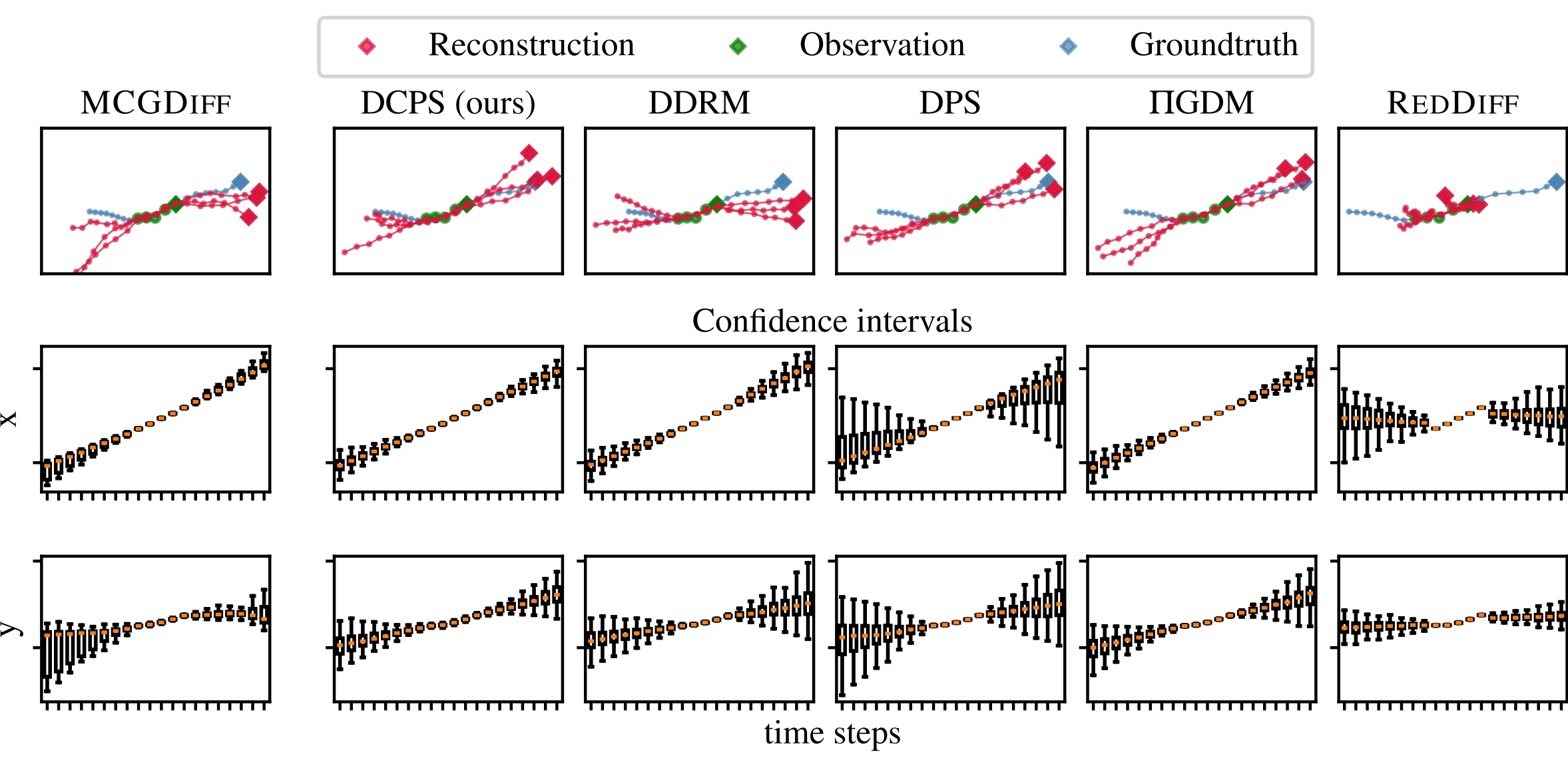}
    \captionsetup{font=small}
    \caption{Trajectory completion where only the middle part of the trajectory is observed.
    The figures in the $1$\textsuperscript{st} row display $3$ reconstructions per algorithm.
    The $2$\textsuperscript{nd} and $3$\textsuperscript{rd} rows show confidence intervals across different time steps. The \emph{Groundtruth} is a trajectory taken from the \texttt{UCY} dataset.}
    \label{fig:trajectory-reconstruction-and-ci}
\end{figure}

 \section{Conclusion.}
 In this paper, we introduce \algo\ to handle Bayesian linear inverse problems with DDM priors without the need for problem-specific additional training.
 Our divide-and-conquer strategy helps to reduce the approximation error of existing approaches, and our variational framework provides a principled method for estimating the backward kernels. \algo\ applies to various relevant inverse problems and is competitive with existing methods. 

\paragraph{Limitations and future directions.}
Our method has some limitations that shed light on opportunities for further development and refinement.
First, the intermediate potentials that we considered were specifically designed for each problem, meaning our method is not universally applicable to all inverse problems. For instance, our approach can not be applied to for linear inverse problems using latent diffusion models \cite{rombach2022high} since there is no clear choice of intermediate potentials.  Therefore, in our opinion, deriving a learning procedure that is capable to automatically design effective intermediate potentials applicable to any $\pot{0}{}{}$ is an important research direction. Moreover, there is an aspect of the choice of the intermediate potentials and the number of blocks $L$ that remains to be understood properly. Indeed, while our backward approximations reduce the local approximation errors \wrt\ \DPS\ and \PGDM; nonetheless \algo\ requires appropriate intermediate potentials in order to perform well. \algo\  can still provide decent performance with \emph{irrelevant} intermediate potentials as long as the number of Langevin steps, in-between the blocks, is large enough.
Finally, although our method provides decent results with the same computational cost as \DPS\ and \PGDM, it remains slower than \reddiff\ and \ddrm\ which which do not compute vector-jacobian product over the denoiser.
Therefore, overcoming this bottleneck when optimizing the KL objective would be a significant improvement for our method.

\paragraph{Acknowledgments.} The work of Y.J. and B.M. has been supported by Technology Innovation Institute (TII), project Fed2Learn. The work of Eric Moulines has been partly funded by the European Union (ERC-2022-SYG-OCEAN-101071601). Views and opinions expressed are however those of the author(s) only and do not necessarily reflect those of the European Union or the European Research Council Executive Agency. Neither the European Union nor the granting authority can be held responsible for them.

\newpage
\bibliographystyle{plain}
\bibliography{bibliography}

\clearpage
\newpage

 \begin{appendix}
    \section{Methodology details}
\label{sec:DDIM}
\subsection{Denoising Diffusion models}
\label{apdx:ddim}
DDMs learn a sequence $(\predx{t}[\param])_{t = 1}^T$ of denoisers by minimizing, using SGD, the objective
\begin{equation}
    \label{eq:denoising_objective}
 \sum_{t = 1}^T  w_t \pE \left[ \| \epsilon_t - \prednoise{t}(\sqrt{\acp{t}} X_0 + \sqrt{1 - \acp{t}} \epsilon_t ) \|^2 \right]
\end{equation}
w.r.t. the neural network parameter $\param$,
where $(\epsilon_t)_{t = 1}^T$ are i.i.d. standard normal vectors and $(w_t)_{t = 1}^T$ are some nonnegative weights. 
We denote by $\param^\star$ an estimator of the minimizer of the previous loss.
Having access to $\param^\star$, we can define a generative model for $\datadistr$. Let $( t_{k} )_{k = 0} ^\last$ be an increasing sequence of time instants in $\intset{0}{T}$ with $t_0 = 0$.
We assume that $t_n$ is large enough so that $\fwmarg{t_n}{}$ is approximately multivariate standard normal.
For convenience, we assign the index $k$ to any quantity depending on $t_k$; \emph{e.g.}, we denote $p_{t_k}$ by $p_k$.
For $(j, k) \in \intset{1}{n-1}^2$ such that $j < k$, define
\begin{align}
    \label{eq:mean_std_bridge}
    \muDDIM{j|0, k}(x_0, x_k) & \eqdef \frac{\sqrt{\acp{j}{}}(1 - \acp{k} / \acp{j})}{1 - \acp{k}} x_0 + \frac{\sqrt{\acp{k} / \acp{j}} (1 - \acp{j})}{1 - \acp{k}} x_k \eqsp, \\
    \sigma^2 _{j|k} & \eqdef \frac{(1 - \acp{j})(1 - \acp{k} / \acp{j})}{1 - \acp{k}} \eqsp.
\end{align}
Then the bridge kernel
\begin{equation} 
    \label{eq:bridge_kernel}
    \rbwker{j|0, k}{x_0, x_k}{x_j} = \fwtrans{j|0}{x_0}{x_j} \fwtrans{k|j}{x_j}{x_k} \big/ \fwtrans{k|0}{x_0}{x_k}
\end{equation}
is a Gaussian distribution with mean $\muDDIM{j|0, k}(x_0, x_k)$ and covariance $\sigma^2 _{j|k} \Id_\dimx$. 
DDPM \cite{ho2020denoising} posits the following variational approximation 
$$ 
    \bwmarg{0:n}{x_{0:n}}[\param] = \bwmarg{n}{x_n} \prod_{k = 0}^{n-1} \abwker{k|k+1}{x_{k+1}}{x_k}[\param] \eqsp,
$$
where $\abwker{k|k+1}{x_{k+1}}{x_k}[\param] = \rbwker{k|0, k+1}{\predx{k+1}[\param](x_{k+1}), x_{k+1}}{x_k}$ and $\abwker{0|1}{x_1}{\cdot}[\param] = \delta_{\predx{1}[\param](x_1)}$. An efficient generative model is then obtained by plugging in the parameter $\param^\star$.

\subsection{Further details on \algo}
\label{apdx:algo}
In this section we provide further details on \textbf{Steps 1} and \textbf{2} detailed in the main paper. The complete algorithm is given in \Cref{algo:algo}.

\paragraph{Tamed unadjusted Langevin.} For the tamed unadjusted Langevin steps we simulate the Markov chain $(\tilde{X}_{j})_{j = 0} ^M$ where 
\begin{equation}
   \label{eq:tula_scheme}
  \tilde{X}_{j+1} = \tilde{X}_j + \gamma G^{\ell} _\gamma(\tilde{X}_j) + \sqrt{2 \gamma} Z_j \eqsp, \quad \tilde{X}_0 = X _\ell+1 \eqsp,
\end{equation}
and $(Z_j)_{j = 0}^{M - 1}$ are i.i.d. $\dimx$-dimensional standard normal, $X _\ell+1$ is an approximate sample from $\cbwmarg{\ell+1}{}$ obtained from the previous iteration of the algorithm, and for all $x \in \rset^\dimx$ and $\gamma > 0$,
\begin{equation}
   \label{eq:tula_gradient}
   G^{\ell}_\gamma(x) \eqdef \frac{\nabla \log \iapot{\ell+1}{\ell, \star}{x} + \score{\ell+1}(x)}{1 + \gamma \| \nabla \log \iapot{\ell+1}{\ell, \star}{x} + \score{\ell+1}(x) \|} \eqsp.
\end{equation}
We then set $X^{\ell} _{\ell+1} \eqdef \tilde{X}_M$, which serves as an initialization of the Markov chain in \textbf{Step 2}. 

\paragraph{Potential computation.} In order to perform the tamed Langevin steps and to optimize the variational approximation using the criterion \eqref{eq:variational-criterion}, it is crucial to be able to compute exactly the potential \eqref{eq:opt_pot_approx}. The optimal potentials we have proposed for both linear inverse problems with Gaussian noise \eqref{eq:intermediatepot-lininvp} and low-count Poisson denoising \eqref{eq:intermediatepot-poisson} (for $\ell > 0$) are available in a closed form:
\begin{equation} 
   \label{eq:int_pot}
   \iapot{j}{\ell, \star}{x_j} = \normpdf(\sqrt{\acp{\ell}}\, \obs, A \muDDIM{\ell|j}(x_j), \Sigma^\ell_j) \eqsp,
\end{equation}
where 
\begin{align} 
   \Sigma^\ell _j & = \sigma^2 _{\ell|j} A A^\intercal + \stdobs^2 \Id_\dimobs \eqsp,  \tag{Linear inverse problem} \\
   \Sigma^\ell _j & = \sigma^2 _{\ell|j} A A^\intercal + \sqrt{\acp{\ell}}\mathrm{diag}(\obs) \eqsp, \quad \ell > 0 \eqsp, \tag{Poisson-shot noise}
\end{align}
$\muDDIM{\ell|j}(x_j) \eqdef \muDDIM{\ell|0, j}(\predx{j}(x_j), x_j)$, and $\sigma^2 _{\ell | j}$ is defined in \eqref{eq:mean_std_bridge}. As a result, the first term of the variational criterion $\mathcal{L}(\vmu_j, \vlogstd_j; x_{j+1})$ in \eqref{eq:variational-criterion}, given by 
$$ 
\pE \big[ \log \iapot{j}{\ell, \star}{\vmu_j + \rme^{\vlogstd_j / 2}  Z}\big] = \int \log \iapot{j}{\ell, \star}{x_j} \vbwker{j|j+1}{x_{j+1}
}{x_j} \, \rmd x_j,
$$
can be computed exactly. Indeed, as $\muDDIM{\ell|j}$ is a linear function of $x_j$, this expectation is simply that of a quadratic function under a Gaussian density, given by 
\begin{equation*} 
   \pE \big[ \log \iapot{j}{\ell, \star}{\vmu_j + \rme^{\vlogstd_j / 2}  Z}\big] = - \frac{1}{2} \bigg[ \big\| \sqrt{\acp{\ell}}\, \obs - A \muDDIM{\ell|j}(\vmu_{j})\big\|^2 _{(\Sigma^{\ell} _j)^{-1}} + \mathrm{tr}\big((\Sigma^\ell _j)^{-1} \mathrm{diag}(\rme^{\vlogstd_j})\big) \bigg] + C \eqsp.
\end{equation*}
Hence, for these cases, \eqref{eq:variational-criterion} has a closed-form expression. However, it involves the computation of an inverse matrix which, for many problems, can be prohibitively expensive. To avoid this inversion, we instead optimize a \emph{biased} estimate of $\mathcal{L}_j(\vmu_j, \vlogstd_j; x_{j+1})$ obtained by drawing two noise vectors $(Z, Z^\prime) \sim \normpdf(\zero_\dimx, \Id_\dimx)$ and setting 
\begin{multline} 
   \label{eq:biased-estimate}
   \widetilde{\mathcal{L}}_j(\vmu_j, \vlogstd_j; x_{j+1}) \eqdef -  \log \pot{\ell}{}{\muDDIM{\ell|j}(\vmu_j + \rme^{\vlogstd_j / 2} Z) + \sigma^2 _{\ell|j} Z^\prime} \\
   + \frac{\| \vmu_j - \muDDIM{j|j+1}(x_{j+1})\|^2}{2 \sigma^2 _{j|j+1}} - \frac{1}{2}\sum_{i = 1}^\dimx \left( \vlogstd_{j, i} - \frac{\rme^{\vlogstd_{j, i}}}{\sigma^2 _{j|j+1}} \right) \eqsp.
\end{multline}
This estimator is computable for any choice choice of potential and we have found in practice that it is sufficient to ensure good enough performance for our algorithm. Regarding the tamed unadjusted Langevin steps, we use the same biased estimate when the matrix inversions are expensive to compute; \emph{i.e.} at each Langevin step, we approximate $G^\ell_\gamma(\tilde{X}_j)$ by 
\begin{equation} 
   \label{eq:grad-estimate}
      \widetilde{G}^\ell _\gamma(\tilde{X}_j) \eqdef \frac{\nabla_{x_\ell+1} \log \pot{\ell}{}{\muDDIM{\ell|\ell+1}(x_{\ell+1}) + \sigma _{\ell|\ell+1} \tilde{Z}_\ell} + \score{\ell+1}(x_\ell+1)}{\| \nabla_{x_\ell+1} \log \pot{\ell}{}{\muDDIM{\ell|\ell+1}(x_{\ell+1}) + \sigma _{\ell|\ell+1} \tilde{Z}_\ell} + \score{\ell+1}(x_\ell+1) \|} \eqsp.
\end{equation}
\begin{algorithm}[H]
    \caption{\algoname\ (\algo)}
    \begin{algorithmic}
       \STATE {\bfseries Input:}  timesteps $(\tstep{\ell})_{\ell = 0} ^\tauks$, learning-rate $\zeta$,  
       numbers $K$ and $M$ of gradient and Langevin steps, respectively.
       \\
        Initial sample $X_\tstep{\tauks} \sim \gauss(\zero_\dimx, \Id_\dimx)$;
        \FOR{$\ell=\tauks - 1$ {\bfseries to} $0$}
            \STATE Draw $Z \sim \normpdf(\zero_\dimx, \Id_\dimx)$ and compute $\widetilde{G}^\ell _\gamma(X^\ell _\tstep{\ell+1})$ \eqref{eq:grad-estimate};
            \STATE $X^\ell _\tstep{\ell+1} \leftarrow X_\tstep{\ell+1}$
            \FOR{$i=1$ {\bfseries to} $M$}
                \STATE $Z \sim \normpdf(\zero_\dimx, \Id_\dimx)$;
                \STATE $X^\ell _\tstep{\ell+1} \leftarrow X^\ell _\tstep{\ell+1} + \gamma \widetilde{G}^\ell _\gamma(X^\ell _\tstep{\ell+1}) + \sqrt{2 \gamma} Z$;
            \ENDFOR
            \FOR{$j = \tstep{\ell+1} - 1$ {\bfseries to} $\tstep{\ell}$}
        \STATE $\vmu _j \leftarrow \muDDIM{j|j+1}(X^{\ell} _{j+1})$; $\quad \vlogstd _j \leftarrow \log \sigma^2 _{j|j+1} \cdot \mathbf{1}_\dimx$;
        \FOR{$r = 1$ {\bfseries to} $K$}
            \STATE Draw $(Z, Z^\prime) \sim \normpdf(\zero_\dimx, \Id_\dimx)$ and compute $\widetilde{\mathcal{L}}_j(\vmu_j, \vlogstd_j; X^\ell _{j+1})$ \eqref{eq:biased-estimate};
            \STATE $\begin{bmatrix} \vmu_j \\
               \vlogstd_j \end{bmatrix} \leftarrow \begin{bmatrix} \vmu_j \\
                  \vlogstd_j \end{bmatrix} - \zeta \|\nabla_{\vmu_j, \vlogstd_j} \widetilde{\mathcal{L}}_j(\vmu_j, \vlogstd_j; X^\ell _{j+1})\|^{-1} \nabla_{\vmu_j, \vlogstd_j} \widetilde{\mathcal{L}}_j(\vmu_j, \vlogstd_j; X^\ell _{j+1})$
        \ENDFOR
        \STATE $\varepsilon \sim \gauss(\zero_\dimx, \Id_\dimx)$
        \STATE  $X^{\ell} _j \leftarrow \vmu_j + \mathrm{diag}(\rme^{\vlogstd_j / 2})\varepsilon$;
        \vspace{0.02cm}
            \ENDFOR
        \STATE $X _\tstep{\ell} \leftarrow X^{\ell} _\tstep{\ell}$;
        \ENDFOR
    \end{algorithmic}
    \label{algo:algo}
   \end{algorithm}

   \subsection{Proof of \Cref{prop:w2-informal}}
   \label{apdx:proofw2}
   For all $k \in \intset{0}{n-1}$ we denote by $\rbwker{k|k+1}{x_{k+1}}{x_k}$ the \emph{exact} backward kernel which satisfies 
   \begin{equation}
      \label{eq:time-reversal}
       \fwmarg{k+1}{x_{k+1}} \rbwker{k|k+1}{x_{k+1}}{x_k} = \fwmarg{k}{x_{k}} \rbwker{k+1|k}{x_{k}}{x_{k+1}} \eqsp.
   \end{equation}
   Note that the backward kernels $\abwker{k|k+1}{}{}$ are to be understood as Gaussian approximations of the true backward kernels $\rbwker{k|k+1}{}{}$. Below we give a complete statement of the proposition and provide a proof. 
   \begin{proposition}
      \label{prop:w2}
     Let $k \in \intset{1}{n}$. Assume that $\rbwker{k|k+1}{x_{k+1}}{x_k} = \abwker{k|k+1}{x_{k+1}}{x_k}$ for all $(x_k, x_{k+1}) \in (\rset^\dimx)^2$. For all $\ell \in \intset{0}{k-1}$ and $x_k \in \rset^\dimx$,
      \begin{equation*}
      W_2(\hbwker{\ell|k}{x_k}{\cdot}, \abwker{\ell|k}{x_k}{\cdot})
      \leq \frac{\sqrt{\acp{\ell}}(1 - \acp{k} / \acp{\ell})}{(1 - \acp{k})}  W_2(\hbwker{0|k}{x_k}{\cdot}, \abwker{0|k}{x_k}{\cdot})\eqsp.
      \end{equation*}
  \end{proposition}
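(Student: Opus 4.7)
The plan is a one-line coupling argument through the shared noise in the bridge kernel, once both $\abwker{\ell|k}{x_k}{\cdot}$ and $\hbwker{\ell|k}{x_k}{\cdot}$ are rewritten in a common mixture form.

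First I would verify that, under the assumption $\rbwker{k|k+1}=\abwker{k|k+1}$, the backward kernel $\abwker{\ell|k}{x_k}{x_\ell}$ admits the mixture representation
\[
\abwker{\ell|k}{x_k}{x_\ell}=\int \fwtrans{\ell|0,k}{x_0,x_k}{x_\ell}\,\abwker{0|k}{x_k}{x_0}\,\rmd x_0.
\]
This uses the time-reversal identity \eqref{eq:time-reversal} telescoped from $k$ down to $\ell$, together with the Markov property of the forward process which gives $\rbwker{\ell|0,k}=\fwtrans{\ell|0,k}$ (both are the conditional law of $X_\ell$ given $(X_0,X_k)$ under the forward chain). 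So $\abwker{\ell|k}{x_k}{\cdot}$ and $\hbwker{\ell|k}{x_k}{\cdot}$ are pushforwards of $\abwker{0|k}{x_k}{\cdot}$ and $\hbwker{0|k}{x_k}{\cdot}$ under the \emph{same} Markov kernel $\fwtrans{\ell|0,k}{\cdot,x_k}{\cdot}$, which is the whole point.

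Next, let $(X_0,\tilde X_0)$ be an optimal $W_2$-coupling of $\hbwker{0|k}{x_k}{\cdot}$ and $\abwker{0|k}{x_k}{\cdot}$, and let $Z\sim\normpdf(\zero_\dimx,\Id_\dimx)$ be independent of it. I would then synchronously couple by sharing the Brownian noise in the bridge:
\[
X_\ell \eqdef \muDDIM{\ell|0,k}(X_0,x_k)+\sigma_{\ell|k}Z, \qquad \tilde X_\ell \eqdef \muDDIM{\ell|0,k}(\tilde X_0,x_k)+\sigma_{\ell|k}Z.
\]
By construction $X_\ell\sim\hbwker{\ell|k}{x_k}{\cdot}$ and $\tilde X_\ell\sim\abwker{\ell|k}{x_k}{\cdot}$, so the pair is an admissible coupling. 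Using the explicit form of $\muDDIM{\ell|0,k}$ in \eqref{eq:mean_std_bridge}, the shared noise and the shared $x_k$-term cancel in the difference, giving
\[
X_\ell-\tilde X_\ell \;=\; \frac{\sqrt{\acp{\ell}}\bigl(1-\acp{k}/\acp{\ell}\bigr)}{1-\acp{k}}\,(X_0-\tilde X_0).
\]

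Finally I would square, take expectations, use that the coupling of $(X_0,\tilde X_0)$ was optimal, and take square roots to conclude
\[
W_2\bigl(\hbwker{\ell|k}{x_k}{\cdot},\abwker{\ell|k}{x_k}{\cdot}\bigr)\le \frac{\sqrt{\acp{\ell}}\bigl(1-\acp{k}/\acp{\ell}\bigr)}{1-\acp{k}}\,W_2\bigl(\hbwker{0|k}{x_k}{\cdot},\abwker{0|k}{x_k}{\cdot}\bigr).
\]
The edge case $\ell=0$ is a sanity check: $\acp{0}=1$ makes the prefactor equal to $1$ and the bridge degenerates to $\delta_{x_0}$, so the inequality is an equality. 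There is no real obstacle here; the only mild subtlety is the mixture-representation step for $\abwker{\ell|k}$, which hinges on the perfect-denoiser hypothesis and the Markov structure of the forward diffusion. After that, the synchronous coupling through the bridge kernel is what produces the explicit contraction constant.
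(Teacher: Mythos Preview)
Your proof is correct and follows essentially the same route as the paper: first establish the mixture representation of $\abwker{\ell|k}{x_k}{\cdot}$ via the bridge kernel (using the time-reversal identity and the perfect-denoiser hypothesis), then build a synchronous coupling by sharing the Gaussian noise $Z$ in $\fwtrans{\ell|0,k}{\cdot,x_k}{\cdot}$, so that only the $x_0$-term survives in the difference with the explicit prefactor. The only cosmetic difference is that you pick an optimal $W_2$-coupling of $\hbwker{0|k}{x_k}{\cdot}$ and $\abwker{0|k}{x_k}{\cdot}$ upfront, whereas the paper takes an arbitrary coupling and passes to the infimum at the end; both are equivalent.
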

  \begin{proof}[Proof of \Cref{prop:w2}]
      Under the assumptions of the proposition, we have, for all $m > \ell$,
      $$
      \abwker{\ell|k}{x_k}{x_\ell} = \rbwker{\ell|k}{x_k}{x_\ell} = \int \rbwker{\ell|0, k}{x_0, x_k}{x_\ell} \, \rbwker{0|k}{x_k}{\rmd x_0}  \eqsp.
      $$ Indeed, by definition of the backward kernel $\rbwker{0|k}{x_k}{x_0}$ and \eqref{eq:time-reversal}, it holds  that
      \begin{align*}
          \int \rbwker{\ell|0, k}{x_0, x_k}{x_\ell} \rbwker{0|k}{x_k}{x_0}\, \rmd x_0 & = \int \frac{\rbwker{\ell|0}{x_0}{x_\ell} \rbwker{k|\ell}{x_\ell}{x_k}}{\rbwker{k|0}{x_0}{x_k}} \frac{\fwmarg{0}{x_0} \rbwker{k|0}{x_0}{x_k}}{\fwmarg{k}{x_k}} \, \rmd x_0 \\
          & = \frac{\rbwker{k|\ell}{x_\ell}{x_k}}{\fwmarg{k}{x_k}} \int \fwmarg{0}{x_0} \rbwker{\ell|0}{x_0}{\rmd x_\ell} \, \rmd x_0 \\
          & = \rbwker{\ell|k}{x_k}{x_\ell} \eqsp.
      \end{align*}
      As a result, we have that
      \begin{align*}
          \abwker{\ell|k}{x_k}{x_\ell} & = \int \rbwker{\ell|0, k}{x_0, x_k}{\rmd x_\ell} \rbwker{0|k}{x_k}{x_0} \, \rmd x_0 \eqsp,\\
          \hbwker{\ell|k}{x_k}{x_\ell} &= \int \rbwker{\ell|0, k}{x_0, x_k}{\rmd x_\ell} \hbwker{0|k}{x_k}{x_0} \, \rmd x_0 \eqsp,
      \end{align*}
      where, by definition, $\hbwker{0|k}{x_k}{\cdot}$ is a Gaussian approximation of $\rbwker{0|k}{x_k}{\cdot}$ as defined in the main paper.
  
      Next, let $\Pi_{0|k}(\cdot | x_k)$ denote a coupling of $\rbwker{0|k}{x_k}{\cdot}$ and $\hbwker{0|k}{x_k}{\cdot}$, \emph{i.e.}, for all $A \in \mathcal{B}(\rset^\dimx)$,
      \begin{align*}
          \int \1_A(x_0) \1_{\rset^\dimx}(\hat{x}_0) \, \Pi_{0|k}(x_0, \hat{x}_0 | x_k)\, \rmd x_0 \rmd \hat{x}_0 & = \int \1_A(x_0) \, \rbwker{0|k}{x_k}{x_0} \, \rmd x_0 \eqsp,\\
          \int \1_{\rset^\dimx}(x_0) \1_{A}(\hat{x}_0) \, \Pi_{0|k}(x_0, \hat{x}_0 | x_k) \, \rmd x_0 \rmd \hat{x}_0& = \int \1_A(\hat{x}_0) \, \hbwker{0|k}{x_k}{\hat{x}_0} \, \rmd \hat{x}_0 \eqsp.
      \end{align*}
      Consider then the random variables
          \begin{align*}
              X_{\ell|k} & = \frac{\sqrt{\acp{\ell}{}}(1 - \acp{k} / \acp{\ell})}{1 - \acp{k}} X_{0|k} + \frac{\sqrt{\acp{k} / \acp{\ell}} (1 - \acp{\ell})}{1 - \acp{k}} x_k + \frac{\sqrt{(1 - \acp{\ell})(1 - \acp{k} / \acp{\ell})}}{\sqrt{1 - \acp{k}}} Z \eqsp, \\
              \hat{X} _{s|k} & = \frac{\sqrt{\acp{\ell}{}}(1 - \acp{k} / \acp{\ell})}{1 - \acp{k}} \hat{X}_{0|k} + \frac{\sqrt{\acp{k} / \acp{\ell}} (1 - \acp{\ell})}{1 - \acp{k}} x_k + \frac{\sqrt{(1 - \acp{\ell})(1 - \acp{k} / \acp{\ell})}}{\sqrt{1 - \acp{k}}} Z \eqsp,
          \end{align*}
          where $(X_{0|k}, \hat{X}_{0|k}) \sim \Pi_{0|k}(\cdot | x_k)$ and $Z \sim \gauss(\zero_\dimx, \Id_\dimx)$. Then $(X_{\ell|k}, \hat{X}_{\ell|k})$ is distributed according to a coupling of  $\hbwker{\ell|k}{x_k}{\cdot}$ and $\abwker{\ell|k}{x_k}{\cdot}$, and consequently
      \begin{align*}
          W_2(\hbwker{\ell|k}{x_k}{\cdot}, \abwker{\ell|k}{x_k}{\cdot}) & \leq \pE \left[ \| X_{\ell|k} - \hat{X}_{\ell|k} \|^2 \right]^{1/2} \\
          & \leq \frac{\sqrt{\acp{\ell}}(1 - \acp{k} / \acp{\ell})}{(1 - \acp{k})} \pE \left[ \| X_{0|k} - \hat{X}_{0|k} \|^2 \right]^{1/2} \eqsp.
      \end{align*}
      The result is obtained by taking the infinimum of the rhs with respect to all couplings of $\rbwker{0|k}{x_k}{\cdot}$ and $\hbwker{0|k}{x_k}{\cdot}$.
  \end{proof} 

\section{Discussion of related methods}
\label{apdx:related}
In this section we discuss in more details existing works that bear some similarities with \algo. 

\paragraph{SMC based approaches.} The \mcgdiff, the Twisted Diffusion sampler (TDS) of \cite{wu2023practical} using the FK representation \eqref{eq:FK_posterior}. \mcgdiff\ is specific to linear inverse problems and the potentials used are $\pot{k}{}{x_k} = \normpdf(\sqrt{\acp{k}} \, \obs; A x_k, (1 - \acp{k}) \Id_\dimobs)$ when $\stdobs = 0$. TDS applies to any potential $\pot{0}{}{}$ and relies on the \DPS\ approximation for its potentials; i.e. $\smash{\pot{k}{}{x_k} = \pot{0}{}{\predx{k}(x_k)}}$. In either cases, a particle approximation of the posterior of interest $\cbwmarg{0}{}$ is obtained using the Auxiliary Particle filter framework \cite{pitt1999filtering}. \cite{dou2024diffusion} also use particle filters for the posterior distribution; the potentials used are $\smash{\pot{k}{}{x_k} = \normpdf(\obs_k; Ax_k, \acp{k} \stdobs^2 \Id_\dimx)}$ where $(\obs_k)_{k = 0} ^n$, with $\obs_0 = \obs$ is a sequence of observations sampled according to an auto-regressive process; see \citep[Equation 7]{dou2024diffusion}. The posterior is thus viewed as approximately the time $0$ marginal of a Hidden Markov model with transition $\smash{\abwker{k|k+1}{}{}}$ and observation likelihood $\pot{k}{}{}$, which is different from the FK representation \eqref{eq:FK_posterior}. Our choice of intermediate potentials for linear inverse problems with Gaussian noise differs from that of \mcgdiff\ by the standard deviation of the observation model, which we set to be $\stdobs$. A major difference of \algo\ with these works lies in the fact that we do not rely on particle filters, thus avoiding the collapse in very large dimensions. As we have shown in the experimental section \algo\ can achieve comparable performance to \mcgdiff\ in low dimensions, see \Cref{table:gm} while also being efficient in very large dimensions, see \Cref{table:lin_invp}. A second and major difference is that we have derived potentials for both the JPEG dequantization and Poisson-shot denoising tasks, which may be used to extend \mcgdiff\ and FPS-SMC \cite{dou2024diffusion} to these problems. 

\paragraph{RedDiff.} In this work we have also proposed to use Gaussian variational inference to approximate the intractable backward transition $\smash{\pibwker{k|k+1}{}{}[\ell]}$. One particularity of our approach is that we do not use amortized variational inference \cite{kingma2013auto} and instead optimize the variational distribution at each step of the diffusion. A similar approach is used in \reddiff\, \cite{mardani2024a} but in a different way. Indeed, the authors use a \emph{non-amortized} Gaussian variational approximation for the posterior $\cbwmarg{0}{}$, meaning that in order to draw one sample from \reddiff, several steps of optimization are performed on a score-matching-like loss. Interestingly, this approach does not require differentiating through the denoising network and is thus faster and more memory efficient. However, we found that this comes at the cost of performance as can be seen in \Cref{table:gm}, \ref{table:lin_invp} and \ref{table:jpeg-dequantization}. 

\paragraph{RePaint.} The \textsc{RePaint} algorithm \cite{lugmayr2022repaint} has been proposed to sample from the posterior of noiseless linear inverse problems. As we now show, it can be viewed as a specific case of our framework, though employing a different sampling method and intermediate potentials. Indeed, setting $\tauks = \last$ (and hence $k_\ell = \ell$) and using a Gibbs sampler to sample from the consecutive distributions allows us to recover a generalization of the \textsc{RePaint} algorithm \cite{lugmayr2022repaint}. First, note that \eqref{eq:interm_post} is the marginal of the joint distribution 
\begin{equation}
    \label{eq:joint_distr}
\cbwmarg{\ell, \ell+1}{x_{\ell},x_{\ell + 1}} \propto \pot{\ell}{}{x_\ell} \abwker{\ell|\ell+1}{x_{\ell+1}}{x_\ell} \bwmarg{\ell+1}{x_{\ell+1}} \eqsp.
\end{equation}
Hence, we can draw approximate samples from $\cbwmarg{\ell}{}$ using a Gibbs sampler targeting $\cbwmarg{\ell,\ell+1}{}$. A Gibbs sampler \cite{gelfand2000gibbs} constructs a Markov chain $(X^\ell _k, X^{\ell+1} _k)_k$ targeting $\cbwmarg{\ell,\ell+1}{}$ by alternating the sampling of the two conditional distributions of \eqref{eq:joint_distr}
\begin{align*} 
    \pibwker{\ell|\ell+1}{x_{\ell+1}}{x_\ell} & \propto \pot{\ell}{}{x_\ell} \abwker{\ell|\ell+1}{x_{\ell+1}}{x_\ell} \eqsp, \\
    \pibwker{\ell+1|\ell}{x_{\ell}}{x_{\ell+1}} & \propto \abwker{\ell|\ell+1}{x_{\ell+1}}{x_\ell} \bwmarg{\ell+1}{x_{\ell+1}} \eqsp. 
\end{align*}
Then, given $\smash{(X^\ell _{k}, X^{\ell+1} _{k})}$, $\smash{(X^\ell _{k+1}, X^{\ell+1} _{k+1})}$ is obtained by drawing $X^\ell _{k+1}$ from $\smash{\pibwker{\ell|\ell+1}{X^{\ell+1} _{k}}{\cdot}}$ and $X^{\ell+1} _{k+1}$ from $\smash{\pibwker{\ell+1|\ell}{X^\ell _{k+1}}{\cdot}}$. Now let $\pot{\ell}{}{x_\ell} = \normpdf(\obs_\ell; Ax_\ell, \sigma^2 _{\obs, \ell} \Id_\dimobs)$ where $\sigma _{\obs, \ell} > 0$ and $\obs_\ell \in \rset^\dimobs$. Then, the first conditional $\smash{\pibwker{\ell|\ell+1}{}{}}$ can be computed exactly and is given by 
\begin{equation}
    \label{eq:cbwker_taul_giv_taulnext}
\pibwker{\ell|\ell+1}{x_{\ell+1}}{x_\ell} = \normpdf\left(x_\ell; \Sigma _{\ell|\ell+1} \left(   \frac{A^\intercal \obs_\ell}{\sigma^2 _{\obs, \ell}} + \frac{\muDDIM{\ell|\ell+1}(x_\ell+1)}{\sigma^2 _{\ell|\ell+1}}  \right) ,\Sigma _{\ell|\ell+1}\right) \eqsp,
\end{equation}
where $\Sigma _{\ell|\ell+1} \eqdef \big( \sigma^{-2} _{\ell|\ell+1} \Id_\dimx + \sigma^{-2} _{\obs, \ell} \bfA^\intercal \bfA \big)^{-1}$.
As to the second conditional $\pibwker{\ell+1|\ell}{x_{\ell}}{x_{\ell+1}}$ it cannot be sampled exactly and \citet{lugmayr2022repaint} approximate it with the forward kernel $\fwtrans{\ell+1|\ell}{}{}$. This approximation is equivalent to assuming that 
$$ 
    \bwmarg{\ell+1}{x_{\ell+1}} \abwker{\ell|\ell+1}{x_\ell}{x_{\ell+1}} = \bwmarg{\ell}{x_\ell} \fwtrans{\ell+1|\ell}{x_\ell}{x_{\ell+1}} \eqsp,
$$ 
which holds true if one assumes that the backward transition is learned perfectly, i.e. $\abwker{\ell|\ell+1}{x_{\ell+1}}{x_\ell} = \fwtrans{\ell|\ell+1}{x_{\ell+1}}{x_\ell}$. We now proceed to show that \textsc{RePaint} is a particular case of the Gibbs sampler we have just described. For the sake of simplicity, we assume that $A$ is rectangular unit diagonal, \emph{i.e.}, that we only observe the first $\dimobs$ coordinates of a sample from the prior. We denote by $\overline{X}$ the first $\dimobs$ coordinates of $X \in \rset^\dimx$ and by $\underline{X}$ the remaining ones. Then sampling $\smash{X^\ell _{k+1} \sim \pibwker{\ell|\ell+1}{X^{\ell+1} _k}{\cdot}}$ \eqref{eq:cbwker_taul_giv_taulnext} is equivalent to setting $X^{\ell} _{k+1} = [ {\overline{X}} ^{\ell} _{k+1},  \underline{X}^{\ell} _{k+1}]$, where
\begin{align*}
    \overline{X} ^{\ell} _{k+1} & = \frac{\sigma^2 _{\ell|\ell+1}}{\sigma^2 _{\obs, \ell} + \sigma^2 _{\ell|\ell+1}} \obs_\ell + \frac{\sigma^2 _{\obs, \ell}}{\sigma^2 _{\obs, \ell} + \sigma^2 _{\ell|\ell+1}} \overline{\muDDIM{\ell|\ell+1}(X^{\ell+1} _{k})} +  \frac{\sigma^2 _{\obs, \ell} \sigma^2 _{\ell|\ell+1}}{\sigma^2 _{\obs, \ell} + \sigma^2 _{\ell|\ell+1}} \overline{Z}_{k+1} \eqsp,\\
     \underline{X}^{\ell} _{k+1} & =  \underline{\muDDIM{\ell|\ell+1}(X^{\ell+1} _{k})} +  \sigma^2 _{\ell|\ell+1} \underline{Z}_{k+1}.
\end{align*}
In the specific case of an inverse problem with $\stdobs = 0$, we can, setting for all $\ell$ $\sigma _{\obs, \ell} = 0$ and $\obs_\ell = \sqrt{\acp{\ell}} \, \obs + \sqrt{1 - \acp{\ell}} Z_\ell$ and $Z_\ell$ are i.i.d. standard Gaussian samples allows us to recover Algorithm 1 in \citet{lugmayr2022repaint}.
\section{Experiments}
\subsection{Implementation details}
In this section we provide the global implementation details for each algorithm. We provide the specific parameters (when needed) used for each experiment (Gaussian mixture, image restoration and trajectory inpainting) in the dedicated sections below. 
\label{apdx:implementation_details}
\paragraph{DCPS.} For all the experiments we implement \Cref{algo:algo}. We use the same parameters $K = 2$, $L = 3$ and $\zeta = 1$ for all the experiments. For the number of Langevin steps, we set it to $M = 50$ and $M = 500$ (respectively) for the Gaussian mixture experiment and $M = 5$ for the imaging and trajectory inpainting experiments. 

\paragraph{DDRM.} We have used the official implementation\footnote{\url{https://github.com/bahjat-kawar/ddrm}} and used the recommended parameters in the original paper.  We use 200 steps for DDRM and found that it works better than when we used 1000 steps.

\paragraph{DPS.} We have implemented both Algorithm 1 (for linear inverse problems) and Algorithm 2 (for Poisson-shot restoration) given in \cite{chung2023diffusion}. In all the experiments we run \DPS\ with $1000$ Diffusion steps. 

\paragraph{RedDiff.} For RedDiff, we have used the publicly available implementation\footnote{\url{https://github.com/NVlabs/RED-diff}}.
We have empirically found that RedDiff works best in the low observation standard deviation regime and produces spatially coherent reconstructions in the larger noise regime but struggles with getting rid of the noise as evidenced by the large increase in LPIPS values in \Cref{table:lin_invp}. Note also that it is not clear how the parameters of the algorithm depend on the inverse problem standard deviation; indeed, looking at Algorithm 1 and then Appendix C.2 where the authors consider a noisy inverse problem\footnote{\url{https://openreview.net/pdf?id=1YO4EE3SPB}} there seems to be no clear dependence of $\lambda$ on $\sigma_v$ ($\sigma_\obs$ with our notations). In fact the authors use $\lambda = 0.25$ similarly to the noiseless experiments in the main paper and we believe that the tuning is performed only on the initial step-size of Adam. As a result, for the experiments with $\stdobs = 0.3$, we have tuned it using a grid-search in $[0.1, 0.25]$ and retained $0.1$.

\paragraph{$\Pi$GDM.} Regarding \PGDM\ \cite{song2022pseudoinverse}, note that there is no publicly available implementation and we have thus implemented the noisy version of \citep[Algorithm 1]{song2022pseudoinverse} in the original paper. However, we did not manage to obtain appropriate results and found it to be quite unstable. We have further investigated the issue and found that \PGDM\ is implemented in the github repository of RedDiff\footnote{\url{https://github.com/NVlabs/RED-diff}}, which is by the same authors. We have noted that it has a slight difference with Algorithm 1 of the \PGDM\ paper; the gradient term, coined $g$ in \citep[Algorithm 1]{song2022pseudoinverse}, is multiplied by $\sqrt{\alpha_{t-1} \alpha_{t}}$ instead of simply $\sqrt{\alpha_t}$. We have found that this stabilizes the algorithm significantly for the linear inverse problem experiment. We use the same rescaling for the Gaussian mixture and trajectory inpainting experiment. However, even with this modification to the algorith we found that \PGDM\ does not perform well when the noise standard deviation is large; see \Cref{table:lin_invp}. For the JPEG experiment we do not use this rescaling as we found that the algorithm remains stable.

\paragraph{MCGDiff.} For MCGDiff we have used the official implementation\footnote{\url{https://github.com/gabrielvc/mcg_diff}} with $N=32$ particles for the imaging experiments. There are no further tuning parameters as far as we can tell.

\paragraph*{DIFFPIR}
We implemented \cite[Algorithm 1]{zhu2023diffpir} and use the hyperparameters recommended in the official, released version\footnote{\url{https://github.com/yuanzhi-zhu/DiffPIR}}.

\paragraph*{DDNM.}
We adapted the implementation in the released code\footnote{\url{https://github.com/wyhuai/DDNM}} to our code base.

\paragraph*{SDA.}
We implement the posterior sampling algorithm by combining \cite[Algo 3 and 4 in Appendix C]{rozet2023sda}.
In the experiments, we use two Langevin corrections steps and found that $\gamma=0.1$ works well across problems for the diagonal approximation the same as $\tau=0.1$ for the Langevin correction steps size.

\paragraph*{FPS}
We implement \cite[Algorithm 2]{dou2024diffusion} provided in the appendix.

\subsection{Gaussian mixtures}
For a given dimension $\dimx$, we consider $\datadistr$ a mixture of $25$ Gaussian random variables.
The means of the Gaussian components of the mixture are $(\mathbf{m}_{i})_{i = 1} ^{25} \eqdef \{ (8i, 8j, \cdots, 8i, 8j) \in \rset^\dimx:\, (i,j) \in \{-2, -1, 0, 1, 2\}^2 \}$. The covariance of each component is identity.
The mixture (unnormalized) weights $w_{i, j}$ are independently drawn from a Dirichlet distribution.
\paragraph{Metrics.} To assess the performance of each algorithm we draw 2000 samples and compare against 2000 samples from the true posterior distribution using the Sliced Wasserstein distance by averaging over $10^4$ slices. In \Cref{table:gm} we report the average SW and the $95\%$ confidence interval over 30 seeds. We found \DPS\ and \PGDM\ to be sometimes unstable, resulting in \texttt{NaN} values. To account for these unstabilities when computing the average SW distance, we replace \texttt{NaN} with 7 which is the typical value obtained when a stable algorithm fails to sample from the posterior. 
\paragraph{Parameters.} For \DPS\ we use $\zeta_m = 0.1 / \|\obs - \bfA \predxs{m}(x_m)\|$ at step $m$ of the Diffusion. As to \algo\ we use $\gamma = 10^{-2}$ for the Langevin step-size. 
\paragraph{Denoisers.} Note that the loss \eqref{eq:denoising_objective} can be written as
\begin{align*}
        \lefteqn{\sum_{t = 1}^T  w_t \pE \left[ \| \epsilon_t - \prednoise{t}(\sqrt{\acp{t}} X_0 + \sqrt{1 - \acp{t}} \epsilon_t ) \|^2 \right]} \hspace{30mm} \\ & = \sum_{t = 1}^T  \frac{w_t}{1 - \acp{t}} \pE \left[ \| \sqrt{1 - \acp{t}} \epsilon_t - \sqrt{1 - \acp{t}} \prednoise{t}(\sqrt{\acp{t}} X_0 + \sqrt{1 - \acp{t}} \epsilon_t ) \|^2 \right] \\
        & = \sum_{t = 1}^T  \frac{w_t}{1 - \acp{t}} \pE \left[ \| X_t - \sqrt{\acp{t}} X_0 - \sqrt{1 - \acp{t}} \prednoise{t}(X_t) \|^2 \right] \\
        & = \sum_{t = 1}^T  \frac{w_t \acp{t}}{1 - \acp{t}} \pE \left[ \left\| X_0 - \frac{X_t - \sqrt{1 - \acp{t}} \prednoise{t}(X_t)}{\sqrt{\acp{t}}} \right\|^2 \right] \eqsp.
\end{align*}
Hence the minimizer is 
$$ 
    \epsilon^{\param^\star} _t(x_t) = \frac{x_t - \sqrt{\acp{t}}\, \pE [X_0 | X_t = x_t]}{\sqrt{1 - \acp{t}}}\eqsp,
$$ 
which yields $\predxs{t} = \pE [X_0 | X_t = \cdot]$. Next, by Tweedie's formula we have that
$$
\predxs{t}(x_t) = \frac{x_t + (1 - \acp{t}) \nabla_x \log \fwmarg{t}{x_t}}{\sqrt{\acp{t}}} \eqsp.
$$
Hence, since $\fwmarg{\mathrm{data}}{}$ is a mixture of Gaussians, $\fwmarg{t}{}$ is also a mixture of Gaussians with means $(\sqrt{\acp{t}} \mathbf{m}_{i})_{i = 1} ^{25}$ and unit covariances.
Therefore, $\nabla_x \log \fwmarg{t}{x_t}$ and hence $\predxs{t}(x_t)$ can be computed using automatic differentiation libraries.
\paragraph{Measurement model.}
For a pair of dimensions $(\dimx, \dimobs)$ the measurement model $(\obs, \bfA, \stdobs)$ is drawn as follows: the elements $\dimx \times \dimobs$ elements of the matrix are drawn i.i.d. from a standard Gaussian distribution, then $\stdobs$ is drawn uniformly in $[0,1]$ and finally we draw $x^\star \sim \datadistr$ and $\varepsilon \sim \gauss(\zero_\dimobs, \Id_\dimobs)$ and set $\obs = \bfA x^\star + \stdobs \varepsilon$.
\paragraph{Posterior.}
Having drawn both $\datadistr$ and $(\obs, \bfA, \stdobs)$, the posterior can be computed exactly using standard Gaussian conjugation formulas \citep[Eq. 2.116]{bishop2006pattern} and hence the posterior is a Gaussian mixture where all the components have the same covariance matrix $\Sigma \eqdef \left(\Id_{\dimx} + \stdobs^{-2} \m{A}^T \m{A}\right)^{-1}$ and means and weights given by
\begin{align*}
    \tilde{\mathbf{m}}_i  & \eqdef \Sigma \left(A^\intercal \obs / \stdobs^2 + \mathbf{m}_{i} \right) \eqsp,\\
    \tilde{w}_i &\propto w_i \normpdf(\obs;\bfA \mathbf{m}_{i}, \stdobs^2 \Id_{\dimx} + \bfA \bfA^\intercal)\eqsp.
\end{align*}
\label{subsec:gm}

\subsection{Imaging experiments}
\begin{table}
    \centering
    \captionsetup{font=small}
    \caption{Mean LPIPS value on low count Poisson restoration.}
    \resizebox{.3\textwidth}{!}{
    \begin{tabular}{@{}cccc@{}}
    \toprule
    Dataset  & Task & \algo & \DPS  \\
    \midrule
    \multirow{2}{*}{\centering \ffhq}
    & Denoising & \bf{0.07} & 0.12 \\
    & SR $4 \times$ & \bf{0.17} & 0.31 \\
    \midrule
    \multirow{2}{*}{\centering \imagenet}
    & Denoising & \bf{0.17} & 0.24 \\
    & SR $4 \times$ & \bf{0.36} & 0.80  \\
    \midrule
    \end{tabular}
    }
    \vspace{-.4cm}
    \label{table:poisson-denoising}
\end{table}
\paragraph{Parameters.} For \algo\ we set $\gamma = 10^{-3}$ for the Langevin step-size. For \DPS\ we use the parameters recommended in the original paper, which we found to work well even on the half and expand masks; see \citep[Appendix D.1]{chung2023diffusion}. 

\paragraph{Evaluation.} In order to evaluate each algorithm we compute the LPIPS metric \cite{zhang2018unreasonable} on each dataset using $100$ samples from the validation sets and report the average in \Cref{table:lin_invp}, \ref{table:jpeg-dequantization} and \ref{table:poisson-denoising}. 

\paragraph{JPEG dequantization.} We use the differentiable JPEG framework \cite{shin2017jpeg} which replaces the rounding function $x \mapsto \lfloor x \rceil$ used in the quantization part with $x \mapsto \lfloor x \rceil + (x - \lfloor x \rceil)^3$ which has non-zero derivatives almost everywhere.

\subsection{Trajectory inpainting experiment}
\label{apdx:trajectory-exp}
\paragraph{Trajectory DDM prior.}
The denoiser of the diffusion model has a Transformer-like architecture.
In the entry of the network, the trajectory is augmented to a higher dimensional space ($512$) via dense layer.
At this stage a positional encoding \citep{vaswani2017attention} is added to account for the diffusion step.
Afterward, the output is flowed through a transformer encoder \citep{vaswani2017attention} whose feedforward layer dimension is $2048$ to learn temporal dependence within the trajectory before being feed to an MLP with $4$ layers ($512 \rightarrow 1024 \rightarrow 1024 \rightarrow 512$) and in between ReLU activation functions, to output the added noise.
A Cosine noise scheduler with $1000$ diffusion steps was used \citep{nichol2021improved-diffusion-models}.
The \texttt{UCY}-student dataset was split int a train and a validation sets with $1450$ and $140$ trajectories respectively.
The batch size was set to 10 times the training set, namely $145$ samples
The denoiser was trained to minimize the loss of DDPM \citep{ho2020denoising} for $1000$ epochs using Adam solver \citep{kingma2014adam} with a Cosine learning rate scheduler \citep{loshchilov2016sgdr}.
The training was performed on 48GB L40S NVIDIA GPU and took roughly one minute to complete.

\paragraph{Metrics.} The trajectory completion experiment was performed on the validation set.
Every trajectory was masked randomly.
Leveraging \mcgdiff\ 's asymptotical approximation of the posterior, it was run with $5000$ particles to sample $100$ samples from the posterior and afterward these were checked against a $100$ reconstructions of each other algorithm by computing the \emph{timestep wise} $\ell_2$ distance between the quantile $50$ (median), $25$, $75$ and also by computing the Sliced Wasserstein distance.
This procedure was repeated for all trajectories in the validation set and later the results of each algorithm were aggregated by the mean $\ell_2$ distances.
Finally, this experiment was performed for two levels of noise $\sigma_y = 0.005$ and $\sigma_y = 0.01$.

\subsection{Additional experiments}
\label{apdx:add-experiments}

Here, we provide the complete tables of results on imaging and trajectories inpainting experiments that includes in addition \DiffPIR, \DDNM, \FPS, and \SDA.
These additional experiments were conducted during the rebuttal phase of our work.

\begin{table}[t]
    \centering
    \caption{Mean LPIPS value on different tasks. Lower is better.}
    \resizebox{0.9\textwidth}{!}{
        \begin{tabular}{@{}cccccccccccc@{}}
        \toprule
        Dataset / $\stdobs$ & Task & \algo & \ddrm & \DPS & \PGDM & \reddiff & \mcgdiff & \sc{DiffPIR} & \sc{DDNM} & \sc{SDA} & \sc{FPS} \ \\
        \midrule
        \multirow{4}{*}{\centering \ffhq\ / 0.05}
        & Half & \bf{0.20} & 0.25 & 0.24 & 0.26 & 0.28 & 0.36 & 0.23 & \underline{0.22} & 0.23 & 0.28 \\
        & Center & \bf{0.05} & \underline{0.06} & 0.07 & 0.19 & 0.12 & 0.24 & \underline{0.06}  & \bf{0.05} & \bf{0.05} & 0.09 \\
        & SR $4 \times$ & \bf{0.09} & 0.18 & \bf{0.09} & 0.33 & 0.36 & 0.15 & 0.13 & 0.14 & \underline{0.10} & \underline{0.10} \\
        & SR $16 \times$ & \bf{0.23} & 0.36 & \underline{0.24} & 0.44 & 0.51 & 0.32 & 0.28 & 0.30 & 0.44 & 0.71 \\
        \midrule
        \multirow{4}{*}{\centering \ffhq\ / 0.3}
        & Half & \bf{0.25} & 0.30 & 0.31 & 0.64 & 0.76 & 0.80 &  0.30 & \underline{0.26} & \underline{0.26} & 0.67 \\
        & Center & \bf{0.10} & 0.13 & \underline{0.11} & 0.62 & 0.75 &  0.55 & 0.16 & \underline{0.11} & \bf{0.10} & 0.69 \\
        & SR $4 \times$ & \underline{0.21} & 0.26 & \bf{0.19} & 0.77 & 0.77 & 0.65 & 0.28 & 0.23 & \bf{0.19} & 0.75 \\
        & SR $16 \times$ & \bf{0.35} & 0.41 & 0.43 & 0.64 & 0.74 & 0.52 & 0.42 & \underline{0.39} & 0.49 & 0.71 \\
        \midrule
        \multirow{4}{*}{\centering \imagenet\ / 0.05}
        & Half & \bf{0.35} & 0.40 & 0.44 & \underline{0.38} & 0.44 & 0.83 & \bf{0.35} & \underline{0.38} & 0.54 & 0.39 \\
        & Center & 0.18 & \underline{0.14} & 0.31 & 0.29 & 0.22 & 0.45 & \underline{0.14} & \bf{0.13} & \underline{0.14} & 0.19 \\
        & SR $4 \times$ & \bf{0.24} & 0.38 & 0.41 & 0.78 & 0.56 & 1.32 & 0.36 & 0.34 & 0.85 & \underline{0.27} \\
        & SR $16 \times$ & \bf{0.44} & 0.72 & \underline{0.50} & 0.60 & 0.83 & 1.33 & 0.63 & 0.70 & 1.13 & 0.69 \\
        \midrule
        \multirow{4}{*}{\centering \imagenet\ / 0.3}
        & Half & \bf{0.40} & 0.46 & 0.48 & 0.82 & 0.76 & 0.86 & 0.50 & \underline{0.44} & 0.61 & 0.71 \\
        & Center & \underline{0.24} & 0.25 & 0.40 & 0.68 & 0.71 & 0.47 & 0.36 & \bf{0.22} & 0.25 & 0.70 \\
        & SR $4 \times$ & \bf{0.43} & 0.50 & 0.47 & 0.87 & 0.83 & 1.31 & 0.61 & \underline{0.46} & 1.14 & 0.84 \\
        & SR $16 \times$ & 0.72 & 0.77 & \bf{0.57} & 0.72 & 0.92 & \underline{0.67} & 0.76 & 0.75 & 1.19 & 0.74 \\
        \midrule
        Average & & \bf{0.28} & 0.35 & \underline{0.32} & 0.57 & 0.60 & 0.67 & 0.35 & \underline{0.32} & 0.48 & 0.53 \\
        Median & & \bf{0.24} & 0.33 & 0.35 & 0.63 & 0.72 & 0.60  & 0.32 & \underline{0.28} & 0.35 & 0.69 \\
        \midrule
        \end{tabular}
    }
    \label{table:lin_invp_continued}
\end{table}

\begin{table}
    \centering
    \captionsetup{font=small}
    \caption{$\ell_2$ distance quantiles with \mcgdiff\ as reference.}
    \resizebox{.45\textwidth}{!}{
            \begin{tabular}{lccccccc}
            \toprule
            & \multicolumn{3}{c}{$\sigma_y = 0.005$}  & & \multicolumn{3}{c}{$\sigma_y = 0.01$}  \\
            & $q50$ & $q25$ & $q75$ && $q50$ & $q25$ & $q75$ \\
            \midrule
            \algo    & \textbf{1.31} & \textbf{1.33} & \textbf{1.47} && \textbf{1.33} & \textbf{1.42} & \textbf{1.42} \\
            \DPS     & \underline{1.34} & 1.40 & 1.61 && \underline{1.36} & 1.48 & 1.52 \\
            \ddrm    & 1.48 & 1.46 & 1.61 && 1.59 & 1.62 & 1.61 \\
            \PGDM    & 1.36 & \underline{1.35} & \textbf{1.47} && 1.37 & \underline{1.43} & \textbf{1.42} \\
            \reddiff & 1.67 & 1.57 & 1.82 && 1.56 & 1.54 & 1.65 \\
            \DiffPIR & 1.57 & 1.84 & 1.98 && 1.52 & 1.94 & 1.89 \\
            \DDNM & 1.45 & 1.45 & 1.65  && 1.52 & 1.59 & 1.59  \\
            \FPS & 2.60 & 2.61 & 2.62 && 2.91 & 2.90 & 2.89 \\
            \SDA & 1.52 & 1.55 & 1.69 && 1.54 & 1.59 & 1.61  \\
            \bottomrule
            \end{tabular}
        }
        \label{table:trajectory-inpainting-L2_continued}
\end{table}
\section{Sample reconstructions}
\label{apdx:samples}
In this section we display the remaining samples from the experiments in the main paper. We remind the reader that all algorithms are run with the same seed and we draw in parallel $4$ samples from each algorithm and display them in their order of appearance.


\begin{figure}[htb]
    \centering
    \subfigure{
        \includegraphics[width=0.42\textwidth]{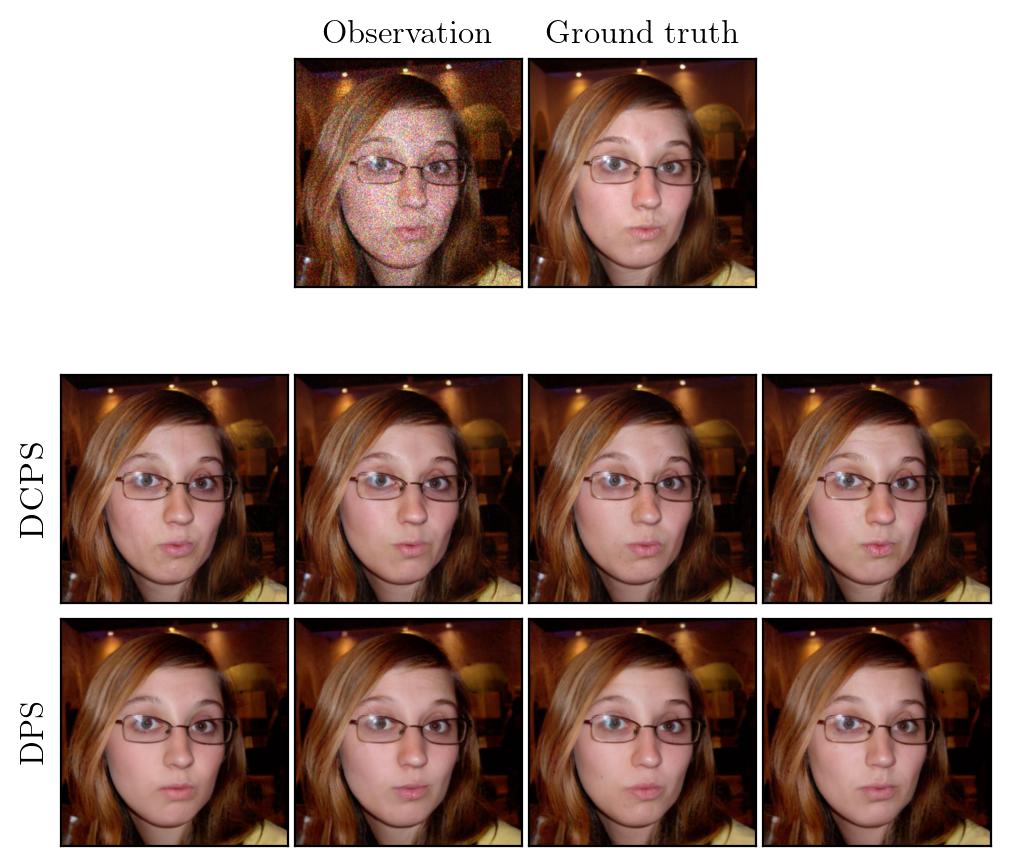}
        \includegraphics[width=0.42\textwidth]{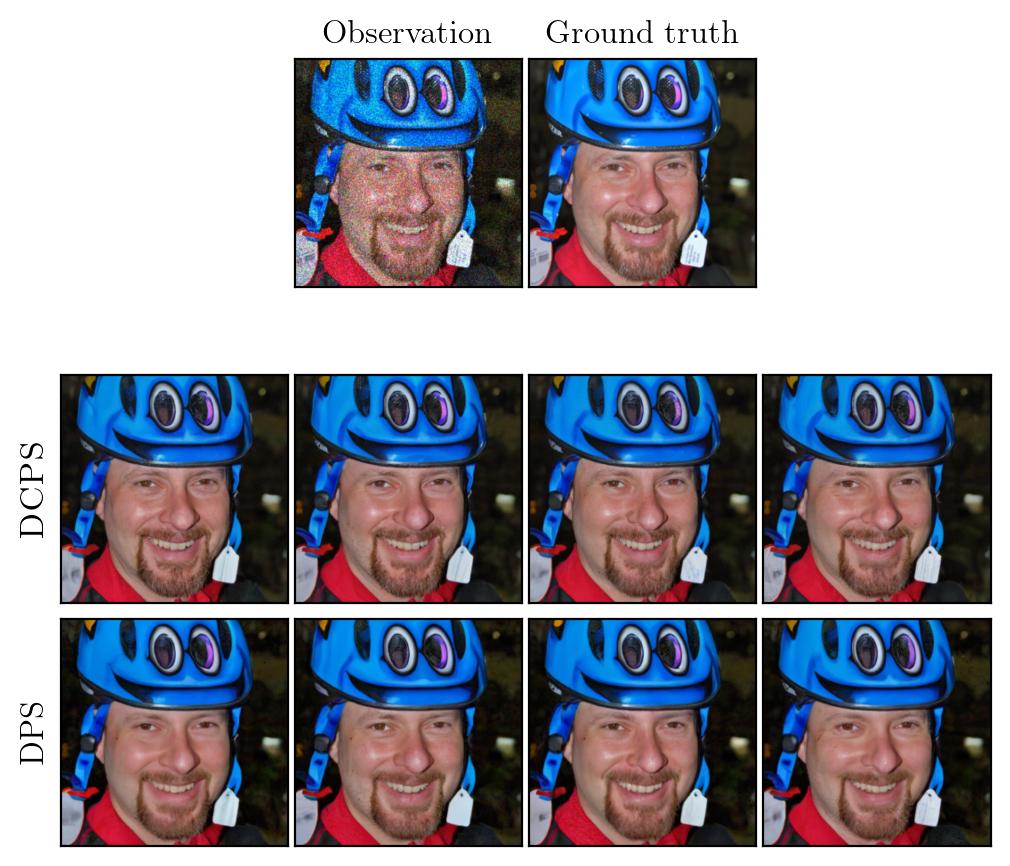}

    }
    \caption{Denoising task with Poisson noise on \ffhq.}
\end{figure}

\begin{figure}[htb]
    \centering
    \subfigure{
        \includegraphics[width=0.42\textwidth]{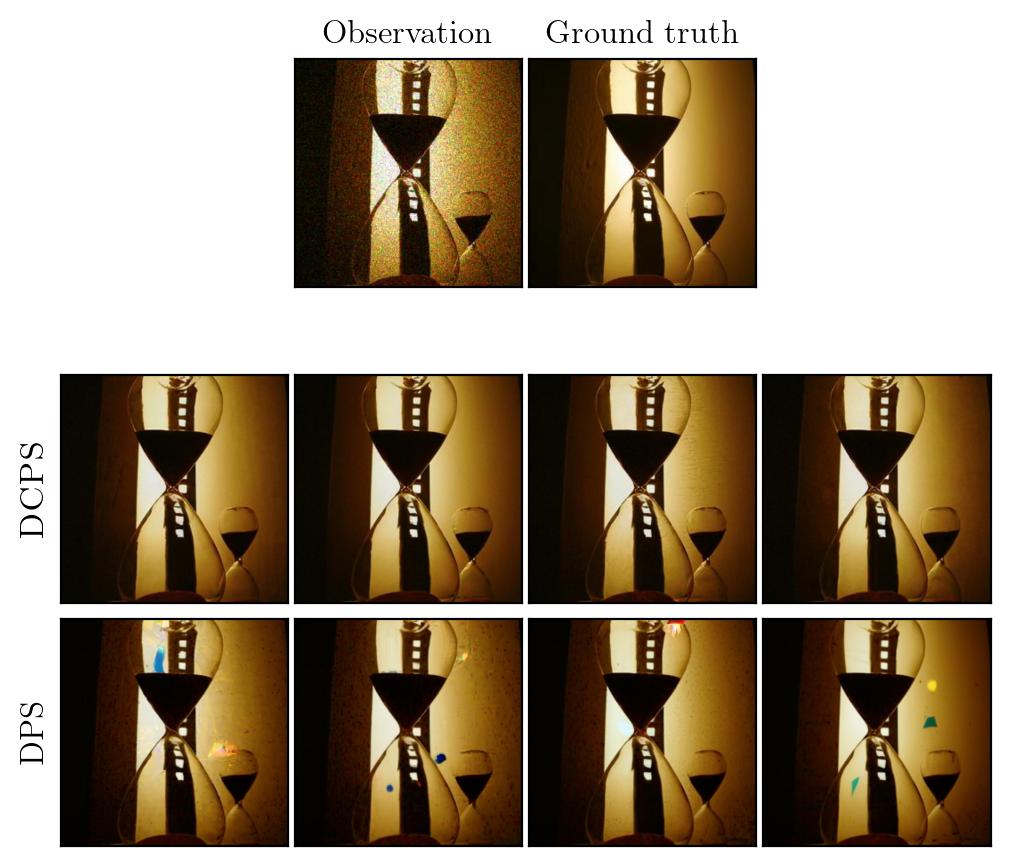}
        \includegraphics[width=0.42\textwidth]{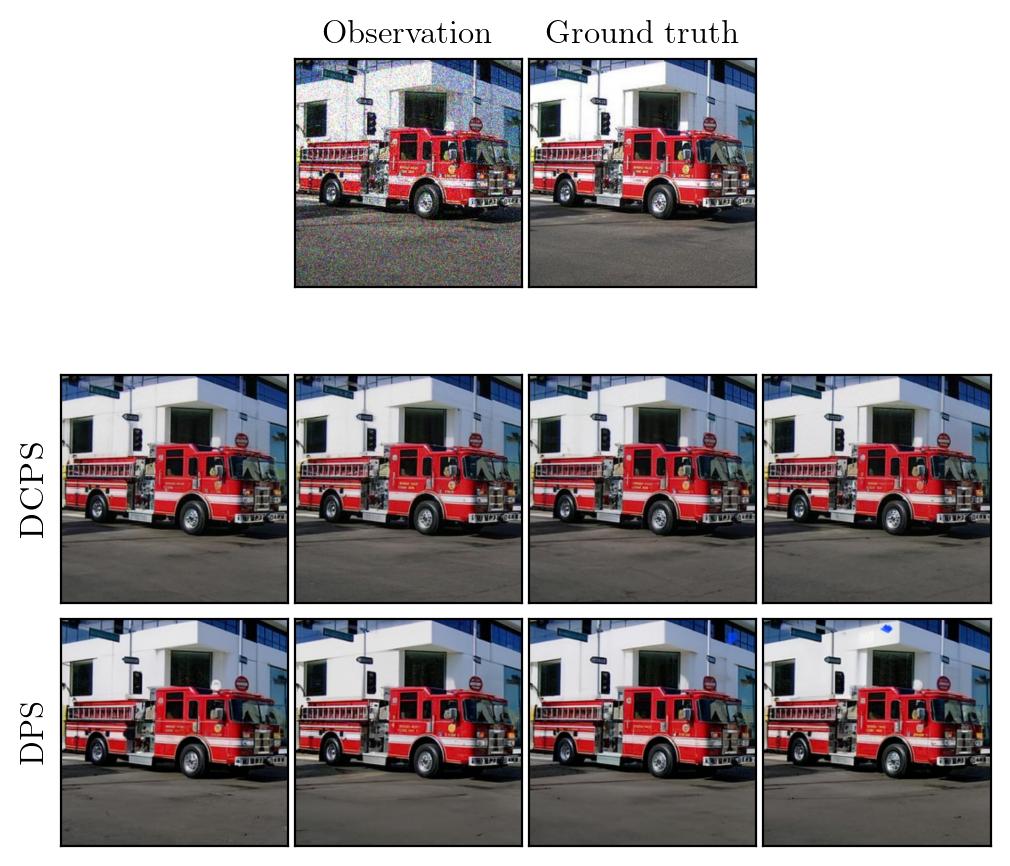}
    }
    \caption{Denoising task with Poisson noise on \imagenet.}
\end{figure}

\begin{figure}[htb]
    \centering
    \subfigure{
        \includegraphics[width=0.42\textwidth]{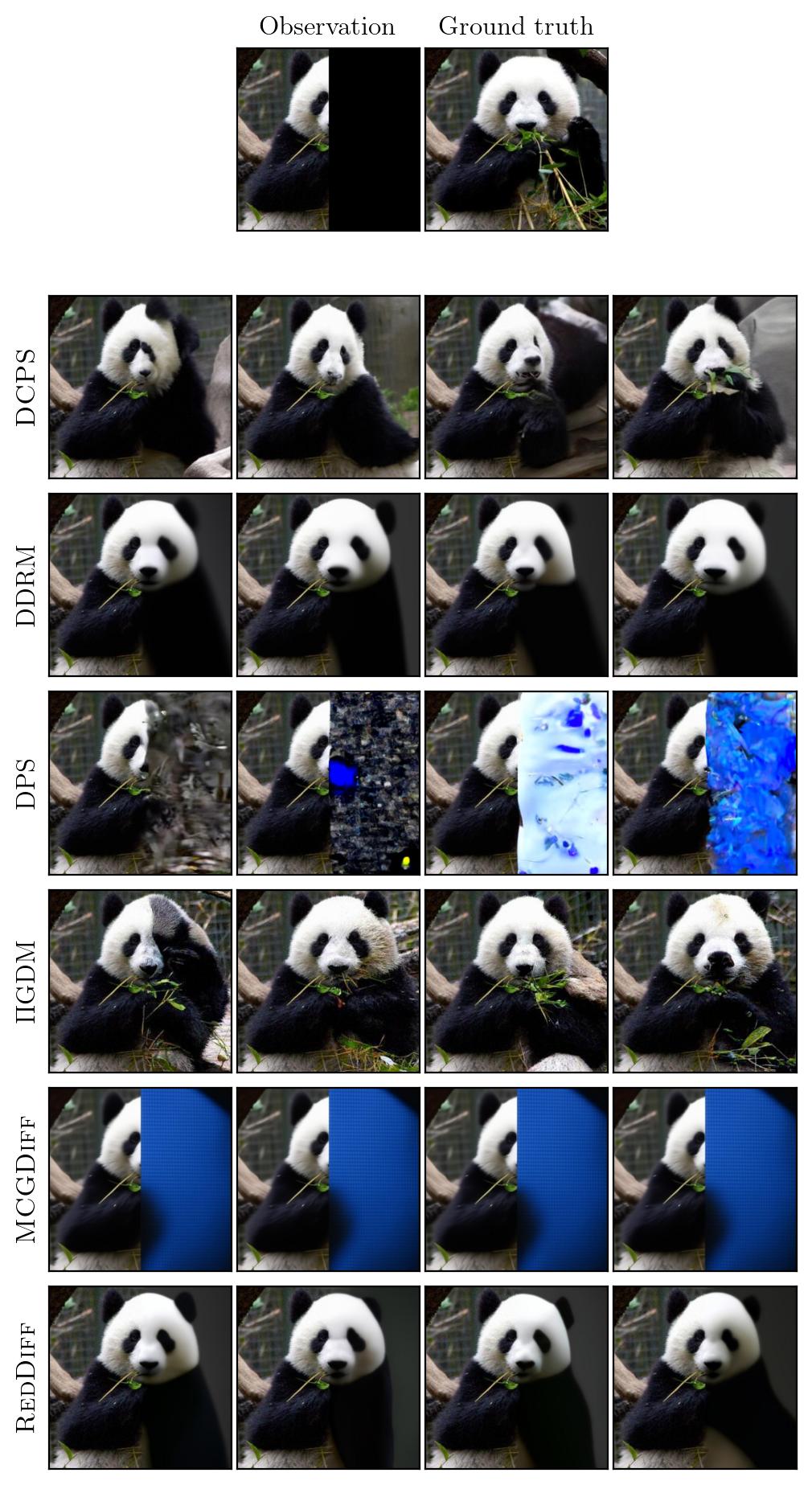}
    }
    \caption{Outpainting task with half mask on \imagenet.}
\end{figure}

\begin{figure}[htb]
    \centering
    \subfigure{
        \includegraphics[width=0.42\textwidth]{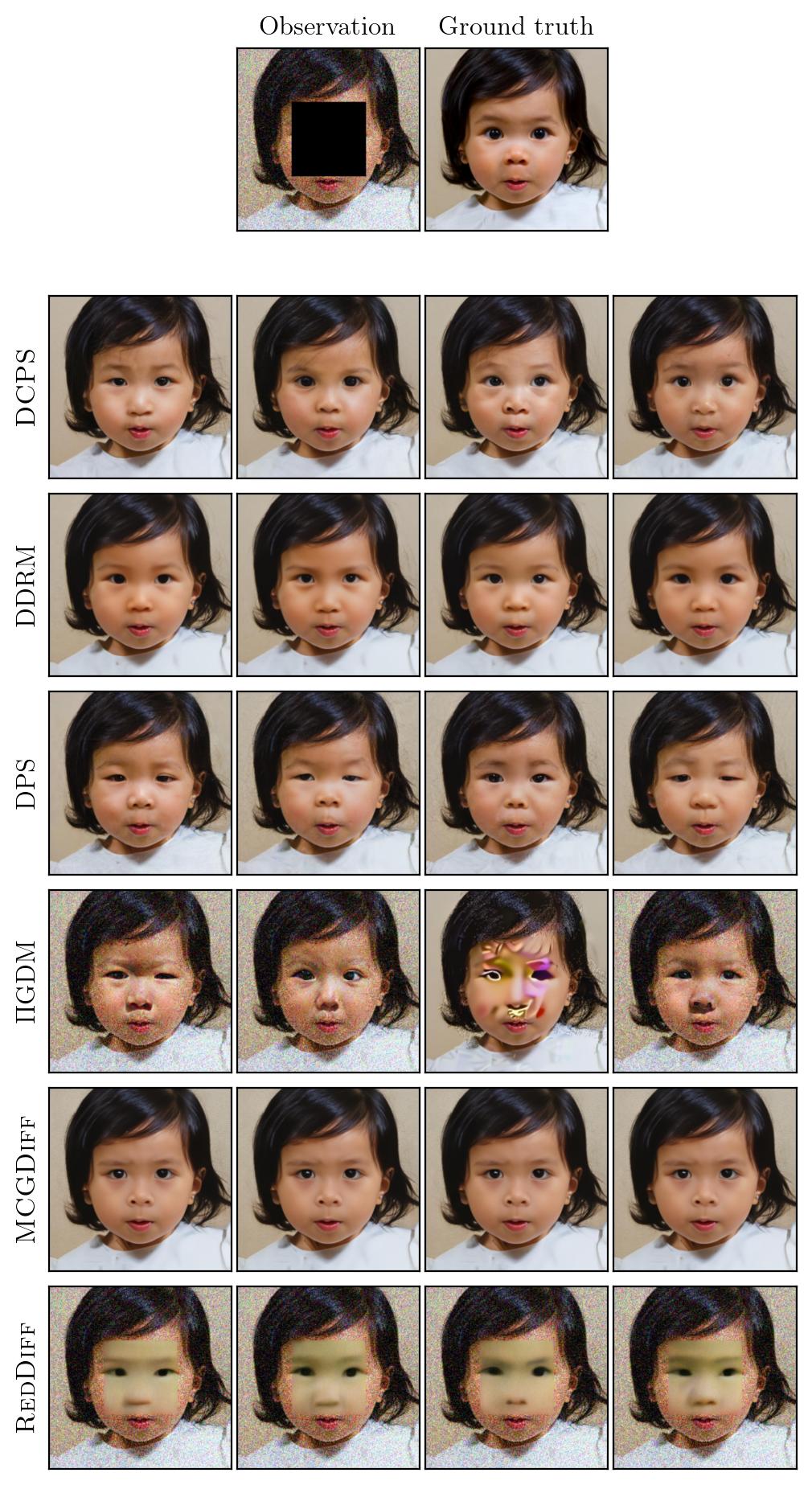}
        \includegraphics[width=0.42\textwidth]{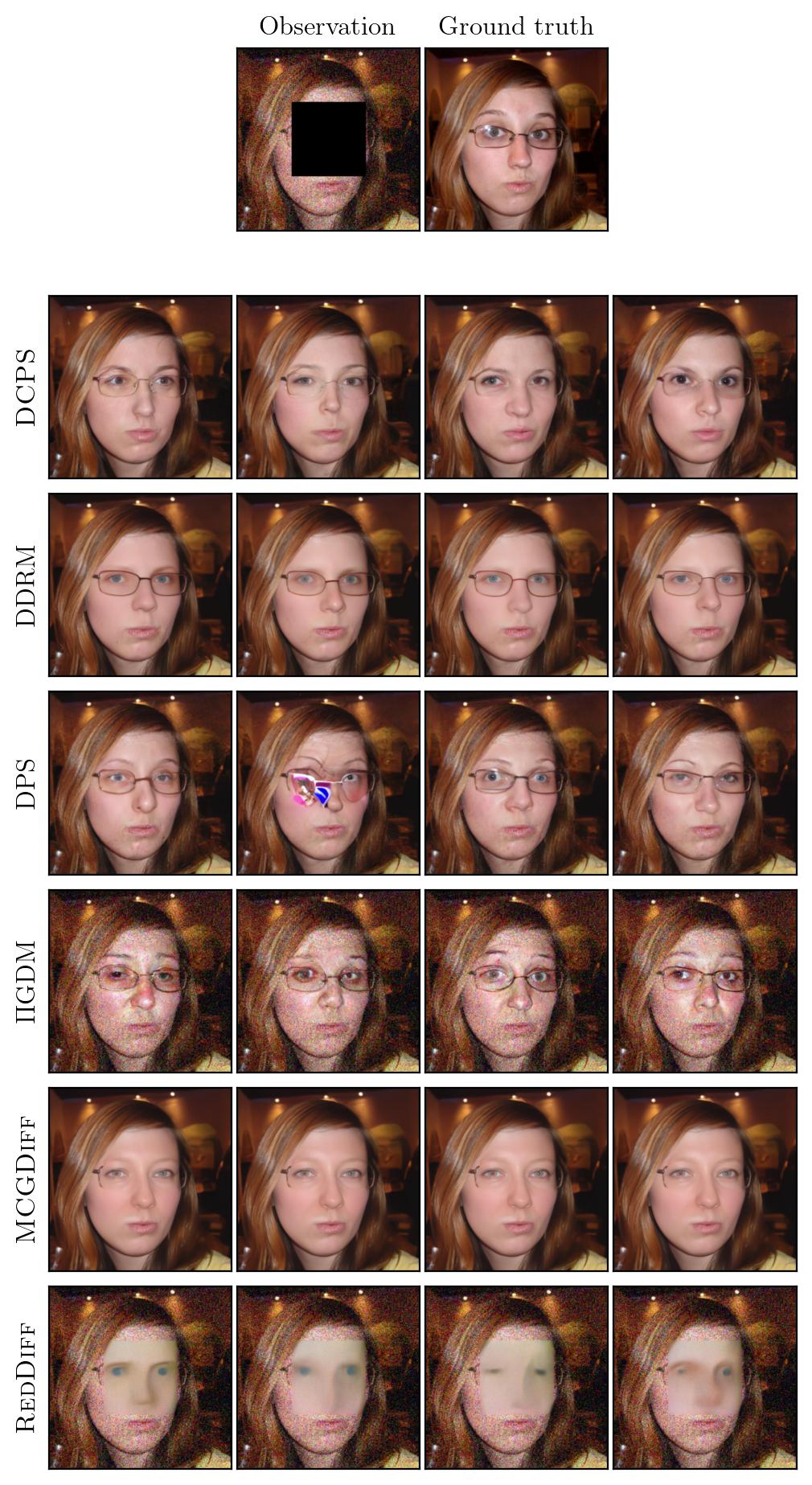}
    }
    \caption{Inpainting with box mask on \ffhq.}
    \subfigure{
        \includegraphics[width=0.42\textwidth]{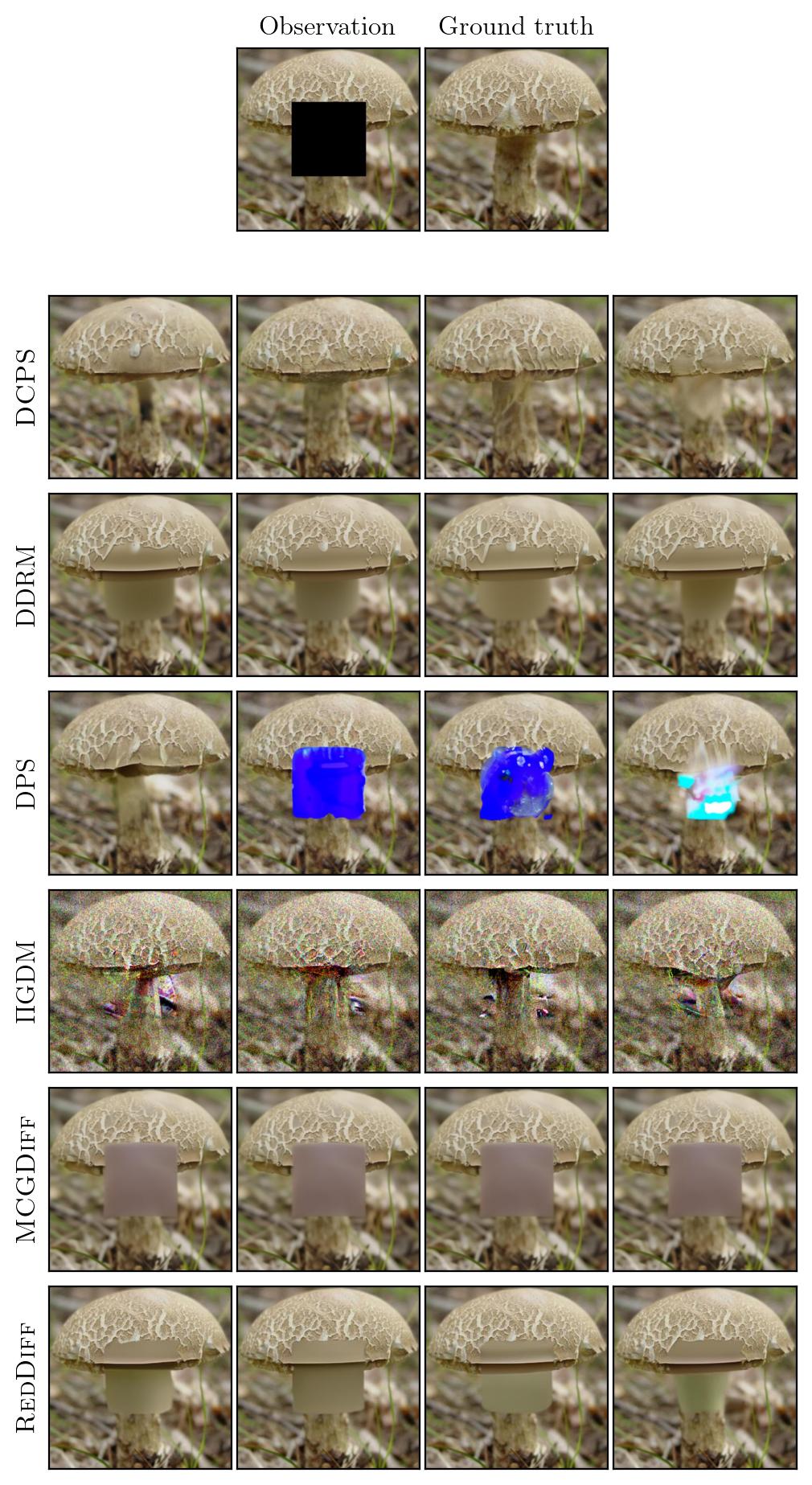}
        \includegraphics[width=0.42\textwidth]{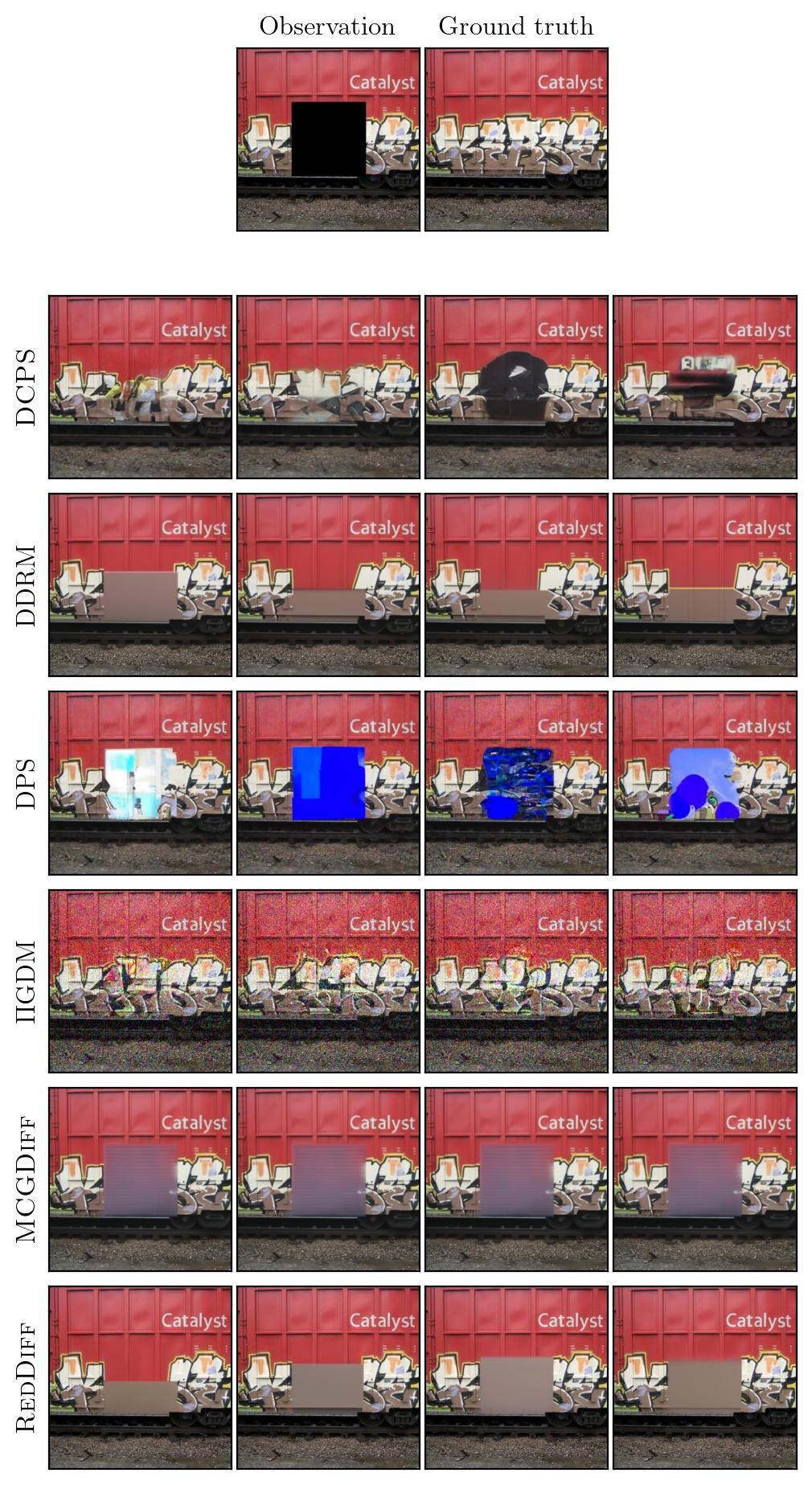}
    }
    \caption{Inpainting task with box mask on \imagenet.}
\end{figure}

\begin{figure}[htb]
    \centering
    \subfigure{
        \includegraphics[width=0.42\textwidth]{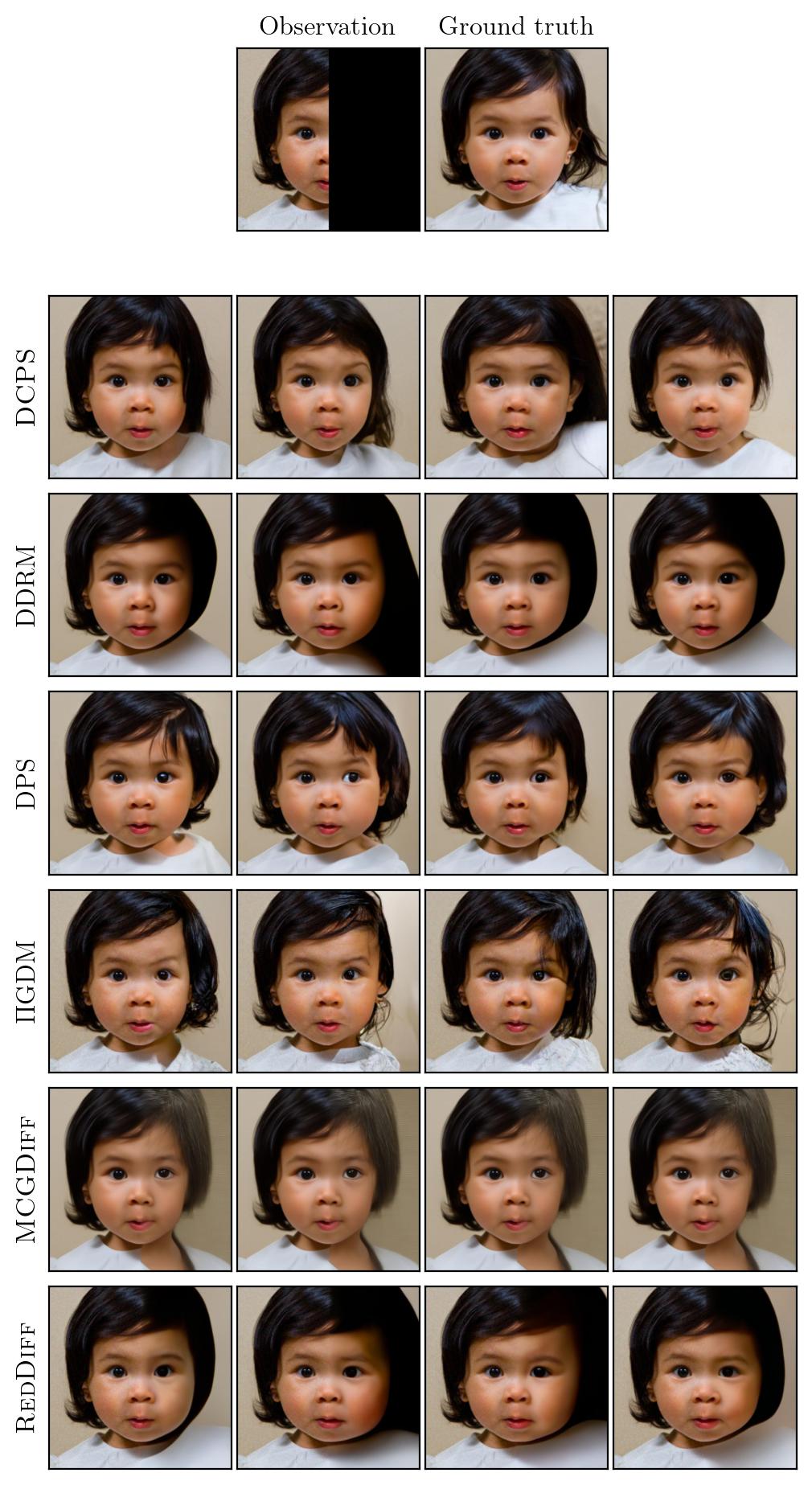}
        \includegraphics[width=0.42\textwidth]{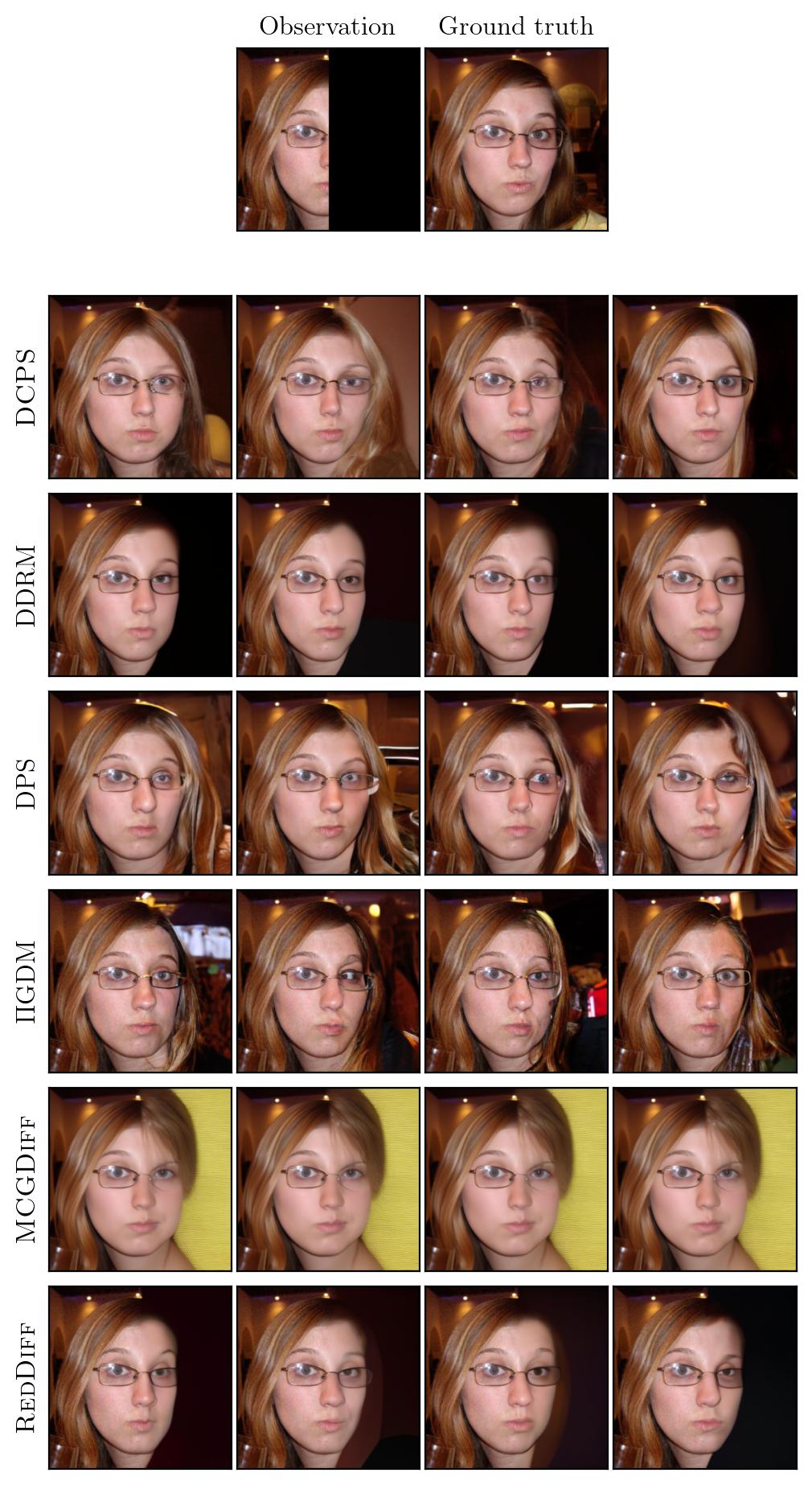}
    }
    \caption{Outpainting task with half mask on \ffhq.}
    \subfigure{
        \includegraphics[width=0.42\textwidth]{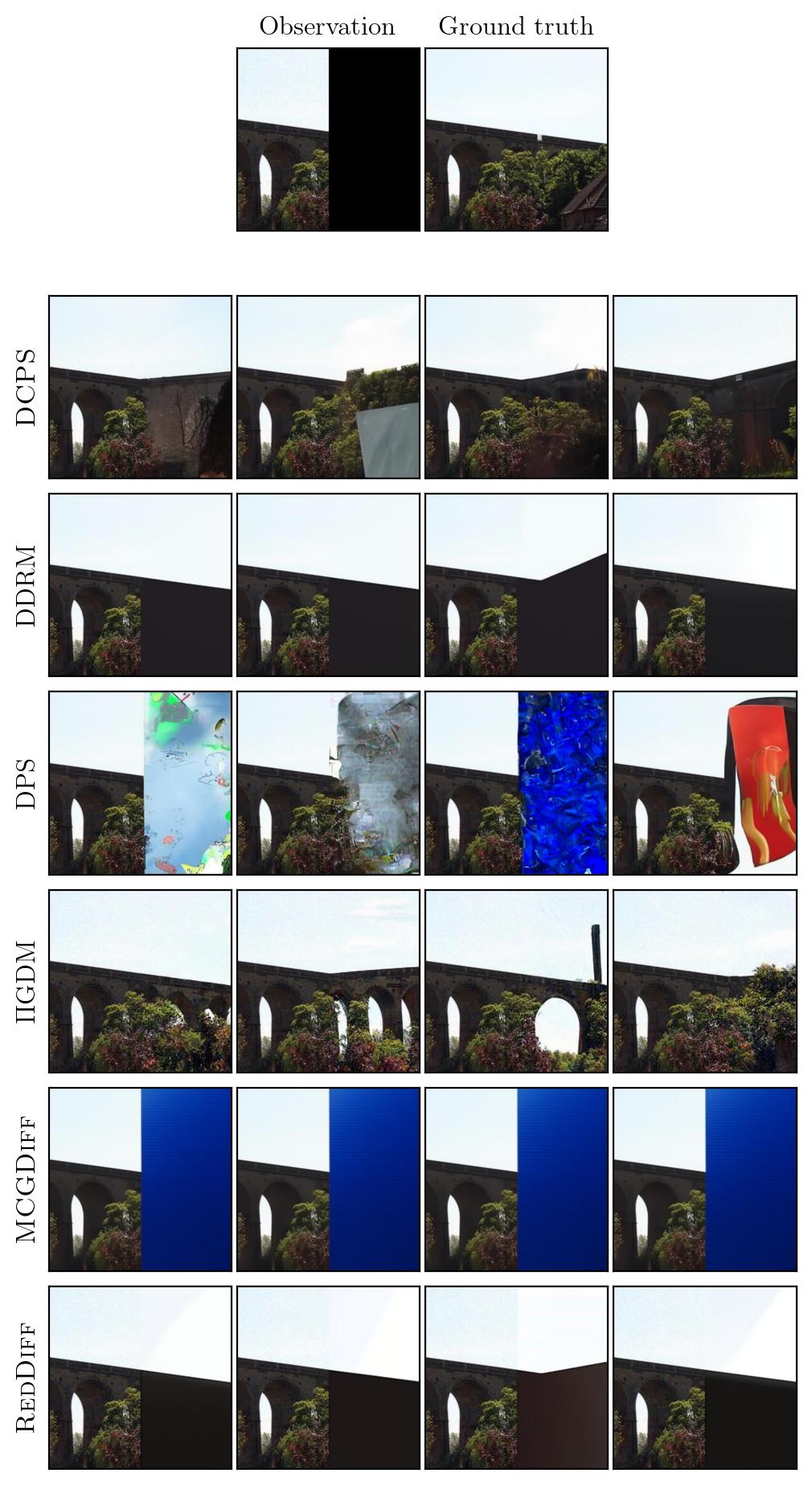}
        \includegraphics[width=0.42\textwidth]{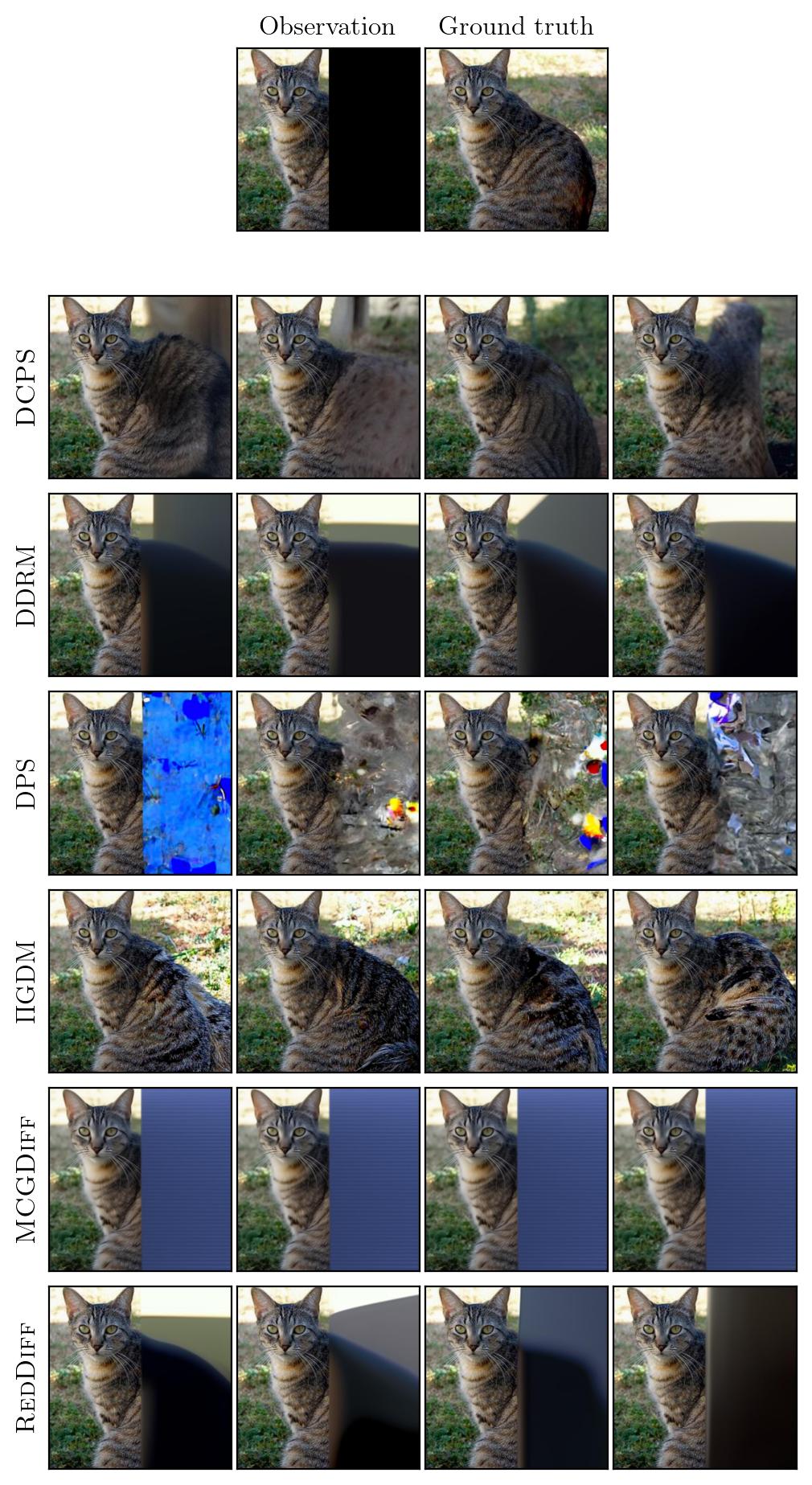}
    }
    \caption{Outpainting task with half mask on \imagenet.}
\end{figure}

\begin{figure}[htb]
    \centering
    \subfigure{
        \includegraphics[width=0.42\textwidth]{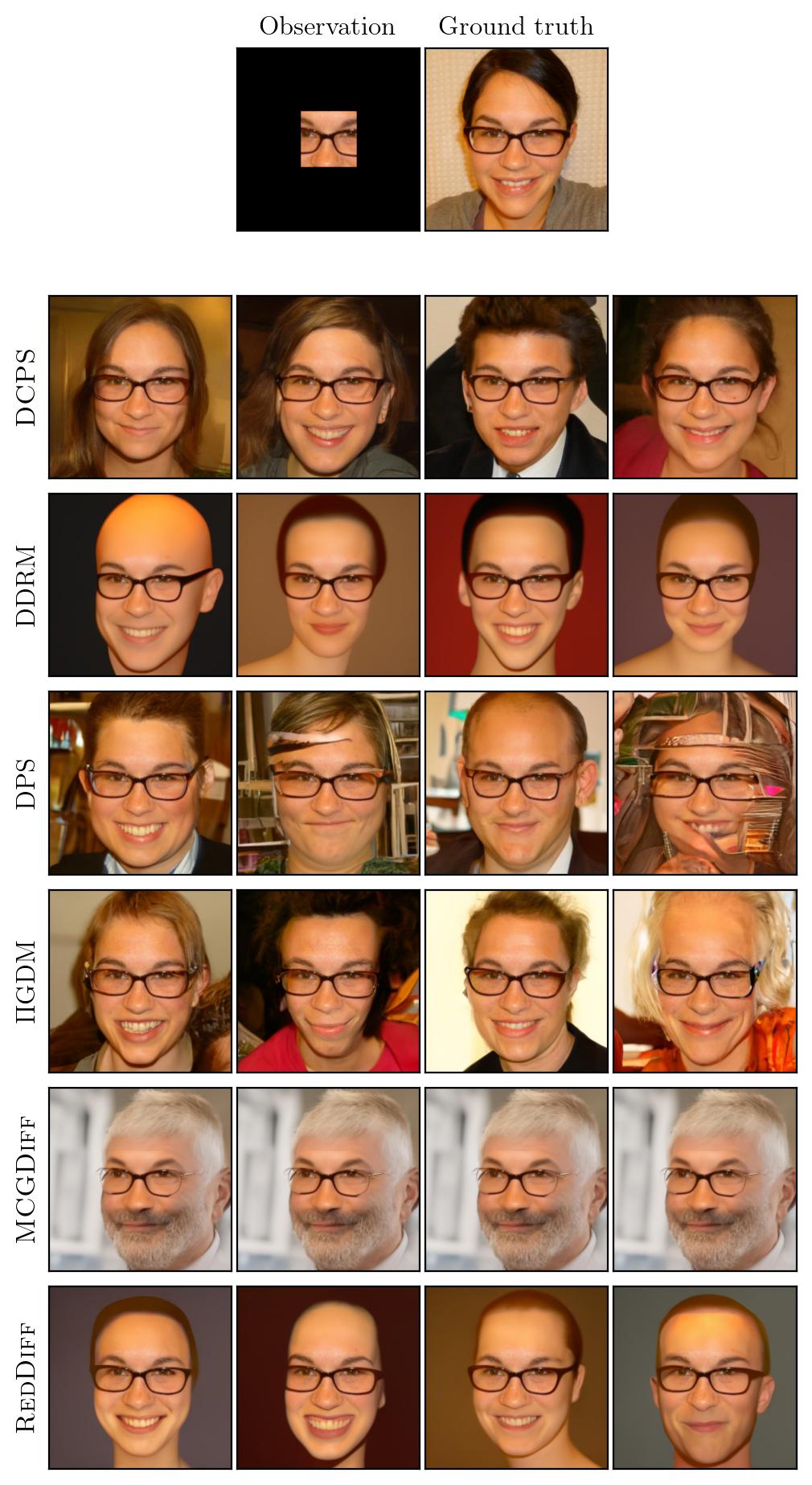}
        \includegraphics[width=0.42\textwidth]{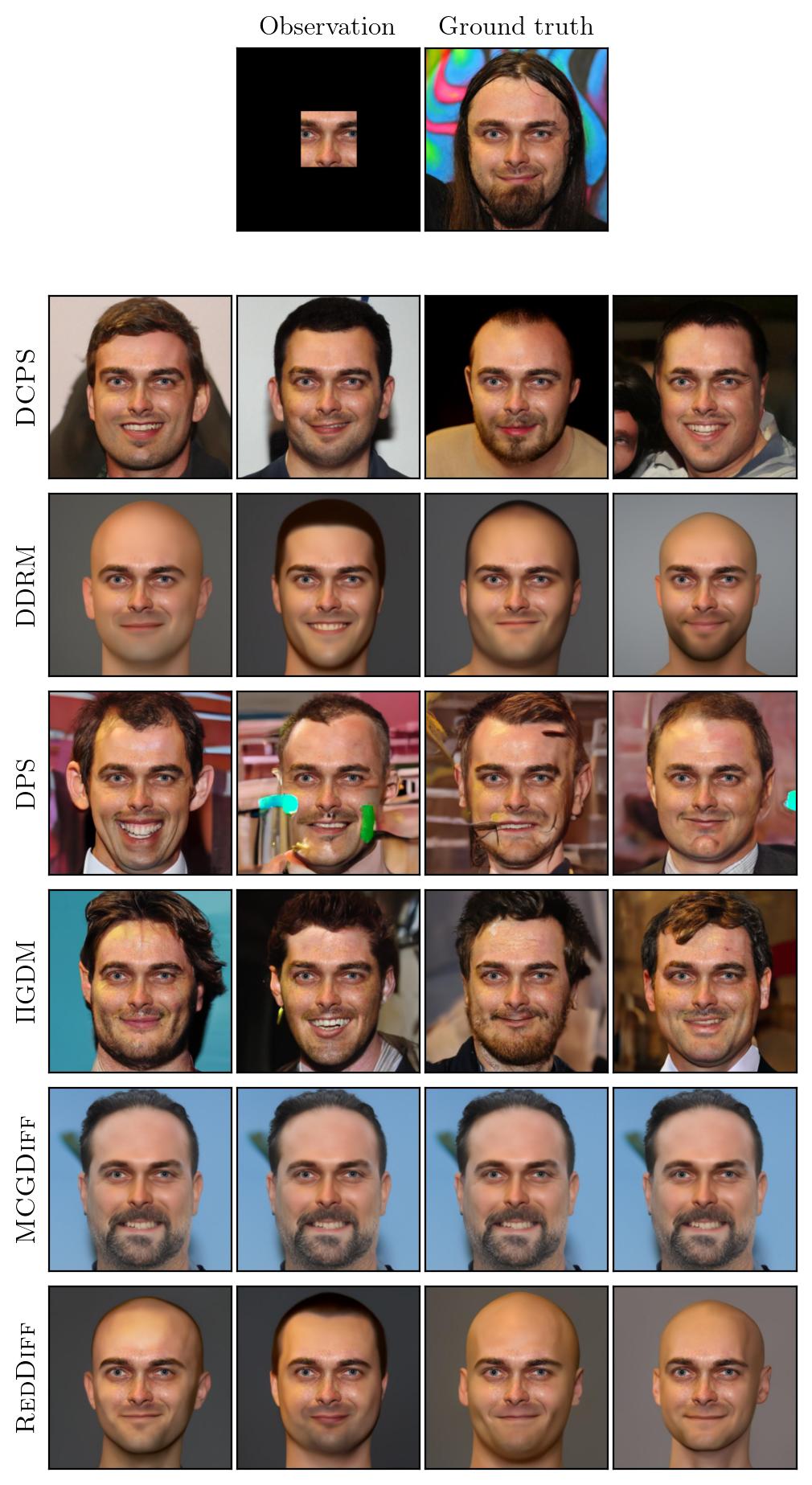}
    }
    \caption{Outpainting expend task on \ffhq.}
    \subfigure{
        \includegraphics[width=0.42\textwidth]{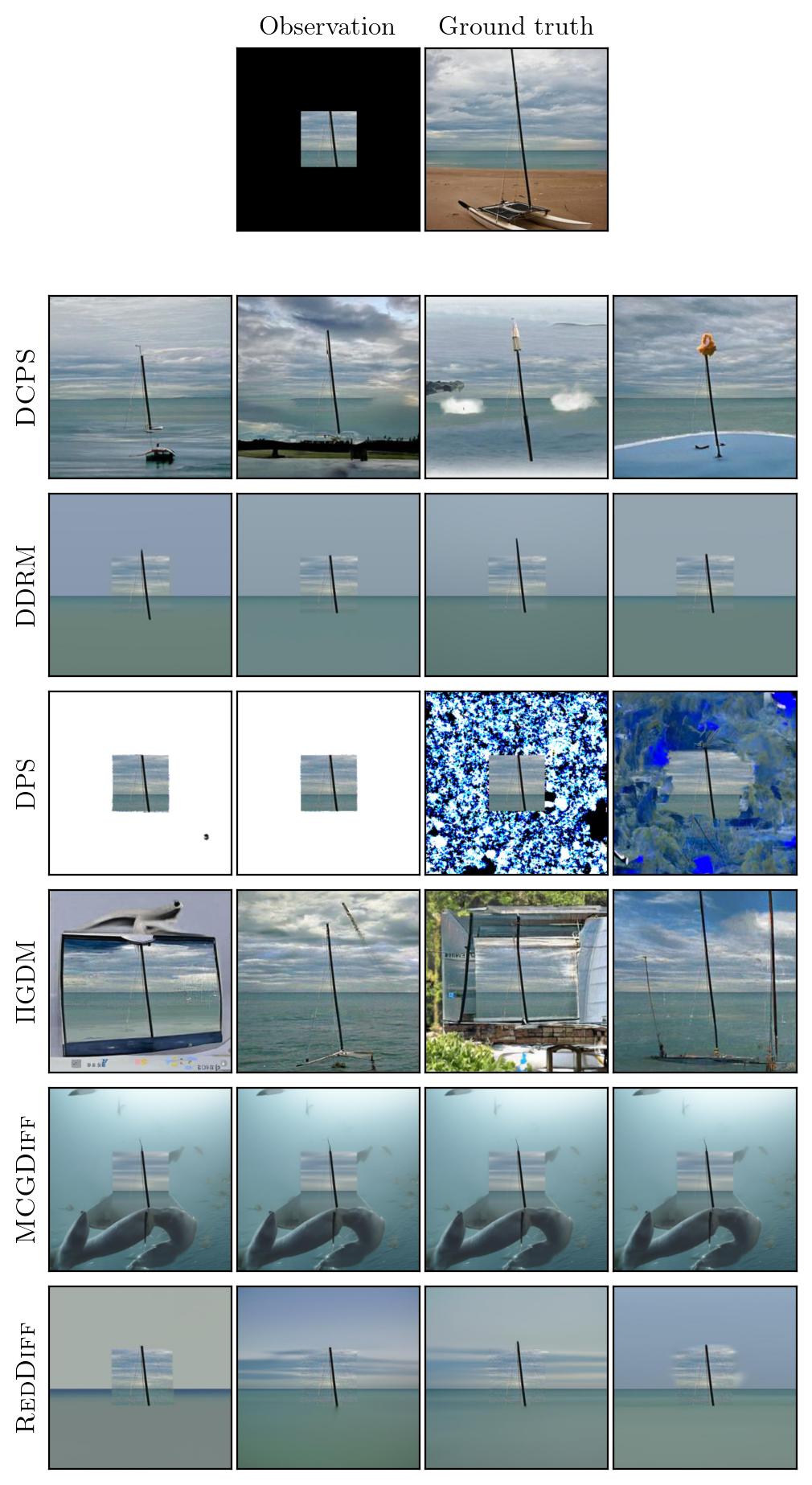}
        \includegraphics[width=0.42\textwidth]{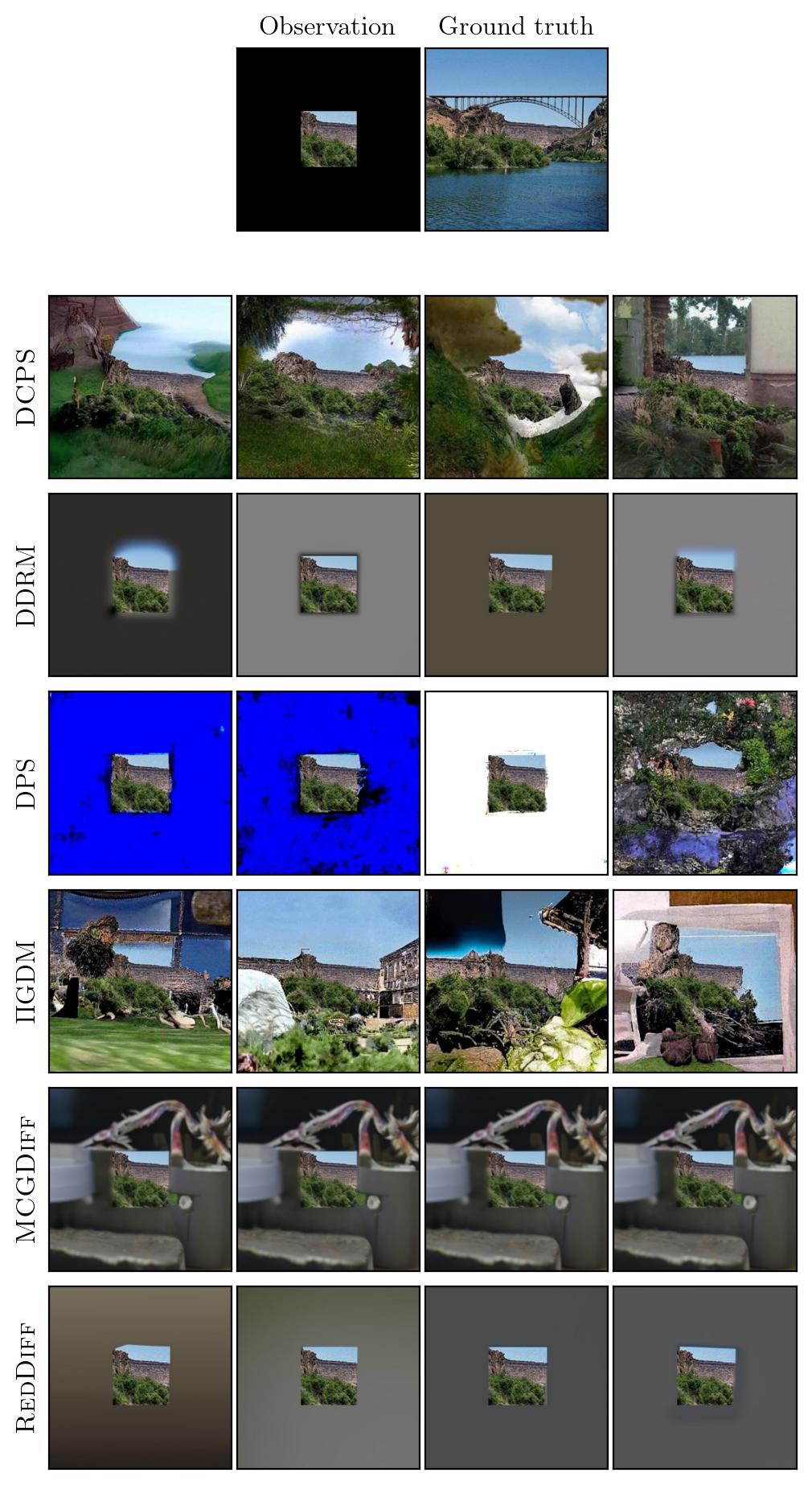}
    }
    \caption{Outpainting expend task on \imagenet.}
\end{figure}

\begin{figure}[htb]
    \centering
    \subfigure{
        \includegraphics[width=0.42\textwidth]{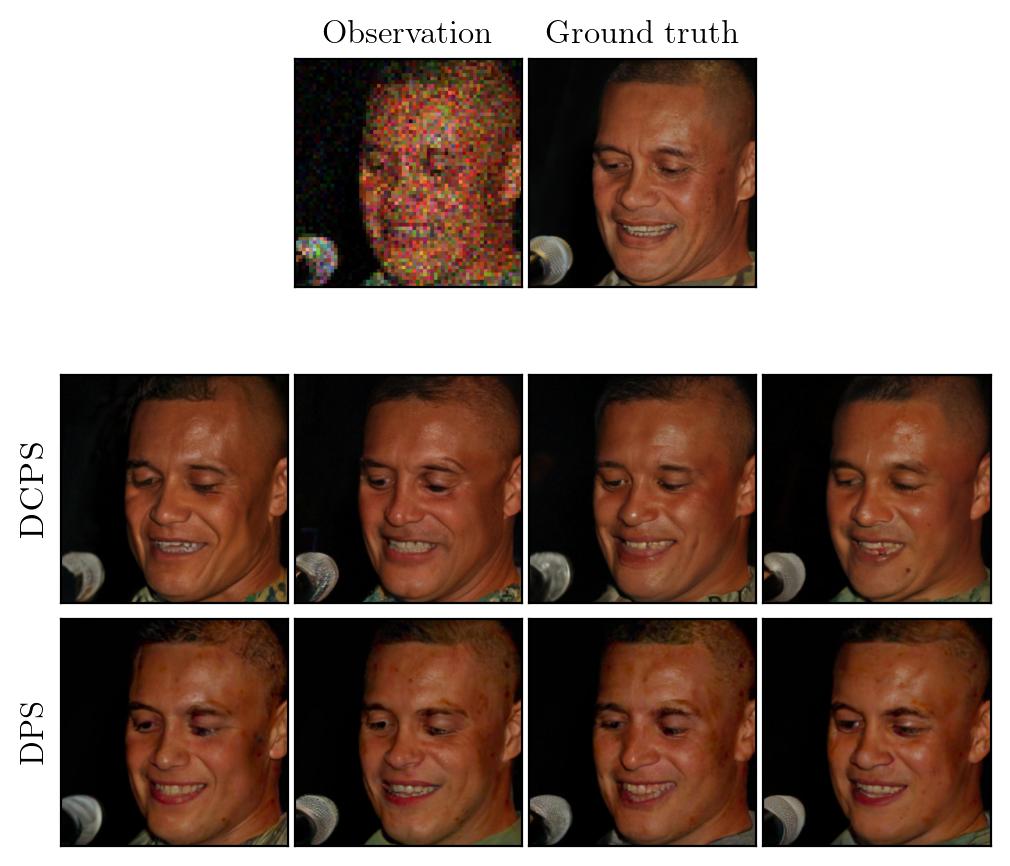}
        \includegraphics[width=0.42\textwidth]{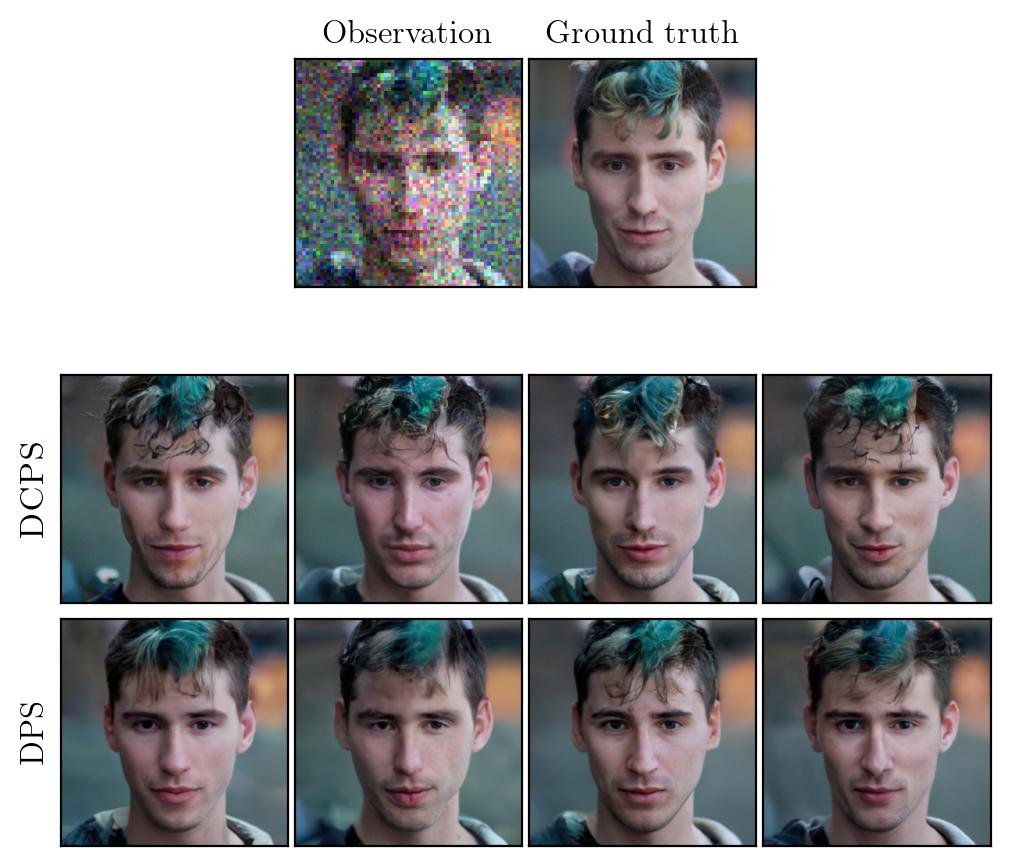}
    }
    \caption{SR $4 \times$ task with Poisson noise on \ffhq.}
    \subfigure{
        \includegraphics[width=0.42\textwidth]{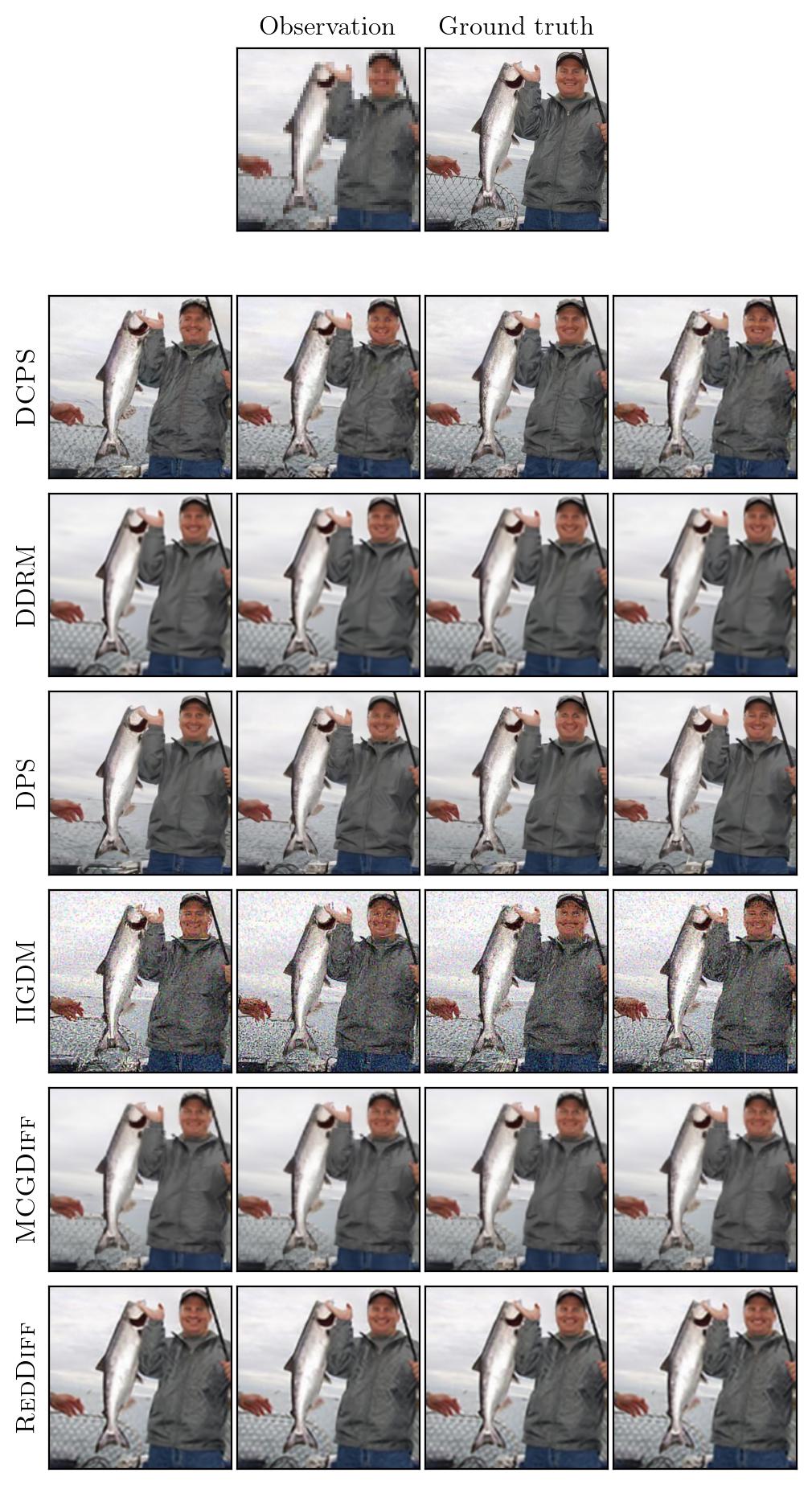}
        \includegraphics[width=0.42\textwidth]{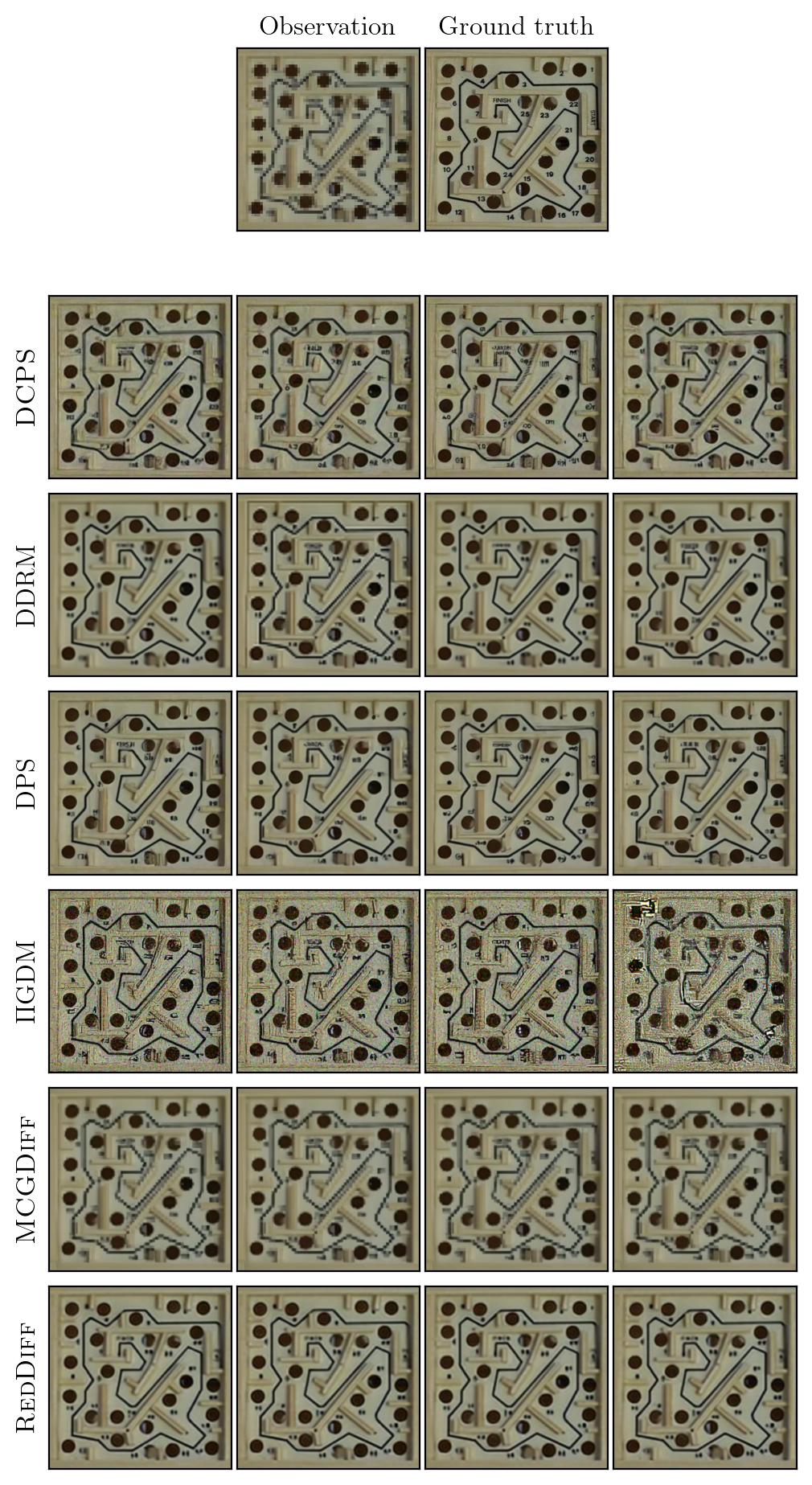}
    }
    \caption{SR $4\times$ task on \imagenet.}
\end{figure}

\begin{figure}[htb]
    \centering
    \subfigure{
        \includegraphics[width=0.42\textwidth]{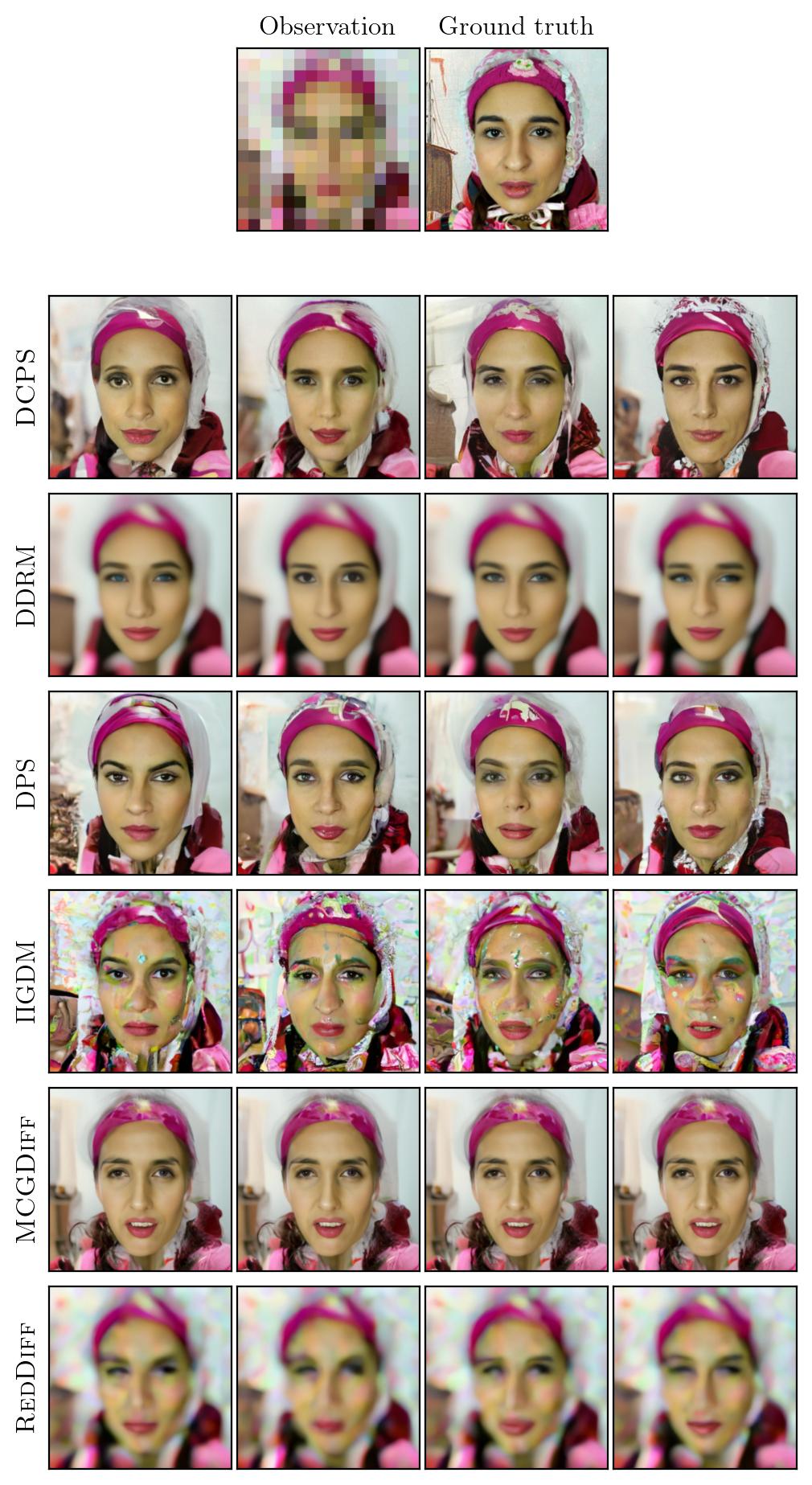}
        \includegraphics[width=0.42\textwidth]{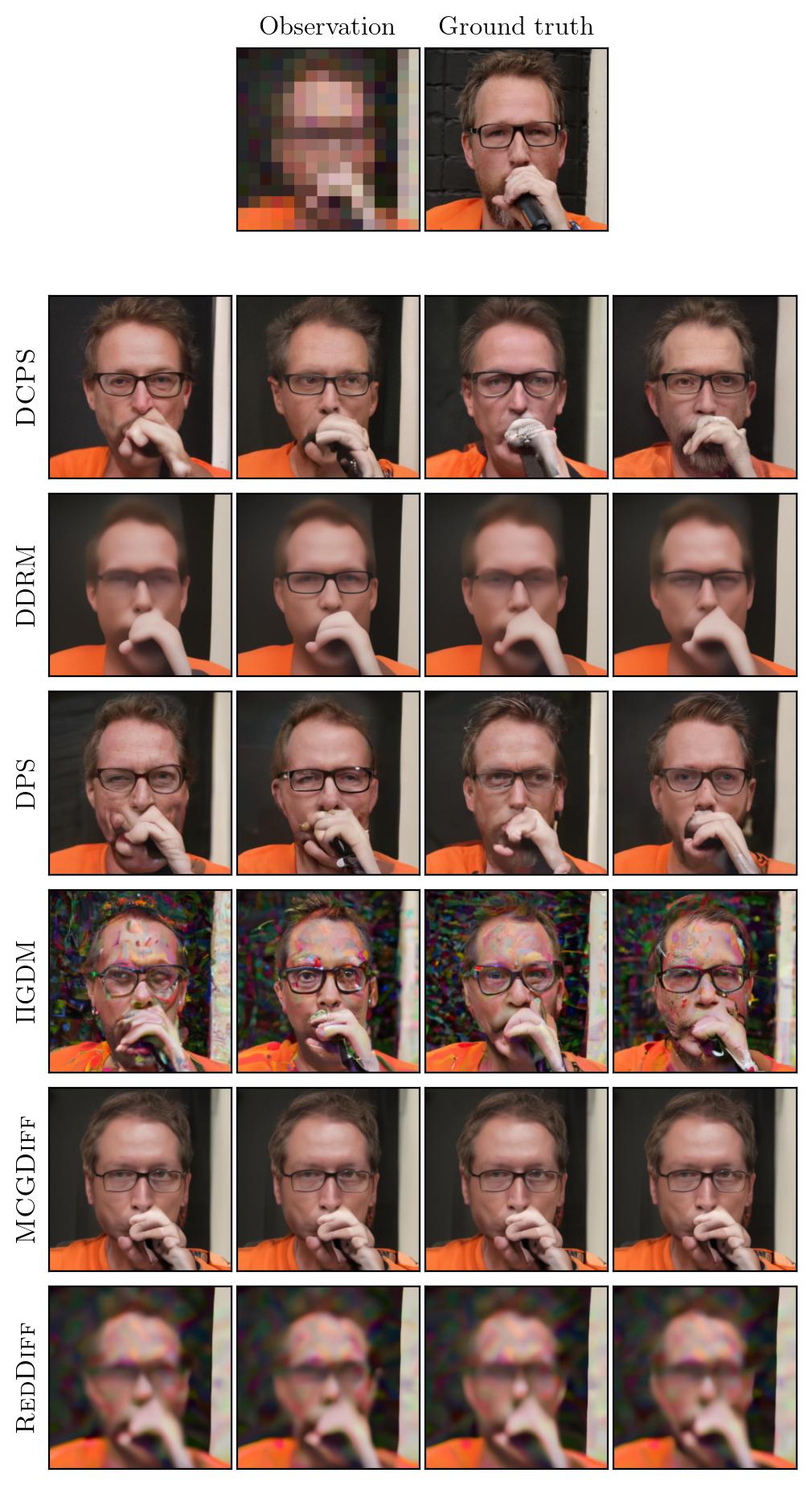}
    }
    \subfigure{
        \includegraphics[width=0.42\textwidth]{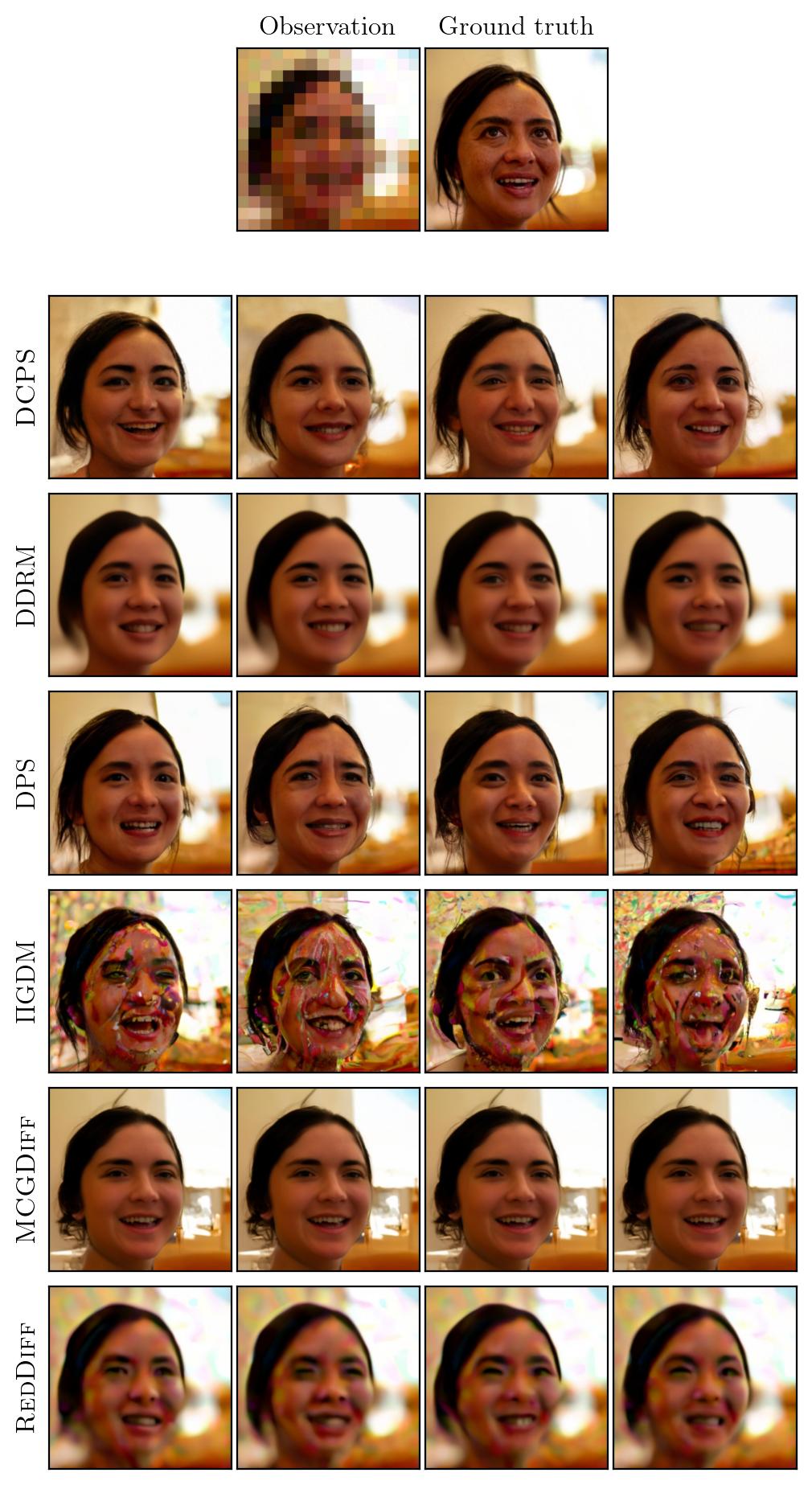}
        \includegraphics[width=0.42\textwidth]{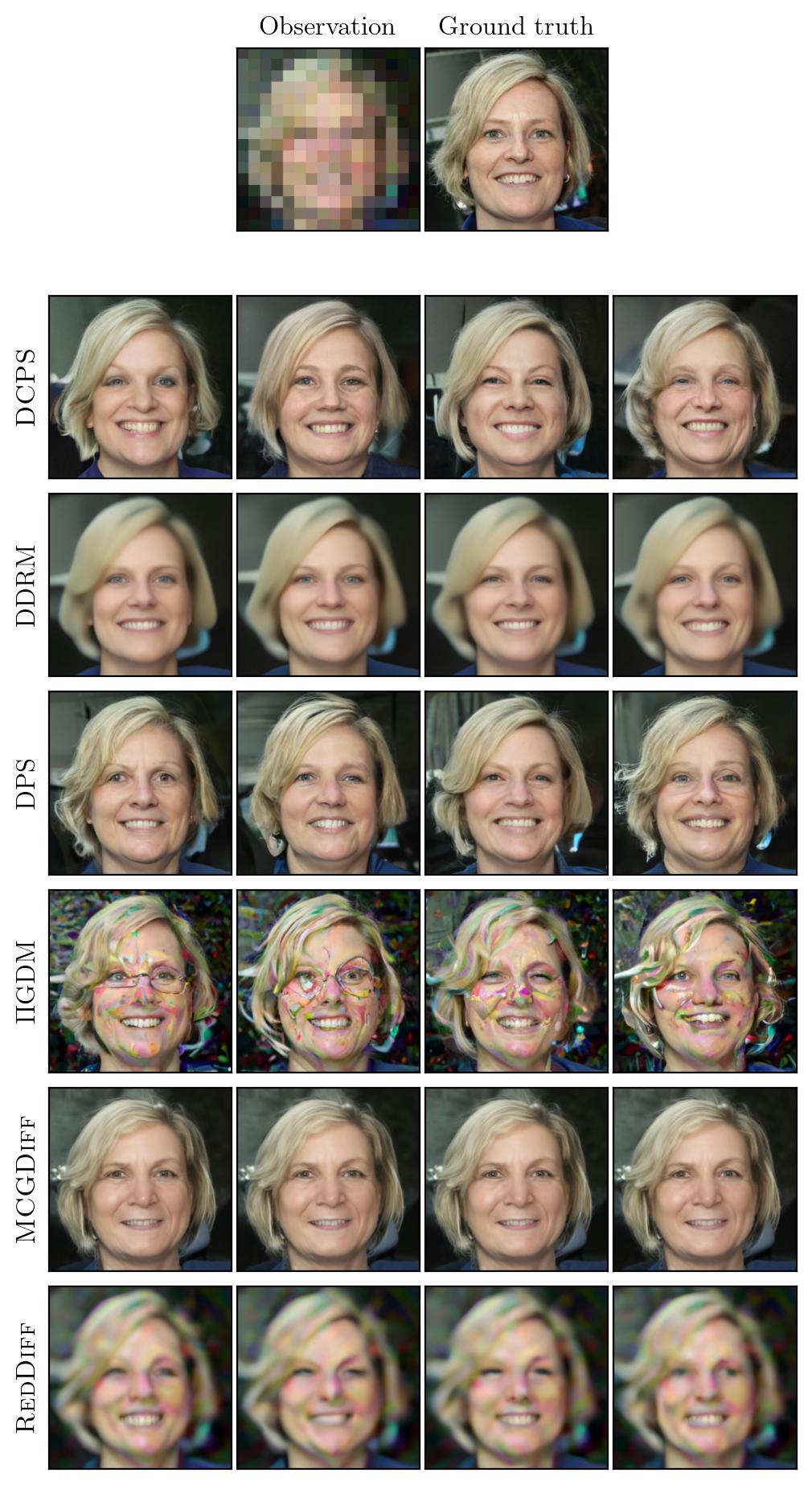}
    }
    \caption{SR $16 \times$ task on \ffhq.}
\end{figure}

\begin{figure}[htb]
    \centering
    \subfigure{
        \includegraphics[width=0.42\textwidth]{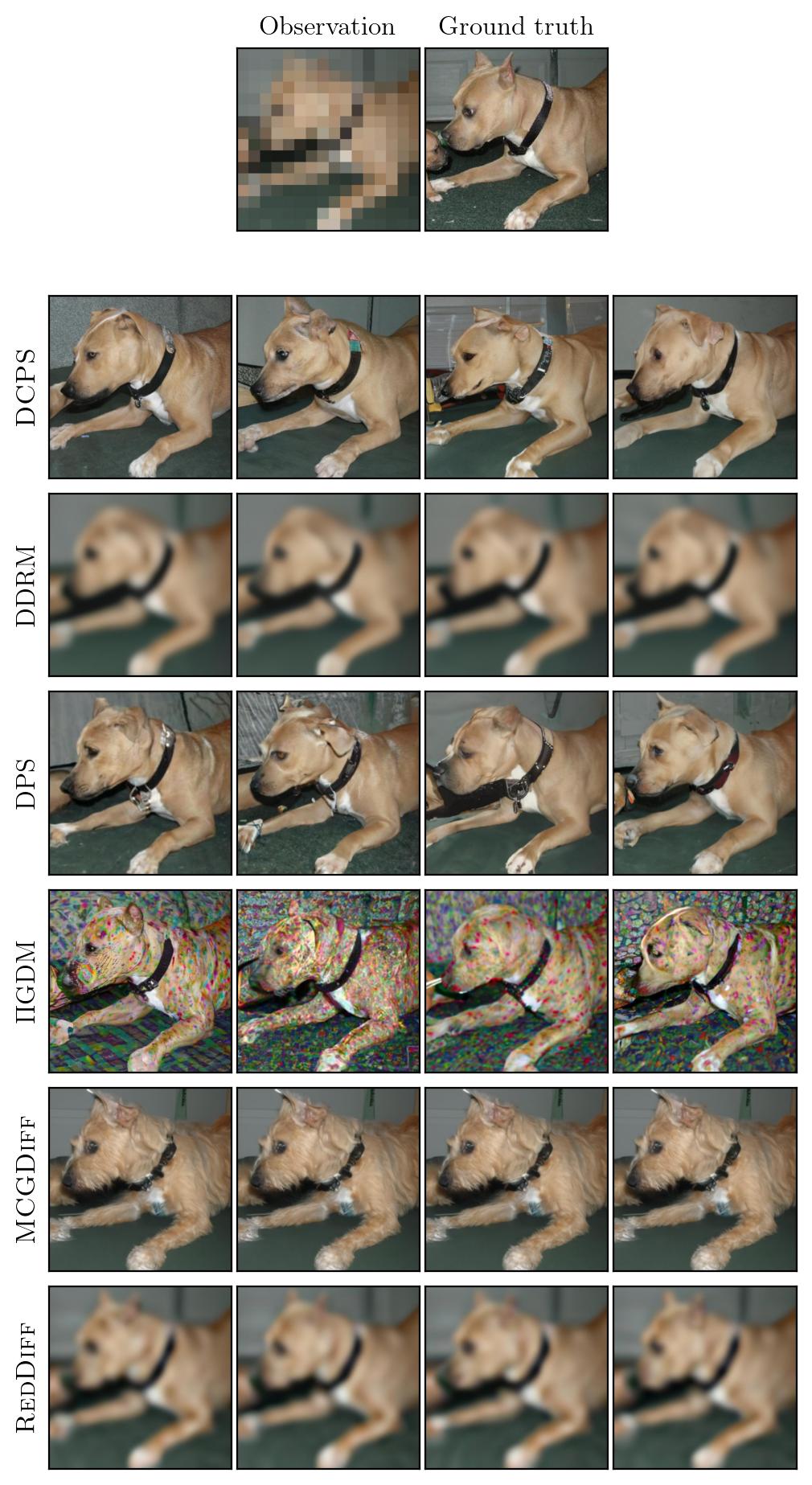}
        \includegraphics[width=0.42\textwidth]{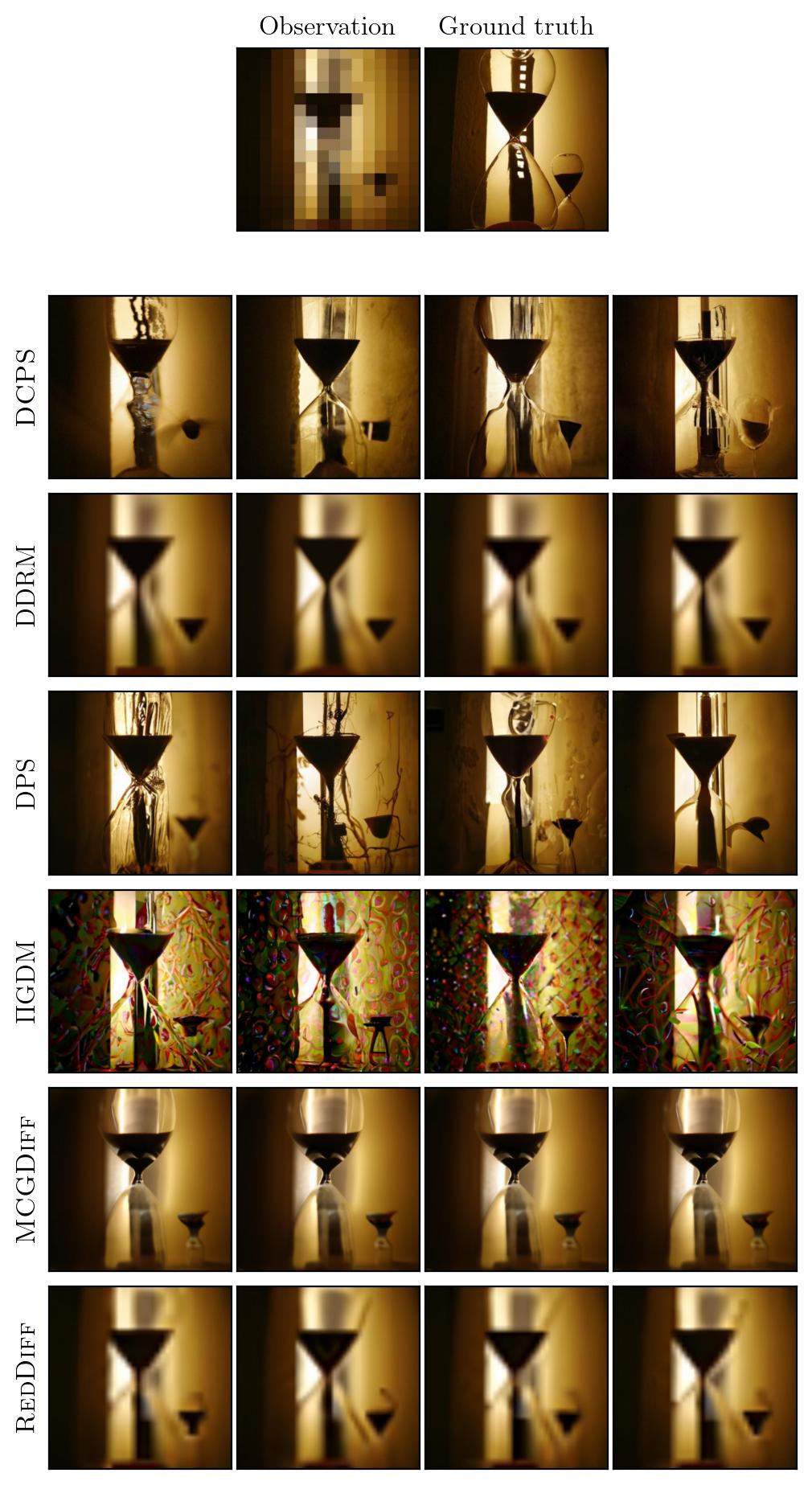}
    }
    \subfigure{
        \includegraphics[width=0.42\textwidth]{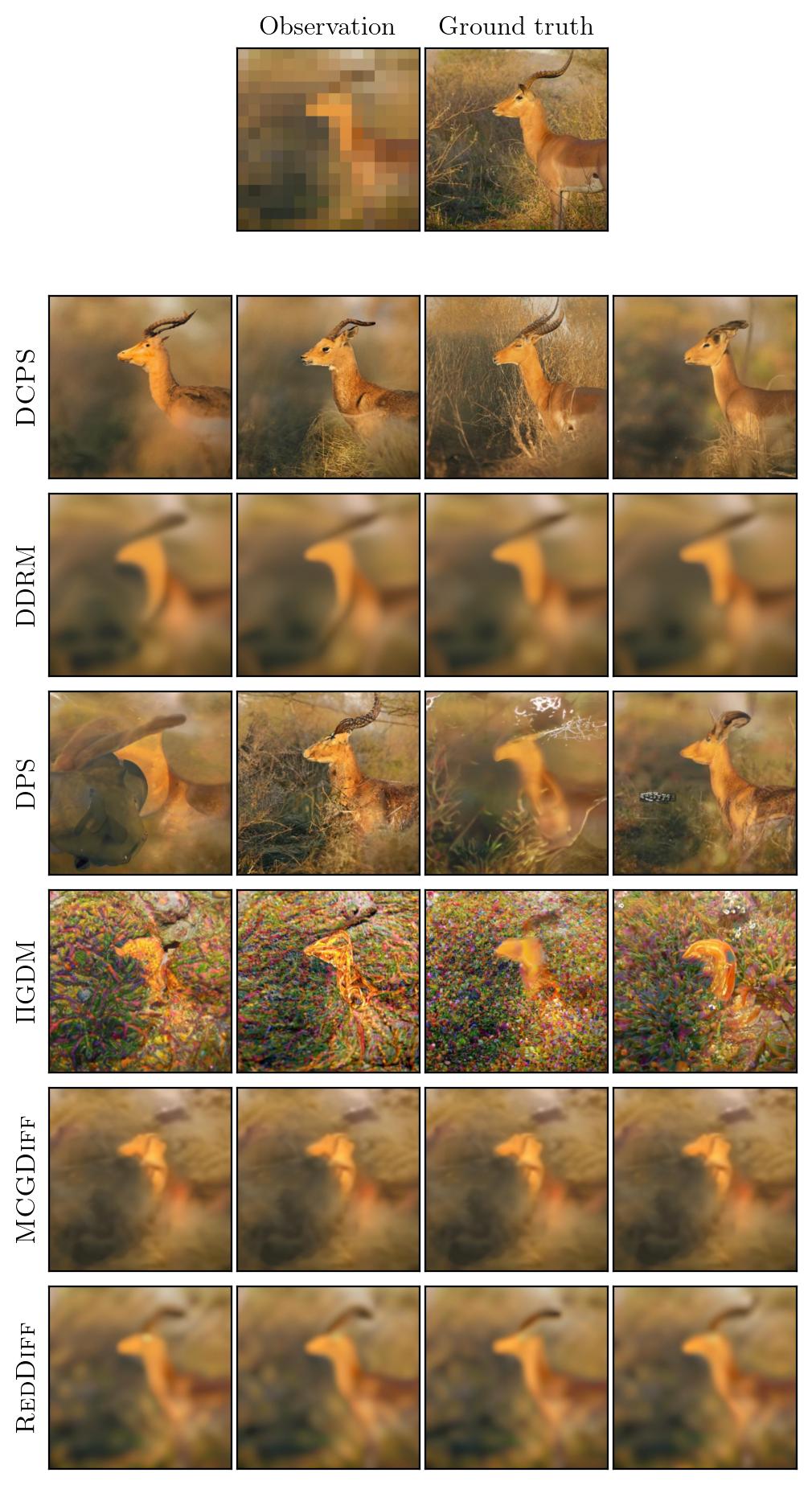}
    }
    \caption{SR $16 \times$ task on \imagenet.}
\end{figure}

\begin{figure}[htb]
    \centering
    \subfigure{
        \includegraphics[width=0.42\textwidth]{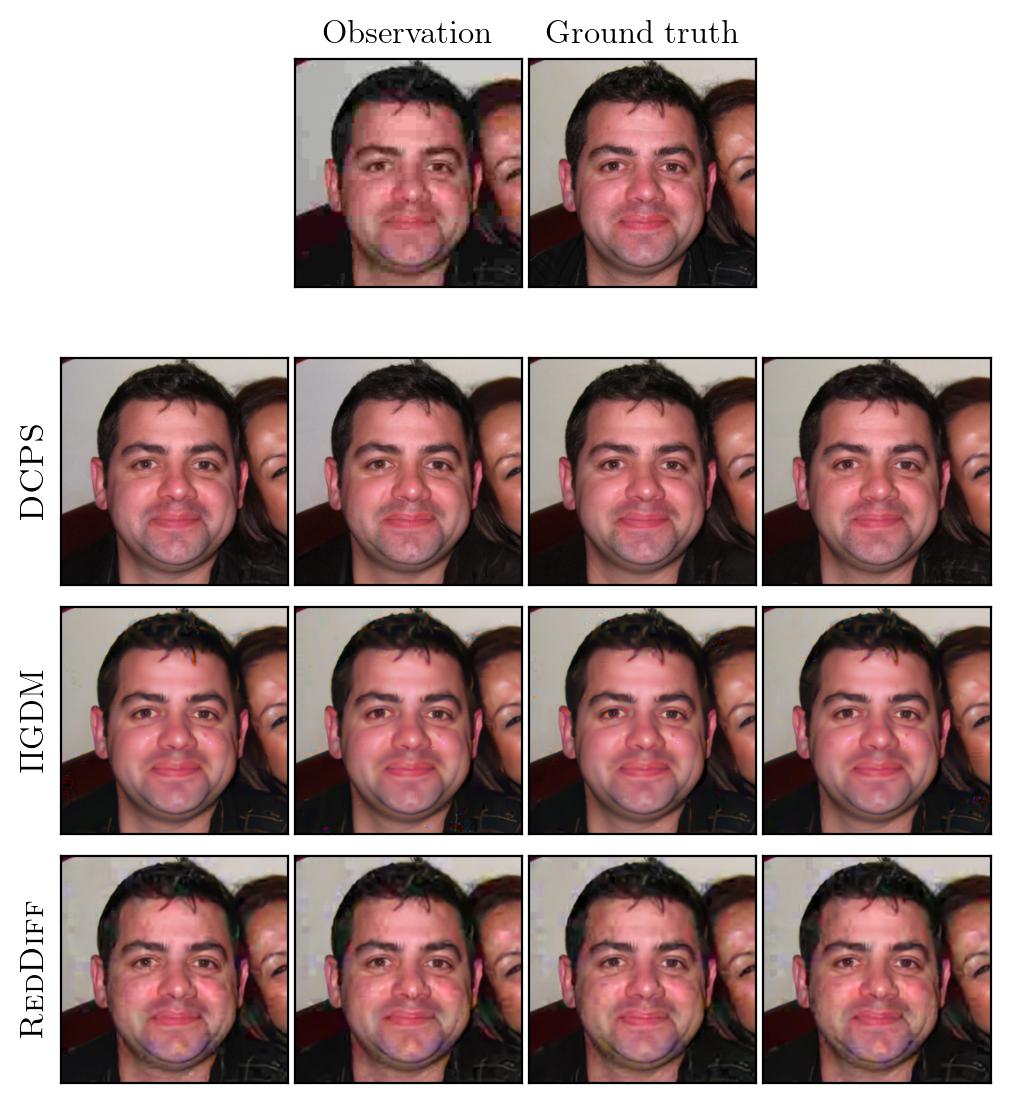}
        \includegraphics[width=0.42\textwidth]{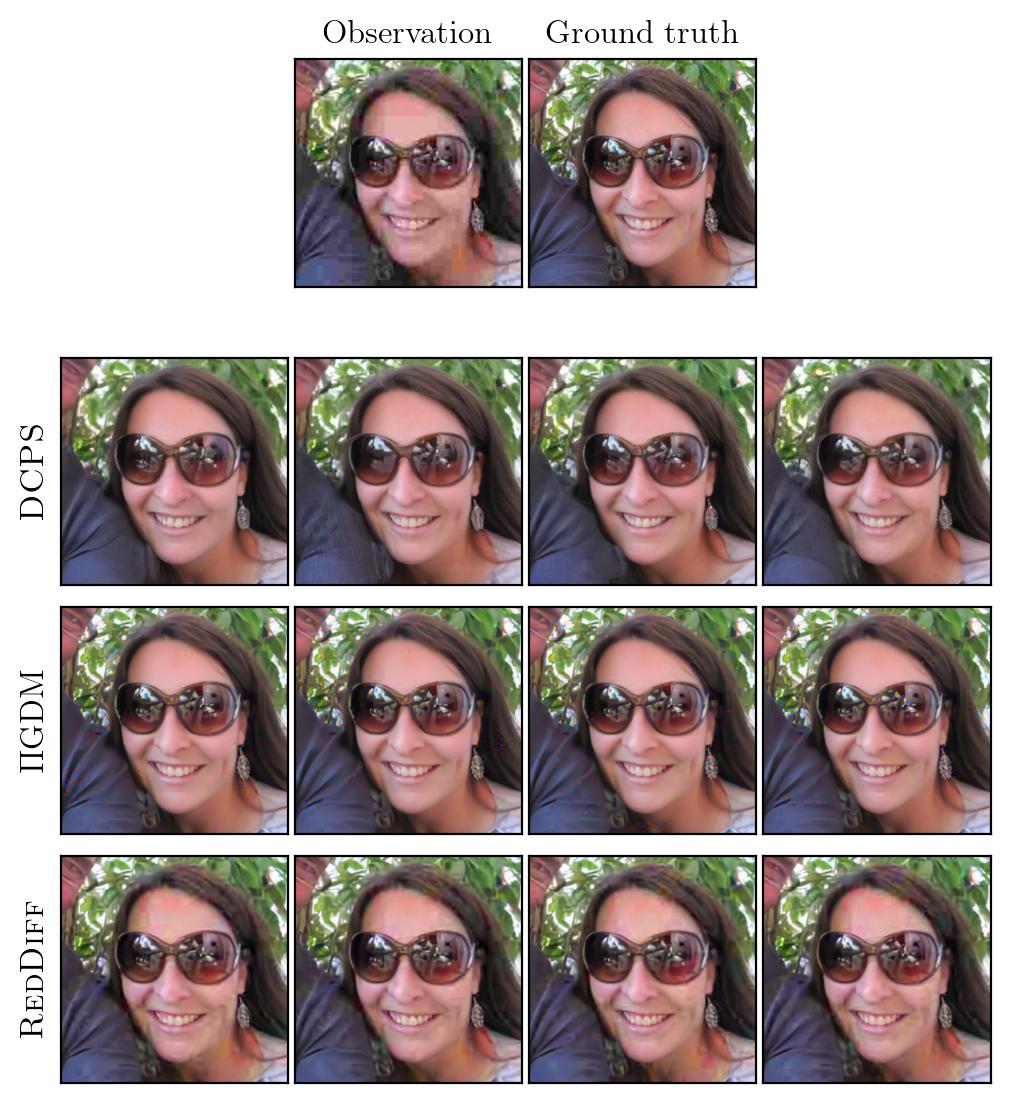}
    }
    \caption{JPEG task with QF=8 on \ffhq.}
    \subfigure{
        \includegraphics[width=0.42\textwidth]{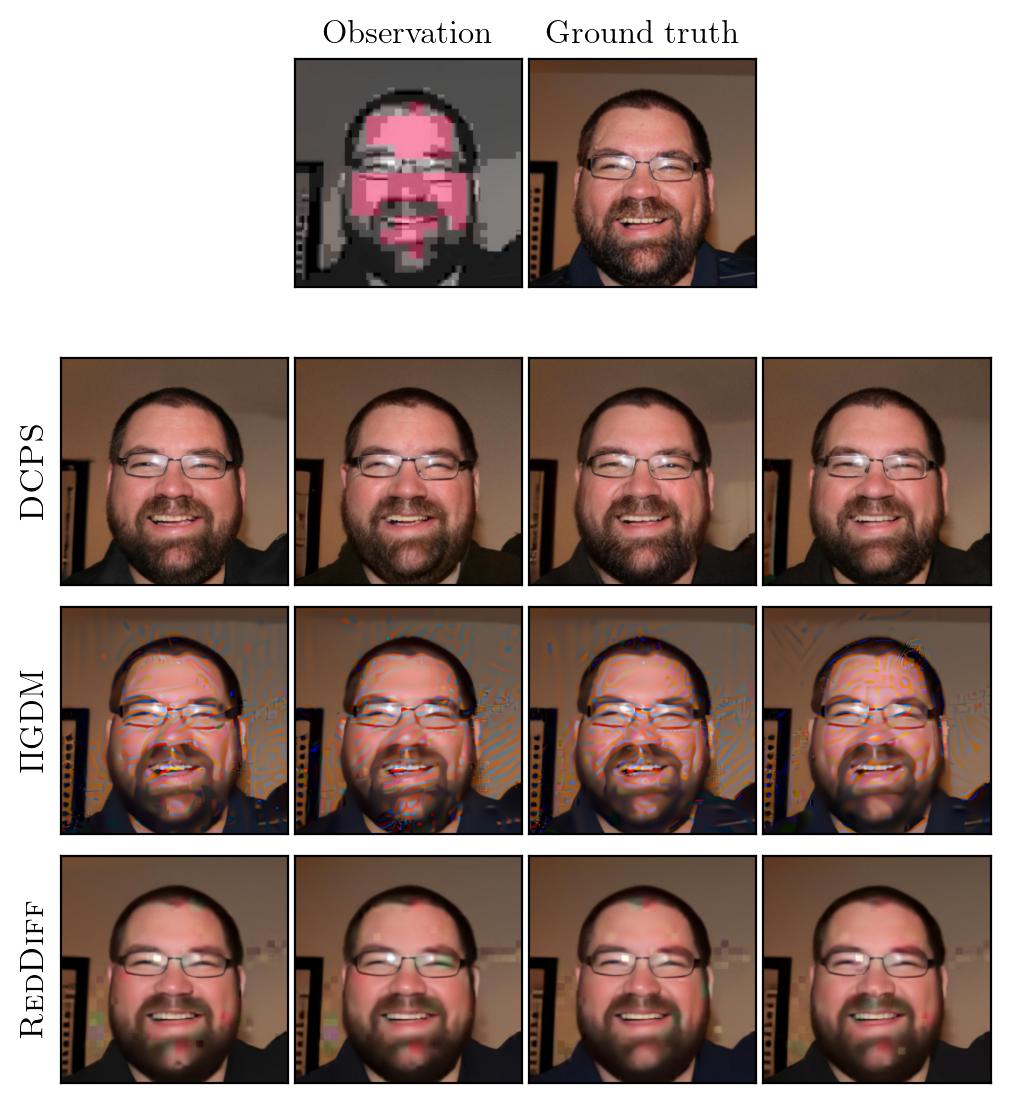}
        \includegraphics[width=0.42\textwidth]{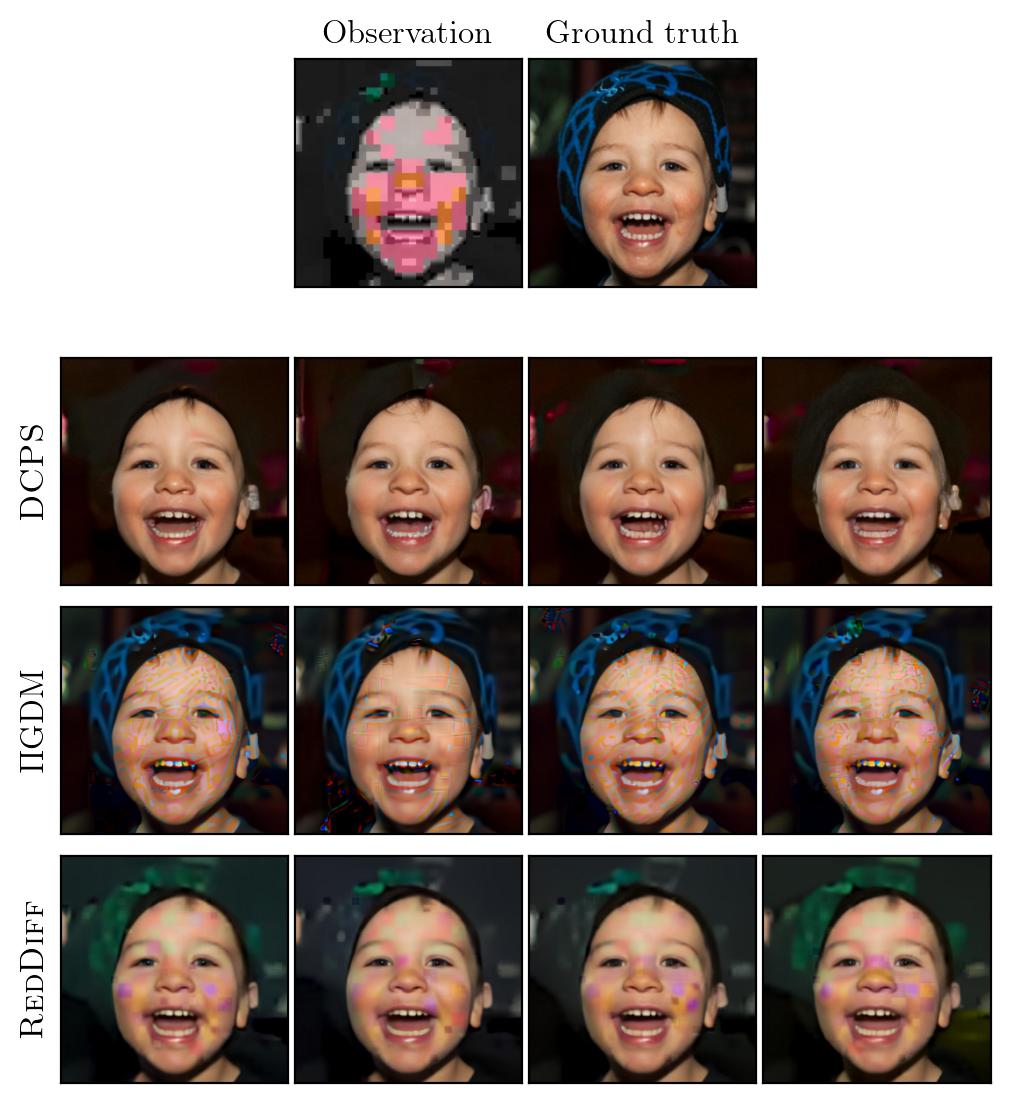}
    }
    \caption{JPEG task with QF=2 on \ffhq.}
\end{figure}

\begin{figure}[htb]
    \centering
    \subfigure{
        \includegraphics[width=0.42\textwidth]{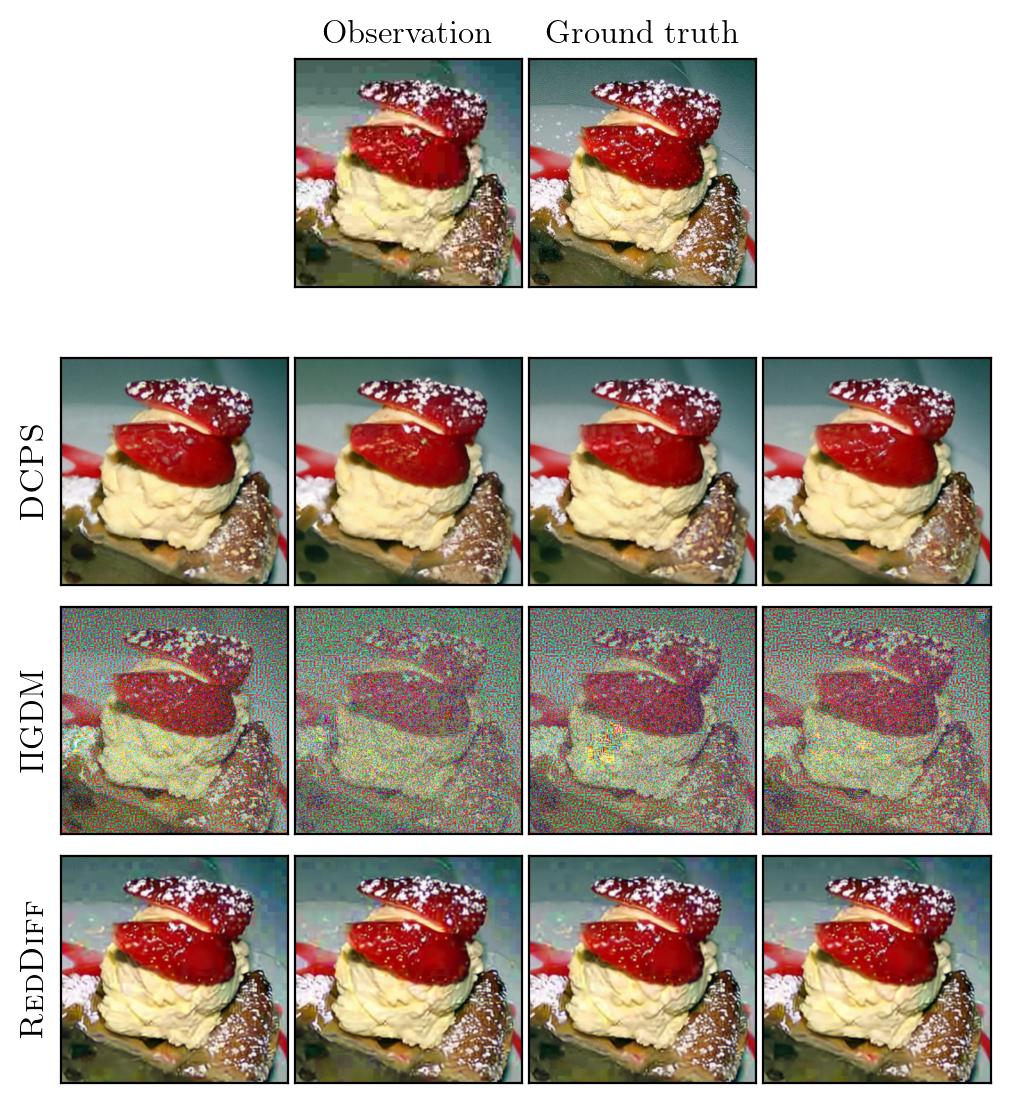}
        \includegraphics[width=0.42\textwidth]{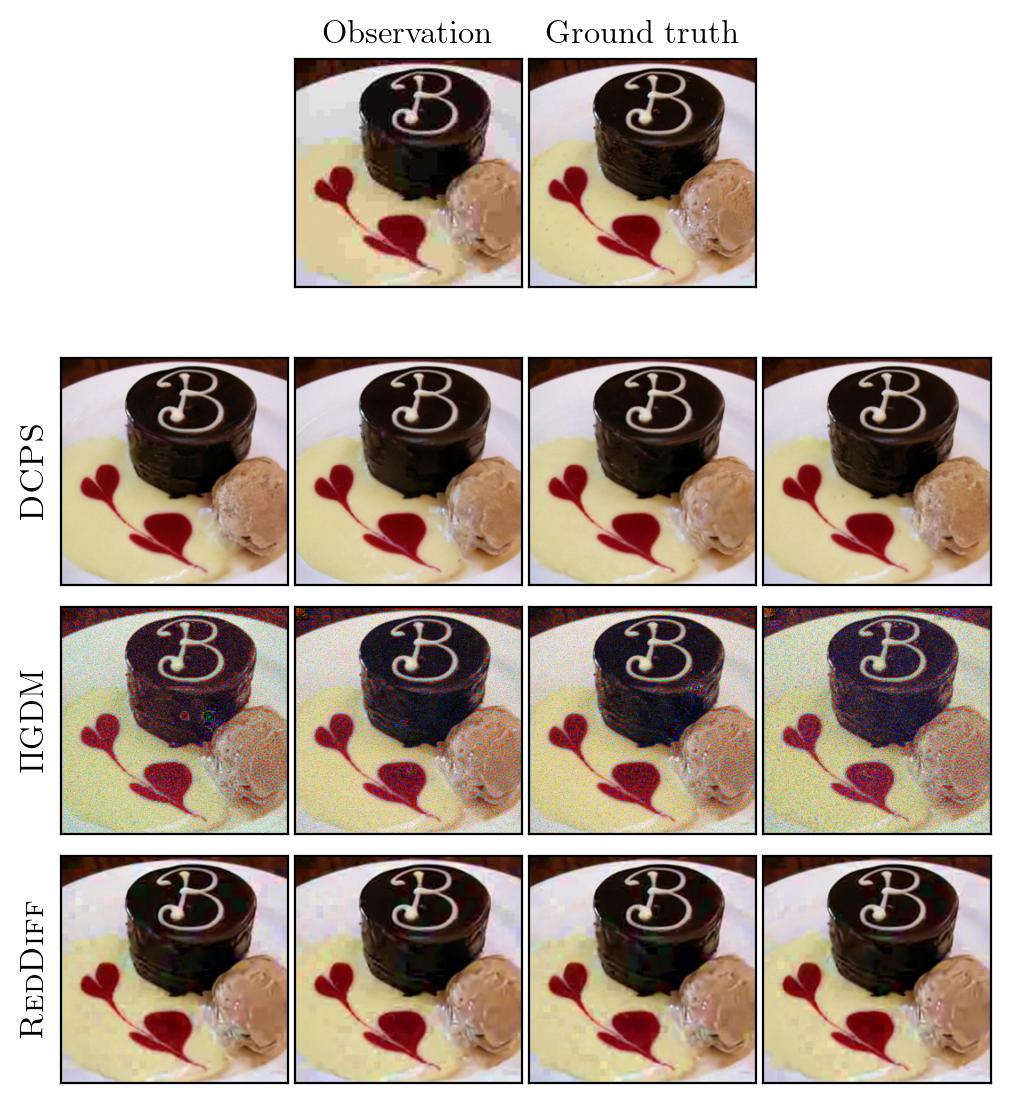}
    }
    \caption{JPEG task with QF=8 on \imagenet.}
    \subfigure{
        \includegraphics[width=0.42\textwidth]{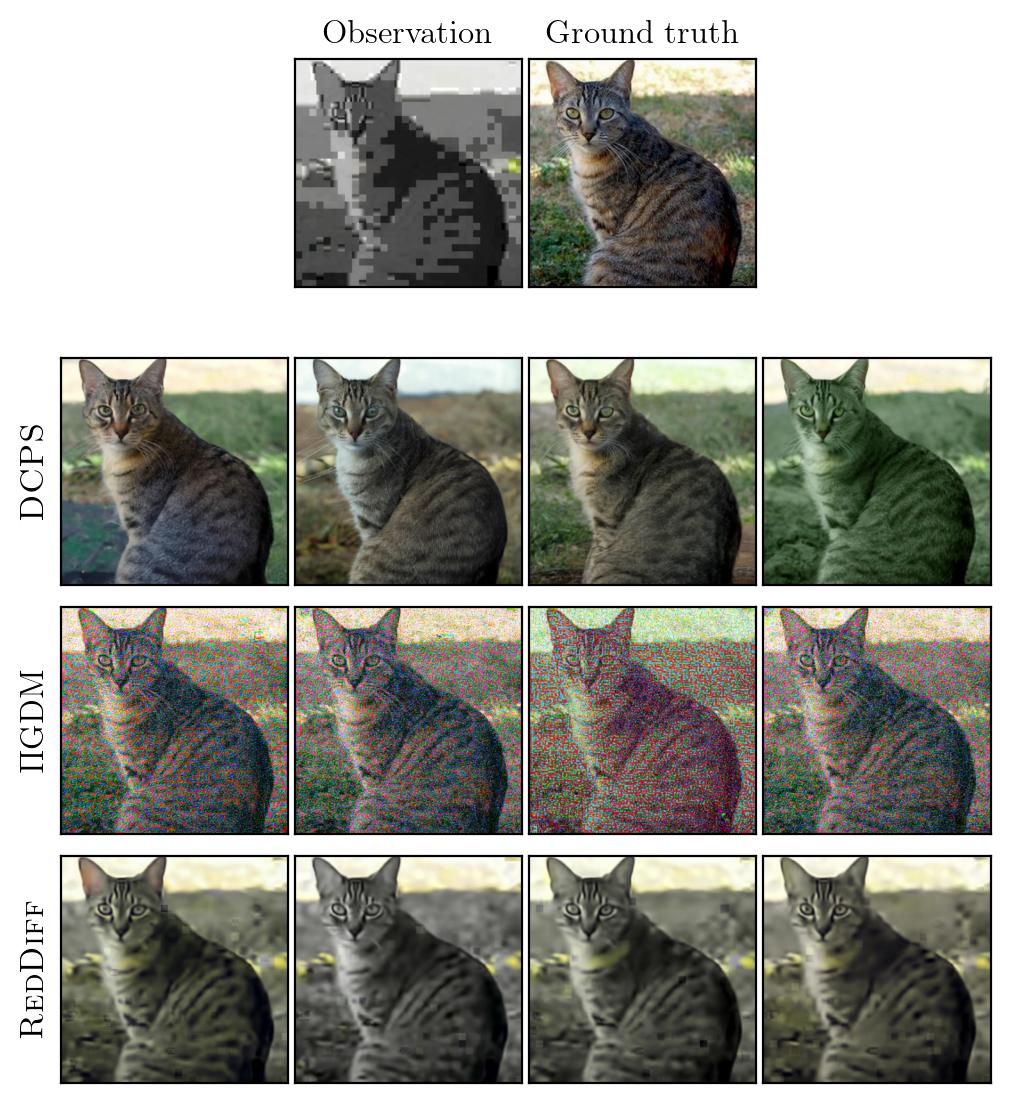}
        \includegraphics[width=0.42\textwidth]{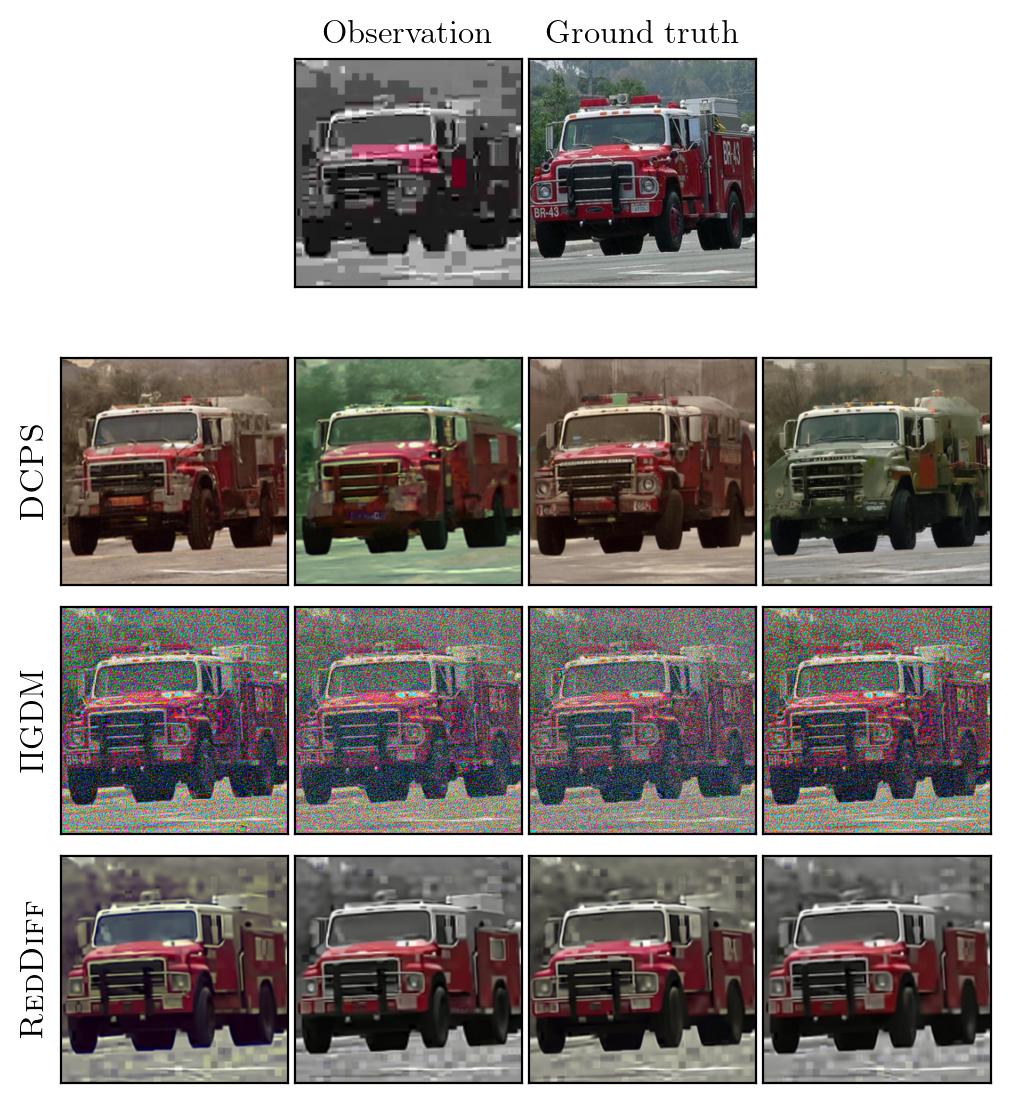}
    }
    \caption{JPEG task with QF=2 on \imagenet.}
\end{figure}

 \end{appendix}


 \end{document}